\documentclass{article}

\usepackage{microtype}
\usepackage{graphicx}
\usepackage{subcaption}
\usepackage{booktabs} 

\usepackage{hyperref}
\usepackage{makecell}
\usepackage{algorithm}
\usepackage{algorithmic}
\usepackage{placeins}

\usepackage[accepted]{icml2026}
\usepackage{amsmath}
\usepackage{amssymb}
\usepackage{mathtools}
\usepackage{amsthm}
\usepackage{array}
\usepackage{multirow}
\usepackage{adjustbox}
\usepackage{booktabs}

\usepackage{xcolor} 

\usepackage[capitalize,noabbrev]{cleveref}

\usepackage{colortbl}
\usepackage{xcolor}

\definecolor{impStrong}{RGB}{245,205,205} 
\definecolor{impMid}{RGB}{215,230,245}    
\definecolor{impNeg}{RGB}{235,235,235}    
\definecolor{SubjectGreen}{RGB}{104,193,146}

\newcommand{\bestcell}[1]{\cellcolor{impStrong}{#1}}
\newcommand{\secondcell}[1]{\cellcolor{impMid}{#1}}
\newcommand{\std}[1]{{\tiny $\pm$#1}}

\theoremstyle{plain}
\newtheorem{theorem}{Theorem}[section]

\newtheorem{corollary}[theorem]{Corollary}
\theoremstyle{definition}

\theoremstyle{remark}

\usepackage[textsize=tiny]{todonotes}

\begin{document}

\twocolumn[
\icmltitle{BioFormer: Rethinking Cross-Subject Generalization via \\
Spectral Structural Alignment in Biomedical Time-Series}

\icmltitlerunning{BioFormer: Rethinking Cross-Subject Generalization via
Spectral Structural Alignment in Biomedical Time-Series}

\icmlsetsymbol{corresponding}{*}

\begin{icmlauthorlist}
\icmlauthor{Guikang Du}{nankai_computer}
\icmlauthor{Haoran Li}{corresponding,nankai_cyber}
\icmlauthor{Xinyu Liu}{nankai_computer}
\icmlauthor{Zhibo Zhang}{nankai_computer}
\icmlauthor{Xiaoli Gong}{nankai_cyber}
\icmlauthor{Jin Zhang}{nankai_cyber}
\end{icmlauthorlist}

\icmlaffiliation{nankai_computer}{College of Computer Science, Nankai University, Tianjin, China}
\icmlaffiliation{nankai_cyber}{College of Cyber Science, Tianjin Key Laboratory of Interventional Brain-Computer Interface and Intelligent Rehabilitation, Key Lab of Data and Intelligent System Security, Frontiers Science Center for New Organic Matter, Nankai University, Tianjin, China}

\icmlcorrespondingauthor{Haoran Li}{wanch\_lhr@nankai.edu.cn}

\icmlkeywords{frequency-domain representation, biomedical signal classification, cross-subject generalization}

\vskip 0.3in
]
\printAffiliationsAndNotice{\textsuperscript{*}Corresponding author.}

\begin{abstract}

Cross-subject generalization in biomedical time-series aims to learn representations that generalize to unseen subjects while suppressing subject-specific variability.
Most existing methods implicitly suppress the variability through model building or subject adversarial learning, but rarely model it explicitly.
We introduce \textbf{spectral drift} as a new perspective to characterize subject specific variability.
Specifically, BTS signals under the same label often share consistent oscillatory structure, yet exhibit subject-dependent magnitude or phase shifts in specific frequency components, which we interpret as subject-specific variability. 
Building on this insight, we propose \textit{BioFormer}.
At its core is a Frequency-Band Alignment Module (FBAM) that generates band-wise modulation factors from the spectral distribution and adaptively adjusts amplitude and phase to align spectral structure, thereby mitigating variability.
We further pair FBAM with Sample Conditional Layer Normalization, which infers normalization parameters from intrinsic signal statistics rather than subject identity, stabilizing cross-subject representations.
Extensive experiments on six datasets demonstrate that BioFormer outperforms 12 baselines, yielding absolute F1-score improvements of 6\%.

\end{abstract}

\section{Introduction}

\begin{figure}[t]
    \centering
    \includegraphics[width=\linewidth]{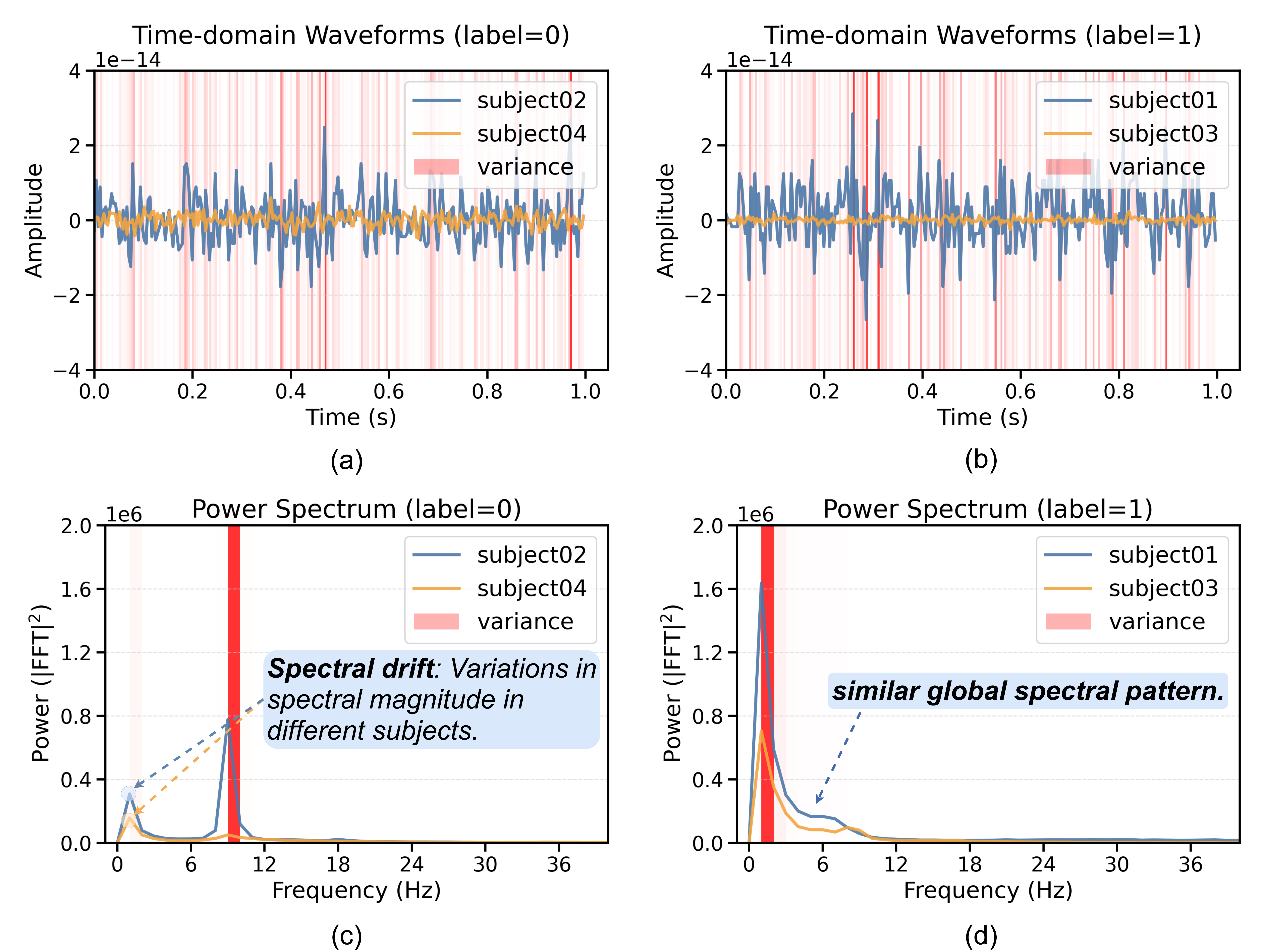}
    \caption{\textbf{Illustration of spectral drift in biomedical signal analysis.}
    While time-domain waveforms vary substantially across subjects, samples sharing the same label exhibit similar global spectral patterns; inter-subject variability is primarily manifested as band-wise magnitude and phase drift (data from the APAVA Dataset). }
    \label{fig:spec_drift}
\end{figure}

Biomedical time-series (BTS) are temporally ordered measurements of physiological signals such as electroencephalogram (EEG) and electrocardiogram (ECG)~\cite{JOVIC2025108153,cascarano2023machine,liu2023self}.
Across physiological and pathological conditions, these signals exhibit distinct and structured temporal–spectral patterns that arise from underlying biological mechanisms~\cite{mushtaq2024eeg,hannun2019cardiologist}.
This makes BTS a reliable source of evidence for downstream learning tasks such as disease screening, treatment monitoring and emotion recognition~\cite{10609510,poterucha2025detecting}.
For example, in Alzheimer's disease, disease-related brain changes can disrupt neural synchronization and functional connectivity, leading to systematic alterations in EEG rhythms, so that recordings from patients and healthy controls tend to exhibit two distinguishable physiological patterns~\cite{kopcannova2024resting,chetty2024eeg}. 
BTS provide a key data foundation for clinical analysis, and therefore learning on BTS is an important problem.

Cross-subject generalization is a central challenge for deploying BTS classification models, because training is performed on recordings from a subset of subjects while evaluation targets unseen individuals~\cite{li2022subject,zhou2021mixstyle}.
As a result, performance often drops at test time for two reasons.
First, inter-subject variability can mask class-relevant physiological patterns in raw time-series signals, making them harder to learn~\cite{wang2024dmmr,TIBKAT:1947719475}.
Even within the same class, recordings from different subjects may show large point-wise discrepancies, as illustrated in the top row of \textit{Figure~\ref{fig:spec_drift}} (Sec.~\ref{sec:ta}).
Second, models may exploit subject-specific shortcuts, overfitting to individual idiosyncrasies rather than learning task-relevant physiological patterns~\cite{yin2025rethinking,KIM2024111855}~(Sec.~\ref{sec:dg}).

Suppressing subject-specific variability is key to cross-subject generalization in BTS~\cite{9748967,APICELLA2024128354}. 
Prior work broadly follows two paradigms.
First, stronger backbones, especially Transformer-based models, increase representation capacity via multi-scale temporal modeling and cross-channel interactions, which can implicitly reduce subject-specific variability~\cite{wen2023transformers}.
Second, adversarial methods explicitly encourage subject-invariant features by training a subject discriminator with gradient reversal layer (GRL)~\cite{ganin2016domain} to penalize subject predictability while retaining label-relevant information~\cite{zhou2024rethinking}. 

However, these paradigms largely treat subject-specific variability as an implicit nuisance and rarely ask a more fundamental question: \textbf{How should subject-specific variability in BTS signals be explicitly modeled?}
Without an explicit characterization of what constitutes subject-specific variability, it remains unclear what the model should suppress, making invariance objectives difficult to control and potentially removing label-relevant structure.

To fill this gap, we introduce a new perspective for analyzing subject-specific variability through spectral structural patterns.
Our motivation stems from a basic property of physiological signals:
discriminative information is often expressed through task-relevant oscillatory bands or characteristic waveform structures rather than arbitrary feature dimensions.
As shown in the second column of \textit{Figure~\ref{fig:spec_drift}}~(c) and (d), samples sharing the same label exhibit highly similar global spectral patterns despite substantial time-domain variation across subjects.
Meanwhile, subject differences are often reflected as band-wise amplitude variation and phase shift over similar spectral structures.
For instance, the red bars in \textit{Figure~\ref{fig:spec_drift}} (c) reveal substantial energy disparities at 10 Hz across two subjects, which we attribute to individual differences.
We term this phenomenon \textbf{spectral drift}. 
Additional examples are provided in Appendix~\ref{app:motivation_ptb}.

Based on this observation, we hypothesize that cross-subject variability can be approximated as label-conditioned spectral transformations, primarily manifested as magnitude scaling and phase rotation~\cite{KIM2024111855}.
This raises a natural question:
can we align these transformations to suppress subject-specific variability while preserving task-relevant oscillatory structure?

Specifically, we introduce the Frequency-Band Alignment Module (FBAM), which models subject variability as structured spectral drift and performs adaptive frequency-band alignment in the Fourier domain.
FBAM extracts compact band-wise statistics to characterize oscillatory structures and uses cross-attention between spectral descriptors and learnable band tokens to derive task-aware modulation coefficients for amplitude and phase adjustment.
This design selectively enhances discriminative task-relevant frequency patterns while suppressing subject-dependent spectral variations.
To further stabilize cross-subject representations, we propose Sample-Conditional Layer Normalization (SCLN), which adaptively calibrates sample-level feature statistics to mitigate residual distribution shifts without subject supervision.
By integrating FBAM and SCLN into a Transformer-based backbone, we propose BioFormer for cross-subject biomedical time-series modeling.
Experiments on six biomedical time-series benchmarks show that BioFormer consistently outperforms 12 strong baselines and multiple domain generalization methods, achieving up to 6\% absolute F1-score improvement.
In summary, the major contributions are as follows:

(1) We identify \emph{spectral drift} as an important form of inter-subject variability and reformulate cross-subject generalization as a \emph{frequency-domain structural alignment} problem, which we quantitatively validate to support the utility of this perspective.


(2) We propose BioFormer, a hybrid spectral–temporal Transformer that integrates FBAM with sample-adaptive normalization, and is explicitly designed to model and mitigate cross-subject variability in biomedical time-series data.

(3) We conduct comprehensive evaluations on 6 BTS benchmarks.
BioFormer achieves rank-1 performance under cross-subject protocols, outperforming 12 baselines and 6 domain generalization methods.
Code is available at \url{https://github.com/NKU-EmbeddedSystem/BioFormer}.

\section{Related Work}

\textbf{Transformers for Biomedical Time Series.}
Transformer models have been widely adopted for BTS analysis due to their strong ability to capture long-range dependencies~\cite{wen2023transformers}.
In BTS applications, such models serve as powerful feature extractors that implicitly suppress subject-specific variability~\cite{roy2019eeg,craik2019eeg}.
Existing designs mainly emphasize multi-scale temporal modeling and cross-channel interactions in the time or latent feature space~\cite{PatchTST2023,Medformer2024,softshape2025}, while cross-subject distribution drift is not explicitly modeled.
BioFormer departs from this paradigm by incorporating explicit spectral structure into the Transformer encoder, enabling direct and interpretable modeling of subject-dependent variability.

\textbf{Domain Generalization for Cross-Subject Task.}
Cross-subject generalization can be viewed as a special case of domain generalization (DG), where each subject corresponds to a domain.
Existing DG approaches typically improve robustness through adversarial invariance learning, feature-statistics alignment, or distribution perturbation~\cite{ganin2016domain,long2015learning,sun2016deep,dsu2022}.
Recent BTS-specific DG methods further extend these strategies using subject-aware alignment and domain-invariant representation learning~\cite{wang2024dmmr,sedaeeg2024,CADAN}.
However, most existing methods primarily focus on enforcing alignment in the time domain or latent feature space, often suppressing subject discrepancy through generic feature matching objectives.
In contrast, BioFormer explicitly models subject variability itself as structured spectral drift in the frequency domain.
Rather than forcing feature-level invariance, BioFormer adaptively models and calibrates band-wise spectral variation through frequency-aware modulation, enabling more effective preservation of task-relevant structures while reducing subject-dependent distortion.
Additional comparisons between BioFormer and representative DG methods are provided in Appendix~\ref{app:dg_comparison}.

\textbf{Frequency-Domain Modeling.}
Frequency-domain analysis is fundamental to BTS research due to the inherently oscillatory nature of neural and cardiac signals~\cite{PASCUCCI2025106288}.
Accordingly, prior work has exploited spectral representations for feature extraction, noise suppression, and multi-view learning~\cite{FEDformer2022,ZHOU2026132295}.
However, in most existing approaches, frequency information is treated in a largely static manner or utilized through global or heuristic transformations such as spectral normalization or adaptive filtering~\cite{li2025so}.
In practice, cross-subject differences often appear as structured, band-wise deviations in both magnitude and phase~\cite{donoghue2020neural}, which are largely decoupled from task semantics.
We therefore treat frequency-domain structure as a first-class alignment target, enabling explicit, band-wise correction of subject-dependent spectral drift and directly supporting cross-subject generalization.

\section{Problem Setup and Formulation}
\label{sec:problem}

We study cross-subject classification for biomedical time-series data.
Let $\mathcal{D}=\{(X_i, y_i, s_i)\}_{i=1}^N$ denote a dataset, where $X_i \in \mathbb{R}^{T \times C}$ is a multivariate time-series segment of length $T$ with $C$ channels, $y_i \in \mathcal{Y}$ is the corresponding class label, and $s_i \in \mathcal{S}$ denotes the subject identity.
In the cross-subject setting, subjects are partitioned into disjoint training and test sets, denoted by $\mathcal{S}_{\mathrm{train}}$ and $\mathcal{S}_{\mathrm{test}}$, respectively, with $\mathcal{S}_{\mathrm{train}} \cap \mathcal{S}_{\mathrm{test}} = \emptyset$, as shown in \textit{Figure~\ref{fig:Cross-subject}}.
A model $f_\theta$ is trained using samples from subjects in $\mathcal{S}_{\mathrm{train}}$ and evaluated on samples from previously unseen subjects in $\mathcal{S}_{\mathrm{test}}$.

\section{Methodology}

\begin{figure}[t]
    \centering
    \includegraphics[width=\linewidth]{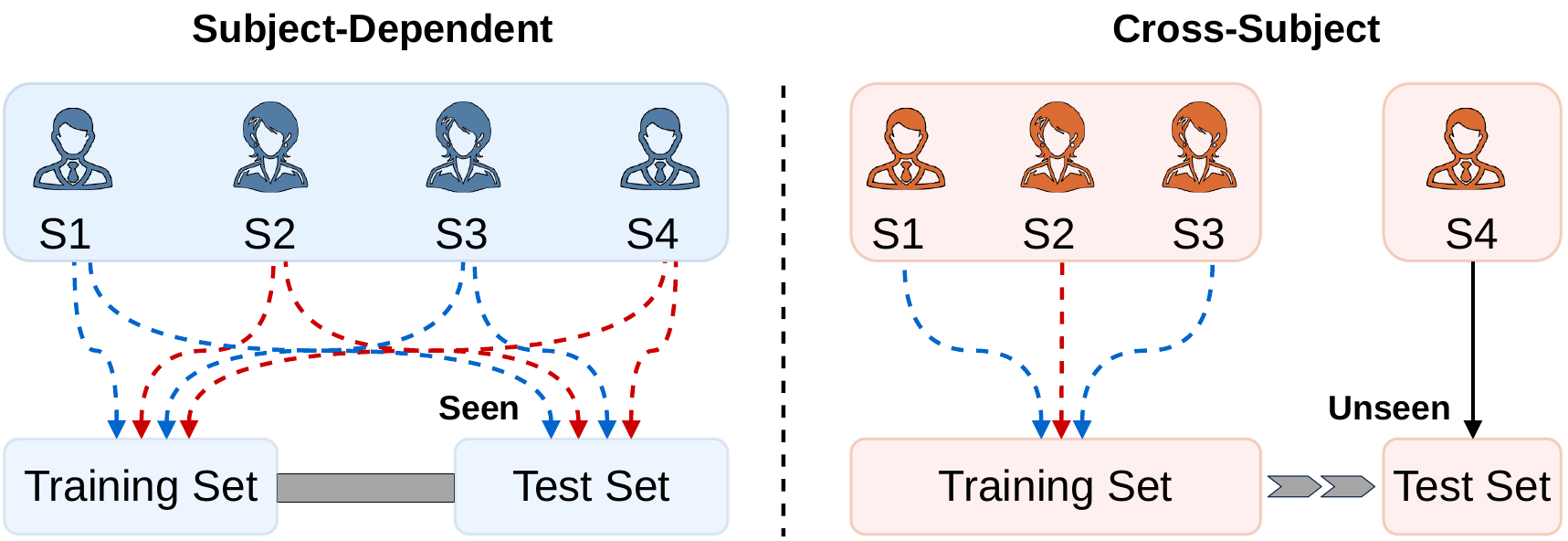}
    \caption{\textbf{Comparison between subject-dependent and cross-subject evaluation settings.}
    Left: subject-dependent evaluation, where samples from the same subjects may appear in both training and validation/test splits.
    Right: cross-subject evaluation, where training, validation, and test sets are partitioned by subject identity, and models are evaluated on completely unseen subjects without subject overlap across splits.}
    \label{fig:Cross-subject}
\end{figure}

\begin{figure*}[t]
    \centering
    \includegraphics[width=\textwidth]{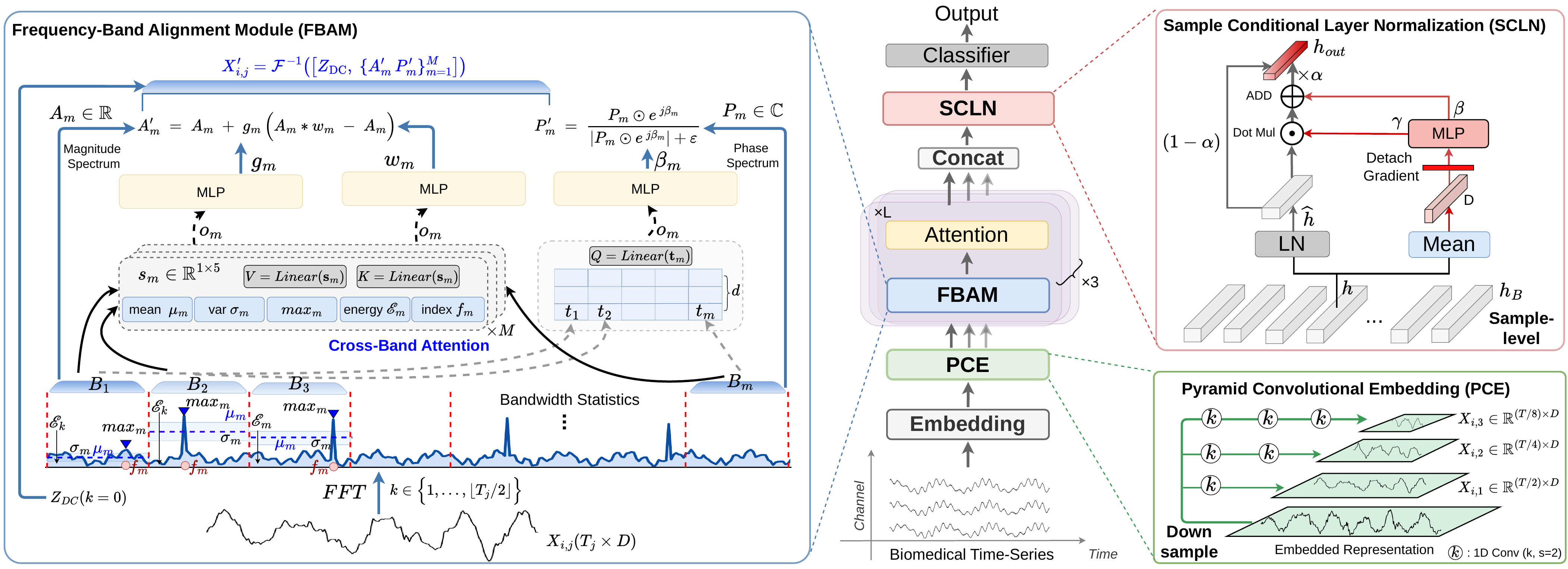}
    \caption{
    \textbf{Overall architecture of BioFormer.}
    The model consists of four main components: (1) a Pyramid Convolutional Embedding (PCE) module, (2) a Frequency-Band Alignment Module (FBAM), (3) a hybrid Transformer encoder that alternates FBAM and temporal self-attention layers, and (4) a Sample Conditional Layer Normalization (SCLN) module followed by a classification head.
    }
    \label{fig:bioformer_model}
\end{figure*}

\subsection{Overview}

The overall architecture of BioFormer is shown in \textit{Figure~\ref{fig:bioformer_model}}.
It comprises four sequential stages for cross-subject representation learning in biomedical time series.
Given an input segment $X_i \in \mathbb{R}^{T \times C}$, BioFormer first applies an embedding layer, followed by Pyramid Convolutional Embedding (PCE) to generate multi-scale features $\{X_{i,j}\}$ with $X_{i,j} \in \mathbb{R}^{T_j \times D}$, where $T_j \in \{\frac{T}{2}, \frac{T}{4}, \frac{T}{8}\}$ via progressive convolutional downsampling (Details in Appendix~\ref{app:pce}).
The resulting multi-resolution features are fed in parallel into a 6-layer Hybrid Transformer Encoder, where each layer alternates between temporal self-attention and the Frequency-Band Alignment Module (FBAM) (Section~\ref{sec:fbam}).
The three parallel encoder outputs are concatenated to form $h$, which is then computed by Sample Conditional Layer Normalization (SCLN) to obtain ${h_{out}}$
(Section~\ref{sec:scln}).
Finally, the resulting embedding is passed to a classifier based on multilayer perceptron (MLP) to produce task predictions.

\subsection{Frequency-Band Alignment Module}
\label{sec:fbam}

The Frequency-Band Alignment Module (FBAM) is the core component of BioFormer, 
designed to explicitly model and align subject-specific variability via band-wise dynamic modulation rather than individual frequency component. This design is motivated by two considerations: physiological time series exhibit semantically meaningful band structure (e.g., canonical EEG rhythms), and band-level aggregation improves robustness to local noise and minor frequency shifts.

Given a temporal feature representation $X_{i,j} \in \mathbb{R}^{T_j \times D}$,
FBAM applies a one-dimensional discrete Fourier transform (DFT) along the temporal axis for each feature channel:
\begin{equation}
\label{eq:fft}
Z_{i,j,k,d} = \mathcal{F}_r\!\big(X_{i,j}[:, d]\big)_k \in \mathbb{C},
\quad k = 0, \ldots, \left\lfloor \frac{T_j}{2}\right\rfloor,
\end{equation}
where $\mathcal{F}$ denotes the FFT operator, $k$ indexes the discrete frequency component,
$d$ indexes the feature channel, and $Z_{i,j,k,d}$ is the complex Fourier coefficient.

Notably, the direct-current (DC) component ($k=0$) mainly reflects the global signal baseline and offset.
In FBAM, the DC component is kept fixed, and its magnitude and phase are added back only after modulating the non-DC frequency components.
This is because the DC term primarily captures global amplitude bias rather than discriminative temporal dynamics.
Allowing it to be adaptively modulated may cause the model to overfit subject- or dataset-specific baseline differences, thereby introducing nuisance bias into the spectral alignment process.

In biomedical time-series, task semantics are closely associated with structured oscillatory patterns in the spectral domain~\cite{polich2007updating,donoghue2020neural}.
Rather than independently modulating each frequency component, effective spectral alignment requires perceiving the band-level organization of physiological rhythms. 
FBAM addresses this by extracting statistical descriptors from predefined frequency bands, which serve as compact indicators of band-level physiological structure.
These statistics enable the model to capture task-sensitive spectral patterns while mitigating subject-specific nuisance variations, thereby supporting structure-aware spectral modulation and improving cross-subject alignment.
Specifically, the mean and standard deviation describe the spectral distribution, the peak magnitude and band energy characterize dominant oscillatory responses, and the dominant-frequency index captures the frequency location of the strongest response within the band, reflecting which oscillatory frequency is most activated for the current sample.

Given the non-DC complex spectrum, we partition it into $M$ frequency bands $\{B_m\}_{m=1}^{M}$.
For each band $B_m$, we compute a descriptor $\mathbf{s}_m \in \mathbb{R}^{1\times5}$:
\begin{equation}
\mathbf{s}_m =
\big[
\log \mu_m,\;
\log \sigma_m,\;
\log max_m,\;
\log \mathcal{E}_m,\;
f_m
\big],
\end{equation}
where $\mu_m$ and $\sigma_m$ denote the mean and standard deviation of the magnitude spectrum, $max_m$ is the peak magnitude, $\mathcal{E}_m$ is the band energy, and $f_m$ is the dominant-frequency index within $B_m$.
These statistics provide a low-dimensional yet informative summary for cross-subject comparison.
Additional analysis and ablation studies on the choice of band-level statistics are provided in Appendix~\ref{app:ablation_fbam}.

FBAM maintains $M$ learnable band token embeddings $\{\mathbf{t}_m\}_{m=1}^{M}$, learned end-to-end.
We project $\mathbf{s}_m$ and $\mathbf{t}_m$ to a shared latent space and fuse them via cross-attention:
\vspace{-4pt}
\begin{equation}
\mathbf{o}_m
= \mathrm{CrossAttn}\!\left(\phi_o(\mathbf{t}_m),\,\phi_s(\mathbf{s}_m)\right)
\in \mathbb{R}^{1\times d},
\end{equation}

where $\phi_s$ and $\phi_o$ are linear projections.
Finally, $\mathbf{o}_m$ is mapped to band-specific modulation parameters through  three MLP heads,
\begin{equation}
w_m = \mathrm{MLP}_w(o_m),
g_m = \mathrm{MLP}_g(o_m),
{\beta}_m = \mathrm{MLP}_p(o_m),
\end{equation}

where $w_m$, $g_m$, and $\beta_m$ are modulation parameters produced from three independent MLP heads.
$w_m$ and $g_m$ are used in Eq.~\ref{eq:magnitude} for magnitude modulation, whereas $\beta_m$ is used exclusively in Eq.~\ref{eq:phase} for phase modulation.
Accordingly, $w_m$ acts as a dynamic convolution kernel for structure-aware band filtering, $g_m$ controls amplitude gain, and $\beta_m$ parameterizes phase rotation.

FBAM performs spectral alignment by explicitly modulating both magnitude and phase in each frequency band using the modulation factors.
Importantly, FBAM does not rely on any explicit subject-variability objective or subject identity supervision.
Instead, under task-level label supervision, the module learns to implicitly suppress subject-specific spectral variability by adaptively modulating band-wise magnitude and phase, thereby emphasizing label-relevant oscillatory structure while avoiding overly strong distribution alignment.
A key design choice is to make alignment smooth and stable. Instead of directly overwriting the spectrum, we adopt a residual formulation so the module applies only the necessary correction, preserving label-relevant spectral structure whenever possible. This makes the modulation a controlled, band-wise refinement rather than an unconstrained transformation.

Following Eq.~\ref{eq:fft}, we decompose the complex spectrum into amplitude and phase via the polar form:
\begin{equation}
Z_{i,j,k,d} = A_{i,j,k,d}\, e^{j \phi_{i,j,k,d}},
\end{equation}
where $A_{i,j,k,d} = |Z_{i,j,k,d}| \ge 0$ is the amplitude and $\phi_{i,j,k,d} = \arg(Z_{i,j,k,d}) \in (-\pi,\pi]$ is the phase of the $k$-th frequency component.

\textbf{Magnitude alignment.}
Given the modulation factors $(w_m, g_m)$ for band $B_m$, FBAM adjusts the magnitude spectrum in a residual manner:
\begin{equation}
\label{eq:magnitude}
A'_m\;=\; A_m \;+\; g_m \,\Big( A_m * w_m \;-\; A_m \Big),
\end{equation}
where $A_m = |X[m]|$, $*$ denotes one-dimensional convolution over the frequency axis.
This residual update enables adaptive smoothing and redistribution of within-band energy while avoiding overly large perturbations.

\textbf{Phase alignment.}
FBAM applies phase alignment on the unit-complex representation using the predicted phase coefficient $\beta_m$:

\begin{equation}
\label{eq:phase}
P'_m
\;=\;
\frac{
P_m \odot e^{\,j\beta_m}
}{
\left| P_m \odot e^{\,j\beta_m} \right| + \varepsilon
},
\end{equation}

where $P_m = e^{j\phi[m]}$, $\odot$ is element-wise multiplication and $\varepsilon$ is a small constant for numerical stability.
This operation implements a band-wise rotation in the complex plane, aligning phase drift while preserving band-wise phase representation.

Finally, the aligned magnitude and phase spectra are recombined with the preserved DC component and transformed back to the temporal domain via inverse Fourier transform $\mathcal{F}^{-1}$:
\begin{equation}
X_{i,j}' = \mathcal{F}^{-1}\!\left(
\left[
Z_{\mathrm{DC}},\;
\{ A_m' \, P_m'\}_{m=1}^{M}
\right]
\right),
\end{equation}

The resulting representation $X_{i,j}'$ aligns oscillatory structure while suppressing subject-specific variability. The FBAM procedure is provided in Appendix~\ref{app:alg_fbam_forward}.

\begin{figure}[t]
    \centering
    \includegraphics[width=\linewidth]{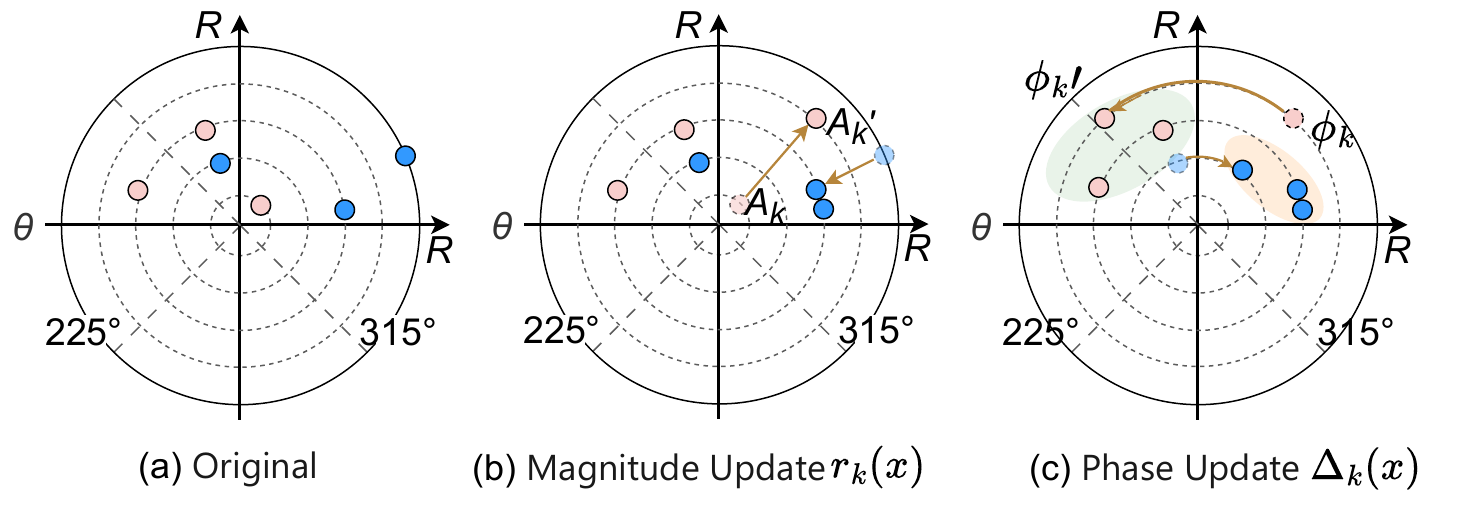}
    \caption{\textbf{Polar-view illustration of FBAM modulation in a Fourier subspace.}
    (a) The original coefficient in polar form. 
    (b) Magnitude scaling $A_k' = r_k(x)A_k$ (radial change). 
    (c) Phase rotation $\phi_k'=\phi_k+\Delta_k(x)$ (angular change).
    Together, $(r_k(x),\Delta_k(x))$ implement the scaling-and-rotation alignment within the $\{\cos,\sin\}$ subspace.}
    \label{fig:fbam_polar_view}
\end{figure}

\textbf{Interpretable Alignment in the Fourier subspaces.}
A real-valued signal decomposes into independent 2D Fourier subspaces spanned by
$\{\cos(2\pi kt/T),\,\sin(2\pi kt/T)\}$.
For each frequency bin $k$, the complex coefficient $z_k=A_ke^{j\phi_k}$ is in one-to-one correspondence with a 2D real vector $\mathbf{a}_k=[a_k^{(c)},\,a_k^{(s)}]^\top$.
As shown in \textit{Figure~\ref{fig:fbam_polar_view}}, FBAM predicts a sample-conditioned complex modulator
$u_k(x)$ for each bin $k$, parameterized as $u_k(x)=r_k(x)e^{j\Delta_k(x)}$,
where $r_k(x)\ge 0$ controls magnitude scaling and $\Delta_k(x)$ induces phase rotation.
A modulation $z_k' = u_k(x)\,z_k$ induces an exact \emph{scaling + rotation} in that subspace:
\begin{equation}
\begin{aligned}
\mathbf{a}_k'(x)
&= r_k(x)\,\mathbf{R}(\Delta_k(x))\,\mathbf{a}_k(x),\\
\mathbf{R}(\Delta)
&=
\begin{bmatrix}
\cos \Delta & -\sin \Delta\\
\sin \Delta & \cos \Delta
\end{bmatrix}.
\end{aligned}
\label{eq:rot_scale_main}
\end{equation}
Therefore, learning $(r_k(x),\Delta_k(x))$ provides an interpretable alignment mechanism within each Fourier
subspace, rather than relying on static frequency-domain normalization (Appendix~\ref{app:fourier_alignment}).
In FBAM, we aim to suppress subject-induced variability within each class while preserving inter-class separation.
Under label supervision, the model can achieve this objective by adaptively modulating amplitude and phase in each Fourier subspace, effectively performing the scaling-and-rotation alignment illustrated in \textit{Figure~\ref{fig:fbam_polar_view}}(b) and (c).
Further discussion is provided in Sec.~\ref{sec:ssaa} and Appendix~\ref{app:fbd}.

\subsection{Sample Conditional Layer Normalization}
\label{sec:scln}

While FBAM explicitly reduces subject-dependent spectral variation through band-wise alignment, overly strong alignment may also suppress informative sample-specific variability by driving same-label samples toward overly homogeneous representations.
To mitigate this effect, we introduce Sample Conditional Layer Normalization (SCLN), which complements FBAM by performing sample-adaptive calibration at the encoder output.
SCLN therefore introduces a sample-adaptive residual normalization mechanism that preserves discriminative sample-level characteristics while stabilizing representation statistics across subjects.

Given an intermediate representation $\mathbf{h} \in \mathbb{R}^{T \times D}$, SCLN first applies standard layer normalization (LN), $\hat{\mathbf{h}} = \mathrm{LN}(\mathbf{h})$.
Sample-adaptive scale and shift parameters are obtained by temporally averaging the input, blocking gradient propagation, and applying a lightweight MLP:

\begin{equation}
[\boldsymbol{\gamma}, \boldsymbol{\beta}]
=
\mathrm{MLP}\!\left(
\mathrm{stopgrad}\!\left(
\mathrm{Mean}_{t}(\mathbf{h})
\right)
\right),
\end{equation}
where $\boldsymbol{\gamma}, \boldsymbol{\beta} \in \mathbb{R}^{D}$ are broadcast along
the temporal dimension.
The $\mathrm{stopgrad}(\cdot)$ operator prevents the normalization branch from directly reshaping the encoder representation during backpropagation, thereby stabilizing optimization and avoiding trivial feature adaptation.

Finally, adaptive normalization is applied via residual scale--shift blending:
\begin{equation}
\mathbf{h}_{\text{out}} = (1-\alpha)\,\hat{\mathbf{h}} + \alpha \left(\gamma \odot \hat{\mathbf{h}} + \beta\right),
\end{equation}
where $\alpha$ is a fixed hyperparameter controlling the overall strength of sample-adaptive modulation (The discussion of $\alpha$ in Appendix~\ref{app:alpha_sensitivity}).
This residual formulation allows SCLN to calibrate representation statistics without enforcing strict subject-invariant normalization.
Consequently, BioFormer can suppress subject-dependent distributional drift while still preserving task-relevant sample diversity, which is particularly important under cross-subject evaluation.

\section{Experiments}
\textbf{Datasets} We evaluate BioFormer under a strict \emph{cross-subject} protocol on six biomedical time-series benchmarks spanning both EEG and ECG modalities.
These datasets cover two common labeling paradigms introduced in Sec.~\ref{sec:problem}:
(i) \emph{subject-level labeling}, where all samples from a subject share the same class label (APAVA~\cite{apava}, ADFTD~\cite{adftd}, PTB~\cite{ptb}, PTB-XL~\cite{ptbxl}),
and
(ii) \emph{sample-level labeling}, where labels are assigned independently to each sample (BCI-2a~\cite{bci2a} and BCI-2b~\cite{bci2b}).
Table~\ref{tab:datasets_main} summarizes the dataset scale, task settings, and signal modalities.

\begin{table}[t]
\centering
\scriptsize
\renewcommand{\arraystretch}{1.15}
\caption{
\textbf{Summary of datasets and task settings.}
``S-K'' denotes a $K$-class \emph{subject-level} classification setting, where all samples from the same subject share one label.
``M-K'' denotes a $K$-class \emph{sample-level} classification setting, where each subject may contribute samples from multiple classes.
}
\label{tab:datasets_main}
\resizebox{\linewidth}{!}{
\begin{tabular}{lccccc}
\toprule
\textbf{Dataset} &
\textbf{Subjects} &
\textbf{Samples} &
\textbf{Task} &
\textbf{Channels} &
\textbf{Modality} \\
\midrule
APAVA    & 23     & 5,967    & S-2 & 16 & EEG \\
ADFTD    & 88     & 69,752   & S-3 & 19 & EEG \\
PTB      & 198    & 64,356   & S-2 & 15 & ECG \\
PTB-XL   & 17,596 & 191,400  & S-5 & 12 & ECG \\
\midrule
BCI-2a   & 9      & 5,184    & M-4 & 3 & EEG \\
BCI-2b   & 9      & 6,520    & M-2 & 3 & EEG \\
\bottomrule
\end{tabular}
}
\end{table}

\begin{table*}[t]
\centering
\scriptsize
\setlength{\tabcolsep}{2.1pt}
\caption{
Cross-subject generalization performance of different models across six datasets (complete results are reported in Appendix~\ref{app:main_results}). EEG-2 indicates the 2 classification task on EEG data.
\textcolor{red}{Best} is highlighted with \bestcell{red} background; \textcolor{blue}{second best} with \secondcell{blue}.
Avg. Rank (A.Rank) is computed over 36 metrics (6 datasets $\times$ \{Acc, Prec, Rec, F1, AUROC (AUR.), AUPRC\}), lower is better.
Win counts the number of datasets where a method achieves the best average score over the 6 metrics.
$^\dagger$ denotes results reproduced in our experimental environment, while all other baseline results are adopted from Medformer~\cite{Medformer2024}.
}
\label{tab:main_result}
\begin{adjustbox}{max width=\textwidth}
\begin{tabular}{l ccc ccc ccc ccc ccc ccc cc}
\toprule
\textbf{Models} &
\multicolumn{3}{c}{\textbf{APAVA} (EEG-2)} &
\multicolumn{3}{c}{\textbf{ADFTD} (EEG-3)} &
\multicolumn{3}{c}{\textbf{PTB} (ECG-2)}&
\multicolumn{3}{c}{\textbf{PTB-XL} (ECG-5) } &
\multicolumn{3}{c}{\textbf{BCI-2a} (EEG-4) } &
\multicolumn{3}{c}{\textbf{BCI-2b} (EEG-2) } &
\textbf{A.Rank}$\downarrow$ &
\textbf{Win}$\uparrow$ \\
\cmidrule(lr){2-4}\cmidrule(lr){5-7}\cmidrule(lr){8-10}\cmidrule(lr){11-13}\cmidrule(lr){14-16}\cmidrule(lr){17-19}
& Acc & F1 & AUR.
& Acc & F1 & AUR.
& Acc & F1 & AUR.
& Acc & F1 & AUR.
& Acc & F1 & AUR.
& Acc & F1 & AUR.
&  &  \\
\midrule

Transformer [NeurIPS'17] &
76.30 & 73.75 & 72.50 &
50.47 & 48.09 & 67.93 &
77.37 & 68.47 & 90.08 &
70.59 & 59.05 & 88.21 &
41.68 & 41.17 & 68.88 &
70.86 & 70.83 & 77.82 &
7.29 & 0 \\

Reformer [ICLR'20] &
78.70 & 75.93 & 73.94 &
50.78 & 47.94 & 69.17 &
77.96 & 69.65 & 91.13 &
71.72 & 60.69 & 88.80 &
\secondcell{42.69} & \secondcell{42.52} & \secondcell{69.64} &
70.90 & 70.82 & 78.34 &
5.25 & 0 \\

Informer [AAAI'21] &
73.11 & 69.47 & 70.46 &
48.45 & 45.74 & 65.87 &
78.69 & 70.84 & 92.09 &
71.43 & 60.44 & 88.65 &
42.52 & 42.24 & 69.20 &
70.57 & 70.54 & 77.86 &
7.42 & 0 \\

Autoformer [NeurIPS'21] &
68.64 & 68.06 & 75.94 &
45.25 & 42.59 & 61.02 &
73.35 & 63.69 & 78.54 &
61.68 & 48.85 & 82.04 &
27.43 & 27.05 & 53.04 &
58.00 & 57.95 & 61.06 &
12.25 & 0 \\

FEDformer [ICML'22] &
74.94 & 73.51 & 83.72 &
46.30 & 43.91 & 62.62 &
76.05 & 67.14 & 85.93 &
57.20 & 47.89 & 82.13 &
29.06 & 28.72 & 54.58 &
60.99 & 60.82 & 66.19 &
10.67 & 0 \\

Nonformer [NeurIPS'22] &
71.89 & 69.74 & 70.55 &
49.95 & 46.96 & 66.23 &
78.66 & 70.90 & 89.37 &
70.56 & 59.10 & 88.32 &
42.67 & 42.51 & 69.31 &
70.71 & 70.60 & 78.79 &
7.12 & 0 \\

PatchTST [ICLR'23] &
67.03 & 55.97 & 65.65 &
44.37 & 41.97 & 60.08 &
74.74 & 64.36 & 88.79 &
73.23 & \secondcell{62.61} & 89.74 &
40.64 & 40.13 & 67.63 &
\bestcell{73.26} & \bestcell{73.17} & \bestcell{81.69} &
8.17 & \secondcell{1} \\

Crossformer [ICLR'23] &
73.77 & 68.93 & 72.39 &
50.45 & 45.50 & 66.45 &
80.17 & 72.75 & 88.55 &
73.30 & 62.59 & 90.02 &
42.48 & 41.61 & 69.22 &
70.58 & 70.39 & 77.54 &
6.47 & 0 \\

iTransformer [ICLR'24] &
74.55 & 72.30 & \secondcell{85.59} &
52.60 & 46.79 & 67.26 &
\secondcell{83.89} & 79.06 & 91.18 &
69.28 & 56.20 & 86.71 &
33.58 & 33.23 & 59.57 &
65.94 & 65.84 & 72.17 &
7.94 & 0 \\

MTST [PMLR'24] &
71.14 & 64.01 & 68.87 &
45.60 & 44.31 & 62.50 &
76.59 & 67.38 & 86.86 &
72.14 & 61.43 & 88.97 &
39.31 & 39.06 & 66.61 &
71.53 & 71.50 & 79.48 &
8.33 & 0 \\

Medformer [NeurIPS'24] &
78.74 & 76.31 & 83.20 &
53.27 & \secondcell{50.65} & \secondcell{70.93} &
83.50 & \secondcell{79.18} & 92.81 &
72.87 & 62.02 & 89.66 &
42.40 & 42.29 & 69.09 &
69.40 & 69.30 & 76.03 &
4.94 & 0 \\

SoftShape$^\dagger$ [ICML'25] &
\secondcell{81.22} & \secondcell{78.71} & 82.24 &
\secondcell{54.60} & 48.91 & 70.08 &
83.72 & 78.42 & \secondcell{94.38} &
\bestcell{73.94} & \bestcell{62.97} & \bestcell{90.59} &
40.87 & 38.69 & 68.57 &
71.28 & 71.16 & 78.48 &
\secondcell{3.61} & \secondcell{1} \\

\textbf{BioFormer$^\dagger$ (Ours)} &
\bestcell{82.31} & \bestcell{79.77} & \bestcell{88.52} &
\bestcell{56.73} & \bestcell{53.82} & \bestcell{74.74} &
\bestcell{87.73} & \bestcell{84.71} & \bestcell{94.71} &
\secondcell{73.37} & 62.56 & \secondcell{90.37} &
\bestcell{46.68} & \bestcell{46.35} & \bestcell{72.49} &
\secondcell{72.83} & \secondcell{72.81} & \secondcell{80.38} &
\bestcell{1.47} & \bestcell{4} \\

\bottomrule
\end{tabular}
\end{adjustbox}
\end{table*}

\textbf{Baselines.} We compare BioFormer with Transformer-based time-series models, including the Vanilla Transformer~\cite{Transformer2017} and recent variants such as Reformer~\cite{Reformer2020}, Informer~\cite{Informer2021}, Autoformer~\cite{Autoformer2021}, FEDformer~\cite{FEDformer2022}, Nonformer~\cite{Nonformer2022}, Crossformer~\cite{Crossformer2022}, PatchTST~\cite{PatchTST2023}, iTransformer~\cite{iTransformer2023}, MTST~\cite{MTST2023}.
We also include two strong baselines: Medformer~\cite{Medformer2024}, a SOTA medical Transformer, and SoftShape~\cite{softshape2025}, a SOTA non-Transformer model.

\textbf{Implementation Details} All models are implemented in PyTorch.
Models are trained using Adam with a learning rate of $1\times10^{-4}$ for up to 100 epochs on NVIDIA RTX~3090 GPUs, with early stopping if the validation F1-score does not improve for 10 consecutive epochs.
Performance is measured using Accuracy, Recall, F1-score, AUROC, and AUPRC, averaged over five random seeds (41--45).
In our implementation, the number of frequency bands $M$ is set to 6.
Details on datasets, baselines, and implementation are provided in Appendix~\ref{app:experimental_setup}.

\subsection{Cross-Subject Generalization Performance}

\begin{table*}[!t]
\centering
\scriptsize
\caption{
Comparison of cross-subject generalization methods on three datasets.
Accuracy and F1-score (\%) are reported.
Cell background color indicates the performance change relative to the base model without adaptation (None):
\textcolor{red}{red} denotes strong improvement ($\Delta > 3$),
\textcolor{blue}{blue} denotes non-degrading or mild improvement (($0 \le \Delta \le3$)), and \textcolor{black!75}{gray} denotes degradation ($\Delta < 0$).
The best result within each base model is highlighted in \textbf{bold}.
The last two columns report stability metrics across all 16 settings:
All baseline results in this table are reproduced under our unified experimental environment.
Pos. Rate is the percentage of positive gains ($\Delta>0$), and Worst $\Delta$ is the minimum change.
}
\label{tab:cross_subject_methods}
\begin{adjustbox}{max width=\textwidth}
\begin{tabular}{l|cc|cc|cc|cc|cc|cc|cc|cc|cc}
\toprule
\multirow{2}{*}{\textbf{Method}} &
\multicolumn{6}{c|}{\textbf{APAVA (EEG-2Class)}} &
\multicolumn{6}{c|}{\textbf{ADFTD (EEG-3Class)}} &
\multicolumn{4}{c|}{\textbf{PTB-XL (ECG-5Class)}} &
\multicolumn{2}{c}{\textbf{Stability}} \\
\cmidrule(lr){2-17}\cmidrule(lr){18-19}
& \multicolumn{2}{c}{Trans.} & \multicolumn{2}{c}{Medf.} & \multicolumn{2}{c|}{EEGNet}
& \multicolumn{2}{c}{Trans.} & \multicolumn{2}{c}{Medf.} & \multicolumn{2}{c|}{EEGNet}
& \multicolumn{2}{c}{Trans.} & \multicolumn{2}{c|}{Medf.}
& Pos. Rate (\%) & Worst $\Delta$ \\
\cmidrule(lr){2-19}
& Acc & F1 & Acc & F1 & Acc & F1
& Acc & F1 & Acc & F1 & Acc & F1
& Acc & F1 & Acc & F1
& & \\
\midrule
None
& 73.12 & 70.53 & 78.32 & 76.02 & 68.89 & 66.39
& 51.34 & 47.97 & 51.89 & 42.17 & 49.48 & 45.67
& 70.18 & 59.01 & 73.01 & 62.09
& -- & -- \\

+SubjNorm
& \cellcolor{impNeg}47.11 & \cellcolor{impNeg}46.73
& \cellcolor{impNeg}53.17 & \cellcolor{impNeg}52.65
& \cellcolor{impNeg}66.57 & \cellcolor{impNeg}65.16
& \cellcolor{impNeg}48.10 & \cellcolor{impNeg}44.42
& \cellcolor{impMid}52.13 & \cellcolor{impNeg}38.06
& \cellcolor{impNeg}47.64 & \cellcolor{impNeg}43.25
& \cellcolor{impMid}70.79 & \cellcolor{impMid}59.32
& \cellcolor{impNeg}72.49 & \cellcolor{impNeg}61.36
& 18.8 & -26.01 \\

+MMD
& \cellcolor{impMid}75.82 & \cellcolor{impMid}71.44
& \cellcolor{impNeg}75.14 & \cellcolor{impNeg}72.44
& \cellcolor{impMid}71.88 & \cellcolor{impMid}67.95
& \cellcolor{impNeg}44.29 & \cellcolor{impNeg}20.46
& \cellcolor{impNeg}40.07 & \cellcolor{impNeg}24.16
& \cellcolor{impNeg}48.05 & \cellcolor{impNeg}36.95
& \cellcolor{impNeg}63.39 & \cellcolor{impNeg}46.26
& \cellcolor{impMid}\textbf{73.06} & \cellcolor{impNeg}57.29
& 31.2 & -27.51 \\

+CORAL
& \cellcolor{impStrong}78.57 & \cellcolor{impStrong}75.70
& \cellcolor{impNeg}76.03 & \cellcolor{impNeg}73.74
& \cellcolor{impMid}69.13 & \cellcolor{impNeg}66.26
& \cellcolor{impNeg}50.70 & \cellcolor{impNeg}47.08
& \cellcolor{impNeg}49.80 & \cellcolor{impNeg}40.39
& \cellcolor{impNeg}49.41 & \cellcolor{impNeg}41.36
& \cellcolor{impMid}70.83 & \cellcolor{impNeg}58.67
& \cellcolor{impMid}73.02 & \cellcolor{impMid}\textbf{62.11}
& 37.5 & -4.31 \\

+GRL
& \cellcolor{impStrong}79.16 & \cellcolor{impStrong}76.21
& \cellcolor{impNeg}75.04 & \cellcolor{impNeg}71.93
& \cellcolor{impMid}70.78 & \cellcolor{impMid}67.22
& \cellcolor{impNeg}50.42 & \cellcolor{impNeg}47.73
& \cellcolor{impMid}\textbf{53.78} & \cellcolor{impNeg}40.67
& \cellcolor{impNeg}48.97 & \cellcolor{impNeg}42.22
& \cellcolor{impMid}71.10 & \cellcolor{impMid}59.20
& \cellcolor{impNeg}72.22 & \cellcolor{impNeg}61.80
& 43.8 & -4.09 \\

+MixStyle
& \cellcolor{impMid}73.91 & \cellcolor{impMid}71.43
& \cellcolor{impNeg}76.39 & \cellcolor{impNeg}73.94
& \cellcolor{impNeg}62.87 & \cellcolor{impNeg}61.72
& \cellcolor{impNeg}50.36 & \cellcolor{impNeg}47.05
& \cellcolor{impNeg}51.76 & \cellcolor{impMid}42.35
& \cellcolor{impMid}51.15 & \cellcolor{impMid}46.24
& \cellcolor{impMid}70.50 & \cellcolor{impMid}59.18
& \cellcolor{impNeg}72.93 & \cellcolor{impNeg}61.96
& 43.8 & -6.02 \\

+DSU
& \cellcolor{impNeg}70.31 & \cellcolor{impNeg}68.70
& \cellcolor{impNeg}76.90 & \cellcolor{impNeg}74.48
& \cellcolor{impNeg}67.74 & \cellcolor{impNeg}65.62
& \cellcolor{impMid}51.58 & \cellcolor{impMid}49.84
& \cellcolor{impMid}52.19 & \cellcolor{impNeg}41.57
& \cellcolor{impMid}49.82 & \cellcolor{impNeg}44.96
& \cellcolor{impMid}\textbf{71.58} & \cellcolor{impMid}60.29
& \cellcolor{impMid}73.05 & \cellcolor{impNeg}61.95
& 43.8 & -2.81 \\

+\textbf{FBAM (Ours)}
& \cellcolor{impStrong}\textbf{81.61} & \cellcolor{impStrong}\textbf{80.00}
& \cellcolor{impStrong}\textbf{83.03} & \cellcolor{impStrong}\textbf{81.46}
& \cellcolor{impStrong}\textbf{75.42} & \cellcolor{impStrong}\textbf{73.89}
& \cellcolor{impStrong}\textbf{54.38} & \cellcolor{impStrong}\textbf{52.13}
& \cellcolor{impNeg}51.86 & \cellcolor{impStrong}\textbf{47.08}
& \cellcolor{impStrong}\textbf{52.82} & \cellcolor{impStrong}\textbf{49.13}
& \cellcolor{impMid}71.26 & \cellcolor{impMid}\textbf{60.90}
& \cellcolor{impMid}73.01 & \cellcolor{impNeg}61.80
& \textbf{81.2} & \textbf{-0.29} \\

\bottomrule
\end{tabular}
\end{adjustbox}
\end{table*}

\textit{Table~\ref{tab:main_result}} provides a comprehensive comparison against 12 baselines across six BTS datasets. We report $\mathrm{Avg.\ Rank}$, computed over six evaluation metrics (lower is better). BioFormer achieves the best overall average rank and attains the top average performance on four of the six datasets, demonstrating strong and consistent generalization across diverse biomedical settings. Compared with the SOTA model SoftShape~\cite{softshape2025}, BioFormer delivers clear gains on ADFTD and PTB, improving F1 by over 5\%, and achieves higher AUROC (AUR.) on most datasets, indicating more reliable decision boundaries under cross-subject shifts.
Computational complexity analysis is deferred to Appendix~\ref{app:Computational_Complexity}.

\subsection{Effectiveness of Alignment Paradigms}
\label{sec:dg}

We study whether explicitly aligning structured spectral components is effective and stable for cross-subject task.
Concretely, we compare FBAM with representative, plug-and-play DG methods (GRL~\cite{ganin2016domain}, MMD~\cite{long2015learning}, CORAL~\cite{sun2016deep}, SubjectNorm~\cite{li2022subject}, MixStyle~\cite{zhou2021mixstyle}, and DSU~\cite{dsu2022}) that promote feature alignment through adversarial or statistical objectives. 
These baselines can be seamlessly integrated into a backbone via auxiliary losses or normalization-based augmentations. Implementation details are provided in Appendix~\ref{app:dg_impl}.

As shown in \textit{Table~\ref{tab:cross_subject_methods}}, \textbf{spectral alignment improves both performance and stability in cross-subject learning}. Red/blue cells denote improvements over the corresponding backbone-only setting, while gray cells indicate degradation.
On APAVA with a Medformer backbone, FBAM improves F1 by over 10\% compared with GRL, indicating substantially stronger cross-subject transfer.
Notably, FBAM often improves the underlying backbones; for example, Medformer with FBAM reaches an F1 of 81.46 on APAVA, corresponding to a 5\% improvement over the Medformer.
To quantify stability, we report Pos.\ Rate, the fraction of settings with positive gains. BioFormer achieves 81\%, outperforming the second-best method by nearly 40\%, with a worst-case drop of only 0.3\%.
These results suggest that explicitly modeling structured spectral variability yields more reliable cross-subject alignment than generic feature-level discrepancy minimization.

\begin{figure}[t]
    \centering
    \includegraphics[width=0.9\linewidth]{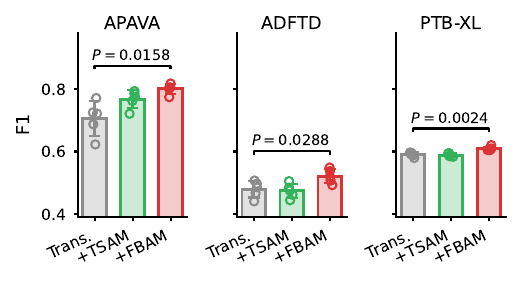}
    \caption{\textbf{Cross-subject evaluation of alignment domains.}  F1 scores on three datasets. Dots denote individual runs. Welch's $t$-test; $P$ values shown.}
    \label{fig:TSAM_vs_FBAM}
\end{figure}

\begin{figure*}[t]
    \centering
\includegraphics[width=\linewidth]{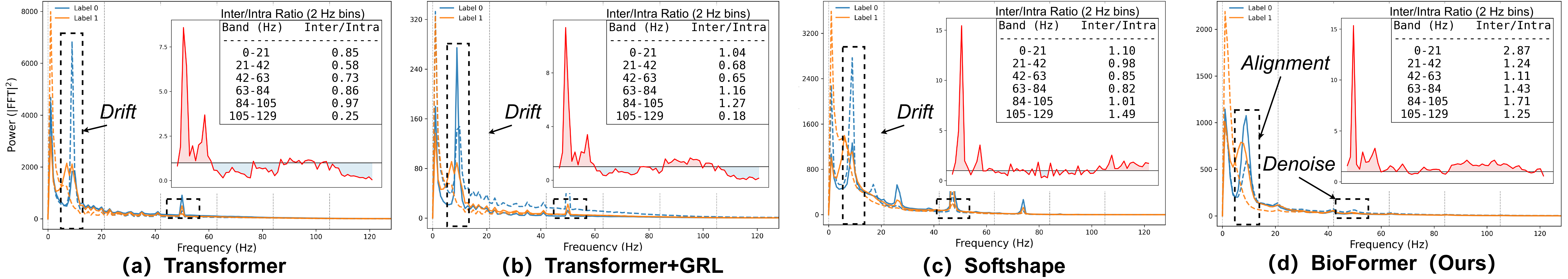}
    \caption{
    \textbf{A perspective of spectral structural alignment.} 
     Power spectra of final-layer embeddings and the corresponding Frequency-Band Discriminability (FBD) distributions for different methods. 
    The red curves show the FBD computed in 2 Hz frequency bins. 
    A larger \textcolor{red}{red area} indicates \emph{stronger spectral alignment} and \emph{more prominent task-discriminative information}.
    }
    \label{fig:spec_align}
\end{figure*}

\subsection{Compared to Time-domain Alignment}
\label{sec:ta}

To explore the benefit of aligning in the frequency domain, we implement a time-domain counterpart of FBAM, termed \emph{Temporal Segment Alignment Module (TSAM)}, more detail can be found in Appendix~\ref{app:tsam}.
TSAM follows the same design philosophy and predicts modulation coefficients and adaptively recalibrates representations, but applies the modulation directly in the time domain.
As shown in \textit{Figure~\ref{fig:TSAM_vs_FBAM}}, FBAM yields absolute F1-score improvements of 3.29\%  on APAVA (80.00 vs. 76.71), 4.68\% on ADFTD (52.13 vs. 47.45), and 2.04\% on PTB-XL (60.90 vs. 58.86). The observed improvements are statistically significant, with all paired tests resulting in small $p$-values ($p < 0.02$).
This result supports our hypothesis that subject-specific variability is obscured in the time domain and thus difficult to model, whereas spectral structural alignment enforces consistent oscillatory structure, thereby preserving semantics.

\begin{figure}[t]
    \centering
    \includegraphics[width=\linewidth]{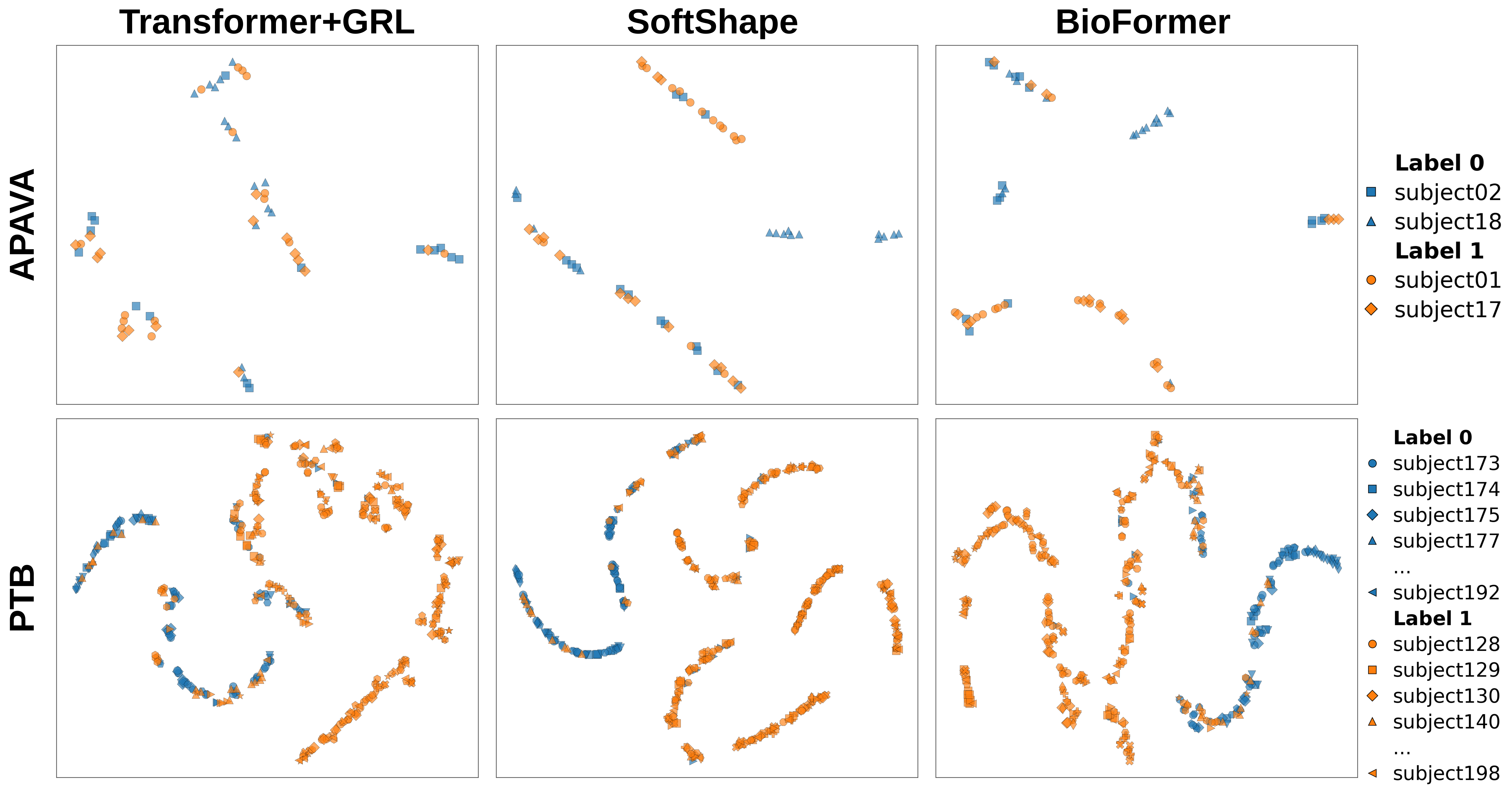}
    \caption{\textbf{t-SNE visualization of learned representations.} Colors indicate task labels and marker shapes denote subjects.}
    \label{fig:subject_tsne}
\end{figure}

\subsection{Spectral Structural Alignment Analysis}
\label{sec:ssaa}

To further analyze whether BioFormer indeed aligns subject-dependent spectral structures, we introduce a metric termed \emph{Frequency-Band Discriminability (FBD)}.
For each frequency band $b$, we define the intra-class discrepancy $\mathrm{Intra}(b)$ as the standard deviation of band-wise average spectral power across subjects within the same class, and the inter-class discrepancy $\mathrm{Inter}(b)$ as the maximum difference in band-wise average spectral power between classes. 
Their ratio $\mathrm{FBD} = \frac{\mathrm{Inter}(b)}{\mathrm{Intra}(b)}$ measures the degree to which task-related spectral structure dominates subject-specific variation, with $\mathrm{FBD}>1$ indicating favorable alignment.
Detailed definitions are provided in Appendix~\ref{app:fbd}.

As shown in \textit{Figure~\ref{fig:spec_align}}, $\mathrm{FBD}$ is computed from the power spectra of the final-layer embeddings. The top-right subplot visualizes the $\mathrm{FBD}$ distribution over 2~Hz frequency bins, where red regions indicate $\mathrm{FBD}>1$ and blue regions indicate $\mathrm{FBD}\le 1$.
BioFormer exhibits the largest red regions across most frequency bands, consistent with its strong cross-subject performance. In the low-frequency range (0--20~Hz), which typically contains task-relevant semantics, BioFormer attains an $\mathrm{FBD}$ of 2.87, substantially higher than SoftShape (1.10). The regions highlighted by the black boxes further show that, within the same class, the spectra of two subjects become visibly closer under BioFormer. 
These observations suggest that, under task supervision, FBAM leverages magnitude scaling and phase rotation to align oscillatory components and suppress subject-induced variability while preserving class-discriminative structure.

\subsection{Analysis of Subject Separability}
\label{sec:subject_separability}

We further analyze whether BioFormer suppresses subject-dependent information in the learned representations.
Following a standard subject-probing protocol, we freeze the trained backbone and train a lightweight MLP probe to predict subject identities from the extracted features.
Lower Subject F1 indicates that subject identity is less recoverable from the representation.
As shown in Table~\ref{tab:subject_separability}, BioFormer achieves the lowest Subject F1 on both APAVA and PTB.
On APAVA, BioFormer reduces Subject F1 from 95.00 (SoftShape) and 89.97 (Trans.+GRL) to 89.65.
On PTB, BioFormer further reduces Subject F1 to 97.95, compared with 98.33 for SoftShape and 99.06 for Trans.+GRL.
These results suggest that subject identity becomes less recoverable under BioFormer, even without explicit subject supervision.

\begin{table}[t]
\centering
\scriptsize
\setlength{\tabcolsep}{2pt}
\caption{
\textbf{Subject separability analysis.}
Lower Subject F1, NMI, and AMI indicate weaker subject-specific information in the learned representation.
}
\label{tab:subject_separability}
\begin{tabular}{ccccc}
\toprule
\textbf{Dataset} & \textbf{Metric} & \textbf{Trans.+GRL} & \textbf{SoftShape} & \textbf{BioFormer} \\
\midrule
\multirow{3}{*}{APAVA}
& Subject F1 $\downarrow$ & 89.97 & 95.00 & \textbf{89.65} \\
& NMI $\downarrow$ & 0.4563 & 0.6373 & \textbf{0.2787} \\
& AMI $\downarrow$ & 0.4417 & 0.6179 & \textbf{0.2585} \\
\midrule
\multirow{3}{*}{PTB}
& Subject F1 $\downarrow$ & 99.06 & 98.33 & \textbf{97.95} \\
& NMI $\downarrow$ & \textbf{0.4355} & 0.4701 & 0.4505 \\
& AMI $\downarrow$ & 0.2808 & 0.2592 & \textbf{0.2260} \\
\bottomrule
\end{tabular}
\end{table}

To further visualize the learned representations, we apply t-SNE to embeddings from unseen test subjects.
As shown in Figure~\ref{fig:subject_tsne}, samples are primarily organized by task labels rather than subject identities, while subjects remain intermixed within each semantic cluster, suggesting reduced subject dependency.
Consistent with the visualization results, BioFormer also achieves substantially lower NMI and AMI on APAVA, reducing NMI from 0.6373 (SoftShape) and 0.4563 (Trans.+GRL) to 0.2787, and reducing AMI from 0.6179 and 0.4417 to 0.2585.
On PTB, all methods already exhibit relatively weak subject clustering, while BioFormer still achieves the lowest AMI (0.2260), compared with 0.2592 for SoftShape and 0.2808 for Trans.+GRL.
Additional silhouette-based analysis is provided in Appendix~\ref{app:subject_silhouette}.

\subsection{Ablation Studies}

\textit{Table~\ref{tab:ablation_f1}} summarizes the ablation results of BioFormer under different module configurations.
Removing FBAM causes substantial performance degradation, with F1 drops of over 8\% on both APAVA and ADFTD, demonstrating the importance of explicit spectral structural alignment for mitigating subject-specific variability.
We further analyze the roles of amplitude and phase modulation.
Using only magnitude modulation improves APAVA, while phase-only modulation performs best on PTB, which is consistent with the observation that PTB mainly exhibits phase inconsistency across subjects.
Joint amplitude--phase modulation generally achieves the most stable overall performance across datasets.
Removing SCLN leads to additional performance drops on APAVA, ADFTD, and PTB, indicating that sample-adaptive normalization further stabilizes cross-subject representations after spectral alignment.
Overall, the full BioFormer consistently achieves the best or near-best performance across datasets.
Additional ablation analyses are provided in Appendix~\ref{app:The_Analysis_of_FBAM} and~\ref{app:ablation_fbam}.

\begin{table}[t]
\centering
\scriptsize
\setlength{\tabcolsep}{3pt}
\caption{
\textbf{Ablation study of BioFormer with different module configurations}. F1-scores (\%) are reported.
}
\label{tab:ablation_f1}
\begin{tabular}{ccc|cccc}
\toprule
\multicolumn{2}{c}{\textbf{FBAM}} & \textbf{SCLN}
& \multicolumn{4}{c}{\textbf{Datasets}} \\
\cmidrule(lr){1-2} \cmidrule(lr){4-7}
\textbf{Mag.} & \textbf{Phase} &
& \textbf{APAVA}
& \textbf{ADFTD}
& \textbf{PTB}
& \textbf{PTB-XL} \\
\midrule
$\times$ & $\times$ & $\times$
& 74.23 & 44.81 & 81.38 & 61.45 \\
$\times$ & $\times$ & $\checkmark$
& 71.20 & 45.90 & 82.41 & 61.78 \\
$\checkmark$ & $\checkmark$ & $\times$
& 78.07 & 51.78 & 80.55 & \textbf{62.56} \\
$\checkmark$ & $\times$ & $\checkmark$
& 75.82 & 49.96 & 78.07 & 62.36 \\
$\times$ & $\checkmark$ & $\checkmark$
& 74.06 & 52.15 & \textbf{84.71} & 61.84 \\
$\checkmark$ & $\checkmark$ & $\checkmark$
& \textbf{79.77} & \textbf{53.82} & 83.06 & 62.53\\
\bottomrule
\end{tabular}
\end{table}

\subsection{Visualization Analysis for FBAM}

We visualize the intermediate feature representations before and after FBAM in \textit{Figure~\ref{fig:fbam_vis}}.
In the figure, red indicates the magnitude of representation differences across temporal indices.
After FBAM processing, the discrepancy between subjects sharing the same label is visibly reduced, suggesting that FBAM suppresses subject-dependent variation through band-wise spectral alignment.
At the same time, the processed representations of different classes remain distinguishable, indicating that task-related semantic structure is preserved during alignment.
These observations are consistent with the quantitative analyses in Sec.~\ref{sec:subject_separability} and Appendix~\ref{app:subject_silhouette}, which show reduced subject separability while maintaining task-related clustering structure.

\begin{figure}[t]
    \centering
    \includegraphics[width=\linewidth]{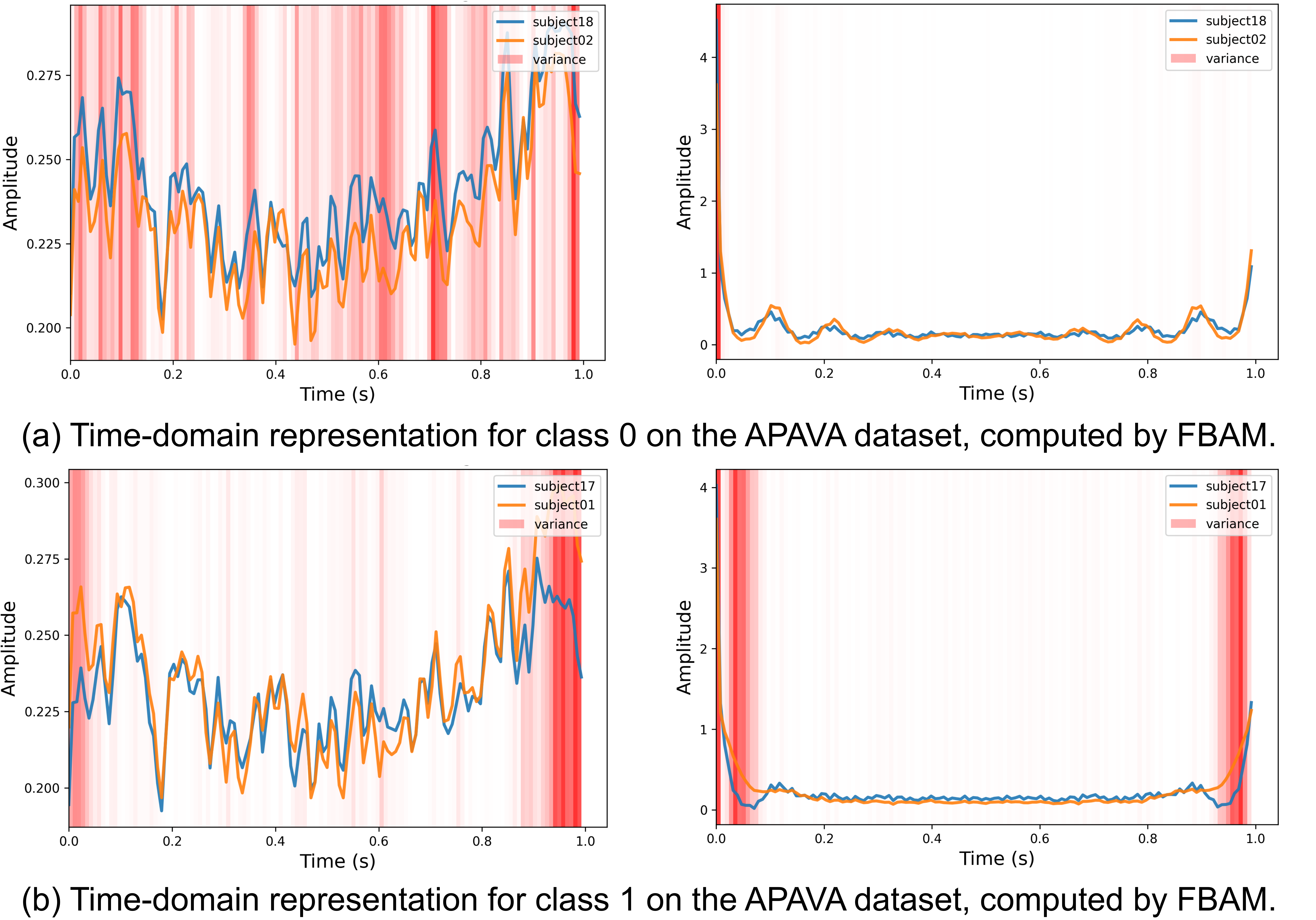}
    \caption{\textbf{Visualization of intermediate feature representations before and after FBAM alignment.}
    The right representation corresponds to the left after FBAM processing. Red indicates the magnitude of representation differences across temporal indices.}
    \label{fig:fbam_vis}
\end{figure}

\section{Limitations}

BioFormer is most effective when cross-subject generalization is supported by shared task-relevant spectral structures, as commonly observed in physiological signals with stable oscillatory patterns. Its benefit may be reduced when such shared structures are weak or semantically uninformative in the frequency domain, for example in event-driven anomaly detection or irregular clinical time-series where labels are dominated by sparse transients or non-periodic local events.

\section{Conclusion}

We introduce \emph{spectral drift} as a frequency-domain perspective for subject variability in biomedical time series and propose BioFormer for cross-subject generalization through adaptive frequency-band alignment.
Experiments on six benchmarks show consistent gains over both general time-series models and domain generalization methods.

\section*{Acknowledgments}
This work was supported in part by Natural Science Foundation of China (62372254), 
Beijing-Tianjin-Hebei Basic Research Cooperation Project of Hebei Natural Science Foundation under Grant F2024203115(24JCZXJC00160 in Tianjin), 
Innovative Development Joint Fund
Key Projects of Shandong NSF (ZR2023LZH003, ZR2022LZH009), 
The robotic AI-Scientist platform of Chinese Academy of Sciences.

\section*{Impact Statement}

This paper presents work whose goal is to advance the field of Machine Learning. There are many potential societal consequences of our work, none which we feel must be specifically highlighted here.

\bibliography{References}

\begin{thebibliography}{51}
\providecommand{\natexlab}[1]{#1}
\providecommand{\url}[1]{\texttt{#1}}
\expandafter\ifx\csname urlstyle\endcsname\relax
  \providecommand{\doi}[1]{doi: #1}\else
  \providecommand{\doi}{doi: \begingroup \urlstyle{rm}\Url}\fi

\bibitem[Apicella et~al.(2024)Apicella, Arpaia, D{'}Errico, Marocco, Mastrati, Moccaldi, and Prevete]{APICELLA2024128354}
Apicella, A., Arpaia, P., D{'}Errico, G., Marocco, D., Mastrati, G., Moccaldi, N., and Prevete, R.
\newblock Toward cross-subject and cross-session generalization in {EEG}-based emotion recognition: Systematic review, taxonomy, and methods.
\newblock \emph{Neurocomputing}, 604:\penalty0 128354, 2024.
\newblock ISSN 0925-2312.

\bibitem[Blankertz et~al.(2004)Blankertz, M{\"u}ller, Curio, Vaughan, Schalk, Wolpaw, Schl{\"o}gl, Neuper, Pfurtscheller, Hinterberger, Schr{\"o}der, and Birbaumer]{bci2a}
Blankertz, B., M{\"u}ller, K.-R., Curio, G., Vaughan, T.~M., Schalk, G., Wolpaw, J.~R., Schl{\"o}gl, A., Neuper, C., Pfurtscheller, G., Hinterberger, T., Schr{\"o}der, M., and Birbaumer, N.
\newblock The {BCI} competition 2003: Progress and perspectives in detection and discrimination of {EEG} single trials.
\newblock \emph{IEEE Transactions on Biomedical Engineering}, 51\penalty0 (6):\penalty0 1044--1051, 2004.

\bibitem[Blankertz et~al.(2006)Blankertz, M{\"u}ller, Krusienski, Schalk, Wolpaw, Schl{\"o}gl, Pfurtscheller, del R.~Mill{\'a}n, Schr{\"o}der, and Birbaumer]{bci2b}
Blankertz, B., M{\"u}ller, K.-R., Krusienski, D., Schalk, G., Wolpaw, J.~R., Schl{\"o}gl, A., Pfurtscheller, G., del R.~Mill{\'a}n, J., Schr{\"o}der, M., and Birbaumer, N.
\newblock The {BCI} competition {III}: Validating alternative approaches to actual {BCI} problems.
\newblock \emph{IEEE Transactions on Neural Systems and Rehabilitation Engineering}, 14\penalty0 (2):\penalty0 153--159, 2006.

\bibitem[Cascarano et~al.(2023)Cascarano, Mur-Petit, Hern{\'a}ndez-Gonz{\'a}lez, et~al.]{cascarano2023machine}
Cascarano, A., Mur-Petit, J., Hern{\'a}ndez-Gonz{\'a}lez, J., et~al.
\newblock Machine and deep learning for longitudinal biomedical data: A review of methods and applications.
\newblock \emph{Artificial Intelligence Review}, 56\penalty0 (Suppl 2):\penalty0 1711--1771, 2023.

\bibitem[Chetty et~al.(2024)Chetty, Bhardwaj, Kumar, et~al.]{chetty2024eeg}
Chetty, C.~A., Bhardwaj, H., Kumar, G.~P., et~al.
\newblock {EEG} biomarkers in {Alzheimer}'s and prodromal {Alzheimer}'s: A comprehensive analysis of spectral and connectivity features.
\newblock \emph{Alzheimer's Research \& Therapy}, 16\penalty0 (1):\penalty0 236, 2024.

\bibitem[Craik et~al.(2019)Craik, He, and Contreras-Vidal]{craik2019eeg}
Craik, A., He, Y., and Contreras-Vidal, J.~L.
\newblock Deep learning for electroencephalogram ({EEG}) classification tasks: A review.
\newblock \emph{Journal of Neural Engineering}, 16\penalty0 (3):\penalty0 031001, 2019.

\bibitem[Donoghue et~al.(2020)Donoghue, Haller, Peterson, et~al.]{donoghue2020neural}
Donoghue, T., Haller, M., Peterson, E.~J., et~al.
\newblock Parameterizing neural power spectra into periodic and aperiodic components.
\newblock \emph{Nature Neuroscience}, 23:\penalty0 1655--1665, 2020.

\bibitem[Escudero et~al.(2006)Escudero, Ab{\'a}solo, Hornero, Espino, and L{\'o}pez]{apava}
Escudero, J., Ab{\'a}solo, D., Hornero, R., Espino, P., and L{\'o}pez, M.
\newblock Analysis of electroencephalograms in {A}lzheimer's disease patients with multiscale entropy.
\newblock \emph{Physiological Measurement}, 27\penalty0 (11):\penalty0 1091--1106, 2006.

\bibitem[Ganin et~al.(2016)Ganin, Ustinova, Ajakan, Germain, Larochelle, Laviolette, March, and Lempitsky]{ganin2016domain}
Ganin, Y., Ustinova, E., Ajakan, H., Germain, P., Larochelle, H., Laviolette, F., March, M., and Lempitsky, V.
\newblock Domain-adversarial training of neural networks.
\newblock \emph{Journal of Machine Learning Research}, 17\penalty0 (59):\penalty0 1--35, 2016.

\bibitem[Goldberger et~al.(2000)Goldberger, Amaral, Glass, Hausdorff, Ivanov, Mark, Mietus, Moody, Peng, and Stanley]{ptb}
Goldberger, A.~L., Amaral, L.~A., Glass, L., Hausdorff, J.~M., Ivanov, P.~C., Mark, R.~G., Mietus, J.~E., Moody, G.~B., Peng, C.-K., and Stanley, H.~E.
\newblock {PhysioBank, PhysioToolkit, and PhysioNet}: Components of a new research resource for complex physiologic signals.
\newblock \emph{Circulation}, 101\penalty0 (23):\penalty0 e215--e220, 2000.

\bibitem[Hannun et~al.(2019)Hannun, Rajpurkar, Haghpanahi, et~al.]{hannun2019cardiologist}
Hannun, A.~Y., Rajpurkar, P., Haghpanahi, M., et~al.
\newblock Cardiologist-level arrhythmia detection and classification in ambulatory electrocardiograms using a deep neural network.
\newblock \emph{Nature Medicine}, 25:\penalty0 65--69, 2019.

\bibitem[Huang et~al.(2025)Huang, Si, Han, and Ming]{CADAN}
Huang, H., Si, X., Han, Y., and Ming, D.
\newblock A novel conditional adversarial domain adaptation network for {EEG} cross-subject emotion recognition.
\newblock \emph{IEEE Transactions on Affective Computing}, 16:\penalty0 2905--2917, 2025.

\bibitem[Jovic et~al.(2025)Jovic, Frid, Brkic, and Cifrek]{JOVIC2025108153}
Jovic, A., Frid, N., Brkic, K., and Cifrek, M.
\newblock Interpretability and accuracy of machine learning algorithms for biomedical time series analysis -- a scoping review.
\newblock \emph{Biomedical Signal Processing and Control}, 110:\penalty0 108153, 2025.
\newblock ISSN 1746-8094.

\bibitem[Kim et~al.(2024)Kim, Kim, and Kim]{KIM2024111855}
Kim, H., Kim, J., and Kim, S.~B.
\newblock Cross-subject generalizable representation learning with class-subject dual labels for biosignals.
\newblock \emph{Knowledge-Based Systems}, 295:\penalty0 111855, 2024.
\newblock ISSN 0950-7051.

\bibitem[Kitaev et~al.(2020)Kitaev, Kaiser, and Levskaya]{Reformer2020}
Kitaev, N., Kaiser, L., and Levskaya, A.
\newblock Reformer: The efficient transformer.
\newblock In \emph{International Conference on Learning Representations}, 2020.

\bibitem[Kop{\v c}anov{\'a} et~al.(2024)Kop{\v c}anov{\'a}, Tait, Donoghue, et~al.]{kopcannova2024resting}
Kop{\v c}anov{\'a}, M., Tait, L., Donoghue, T., et~al.
\newblock Resting-state {EEG} signatures of {Alzheimer}'s disease are driven by periodic but not aperiodic changes.
\newblock \emph{Neurobiology of Disease}, 190:\penalty0 106380, 2024.

\bibitem[Lawhern et~al.(2018)Lawhern, Solon, Waytowich, Gordon, Hung, and Lance]{lawhern2018eegnet}
Lawhern, V.~J., Solon, A.~J., Waytowich, N.~R., Gordon, S.~M., Hung, C.~P., and Lance, B.~J.
\newblock Eegnet: A compact convolutional neural network for {EEG}-based brain--computer interfaces.
\newblock \emph{Journal of Neural Engineering}, 15\penalty0 (5):\penalty0 056013, 2018.

\bibitem[Li et~al.(2025)Li, Zhang, Kang, Jiang, Gong, Zhang, Sun, and Cichocki]{li2025so}
Li, H., Zhang, Z., Kang, J., Jiang, X., Gong, X., Zhang, J., Sun, Z., and Cichocki, A.
\newblock So far yet so near: Time series data augmentation with exploring non-semantic boundaries based on reinforcement learning.
\newblock In \emph{ICASSP 2025-2025 {IEEE} International Conference on Acoustics, Speech and Signal Processing}, pp.\  1--5. IEEE, 2025.

\bibitem[Li et~al.(2022)Li, Dai, Ge, Liu, Shan, and DUAN]{dsu2022}
Li, X., Dai, Y., Ge, Y., Liu, J., Shan, Y., and DUAN, L.
\newblock Uncertainty modeling for out-of-distribution generalization.
\newblock In \emph{International Conference on Learning Representations}, 2022.

\bibitem[Liu et~al.(2022)Liu, Wu, Wang, and Long]{Nonformer2022}
Liu, Y., Wu, H., Wang, J., and Long, M.
\newblock {Non-stationary Transformers}: Exploring the stationarity in time series forecasting.
\newblock In \emph{Advances in Neural Information Processing Systems}, 2022.

\bibitem[Liu et~al.(2024)Liu, Hu, Zhang, Wu, Wang, Ma, and Long]{iTransformer2023}
Liu, Y., Hu, T., Zhang, H., Wu, H., Wang, S., Ma, L., and Long, M.
\newblock {iTransformer}: Inverted transformers are effective for time series forecasting.
\newblock In \emph{International Conference on Learning Representations}, 2024.

\bibitem[Liu et~al.(2023)Liu, Alavi, Li, and Zhang]{liu2023self}
Liu, Z., Alavi, A., Li, M., and Zhang, X.
\newblock Self-supervised contrastive learning for medical time series: A systematic review.
\newblock \emph{Sensors}, 23\penalty0 (9):\penalty0 4221, 2023.

\bibitem[Liu et~al.(2025)Liu, Luo, Li, Eldele, Wu, and Ma]{softshape2025}
Liu, Z., Luo, Y., Li, B., Eldele, E., Wu, M., and Ma, Q.
\newblock Learning soft sparse shapes for efficient time-series classification.
\newblock In \emph{International Conference on Machine Learning}, 2025.

\bibitem[Long et~al.(2015)Long, Cao, Wang, and Jordan]{long2015learning}
Long, M., Cao, Y., Wang, J., and Jordan, M.~I.
\newblock Learning transferable features with deep adaptation networks.
\newblock In \emph{International Conference on Machine Learning}, pp.\  97--105, 2015.

\bibitem[Miltiadous et~al.(2023)Miltiadous, Tzimourta, Afrantou, Ioannidis, Grigoriadis, Tsalikakis, Angelidis, Tsipouras, Glavas, Giannakeas, and Tzallas]{adftd}
Miltiadous, A., Tzimourta, K.~D., Afrantou, T., Ioannidis, P., Grigoriadis, N., Tsalikakis, D.~G., Angelidis, P., Tsipouras, M.~G., Glavas, E., Giannakeas, N., and Tzallas, A.~T.
\newblock A dataset of scalp {EEG} recordings of alzheimer's disease, frontotemporal dementia and healthy subjects from routine {EEG}.
\newblock \emph{Data}, 8\penalty0 (6), 2023.
\newblock ISSN 2306-5729.

\bibitem[Mushtaq et~al.(2024)Mushtaq, Welke, Gallagher, et~al.]{mushtaq2024eeg}
Mushtaq, F., Welke, D., Gallagher, A., et~al.
\newblock One hundred years of {EEG} for brain and behaviour research.
\newblock \emph{Nature Human Behaviour}, 8:\penalty0 1437--1443, 2024.

\bibitem[Nie et~al.(2023)Nie, H.~Nguyen, Sinthong, and Kalagnanam]{PatchTST2023}
Nie, Y., H.~Nguyen, N., Sinthong, P., and Kalagnanam, J.
\newblock A time series is worth 64 words: Long-term forecasting with transformers.
\newblock In \emph{International Conference on Learning Representations}, 2023.

\bibitem[Pascucci et~al.(2025)Pascucci, Men{\'e}trey, Passarotto, Luo, Paramento, and Rubega]{PASCUCCI2025106288}
Pascucci, D., Men{\'e}trey, M.~Q., Passarotto, E., Luo, J., Paramento, M., and Rubega, M.
\newblock {EEG} brain waves and alpha rhythms: Past, current and future direction.
\newblock \emph{Neuroscience and Biobehavioral Reviews}, 176:\penalty0 106288, 2025.
\newblock ISSN 0149-7634.

\bibitem[Polich(2007)]{polich2007updating}
Polich, J.
\newblock Updating p300: an integrative theory of p3a and p3b.
\newblock \emph{Clinical Neurophysiology}, 118\penalty0 (10):\penalty0 2128--2148, 2007.

\bibitem[Poterucha et~al.(2025)Poterucha, Jing, Ricart, et~al.]{poterucha2025detecting}
Poterucha, T.~J., Jing, L., Ricart, R.~P., et~al.
\newblock Detecting structural heart disease from electrocardiograms using {AI}.
\newblock \emph{Nature}, 644:\penalty0 221--230, 2025.

\bibitem[Rayatdoost et~al.(2021)Rayatdoost, Yin, Rudrauf, and Soleymani]{li2022subject}
Rayatdoost, S., Yin, Y., Rudrauf, D., and Soleymani, M.
\newblock Subject-invariant eeg representation learning for emotion recognition.
\newblock In \emph{ICASSP 2021 - 2021 {IEEE} International Conference on Acoustics, Speech and Signal Processing}, pp.\  3955--3959, 2021.

\bibitem[Roques et~al.(2025)Roques, Gruffaz, Kim, Durmus, and Oudre]{TIBKAT:1947719475}
Roques, A., Gruffaz, S., Kim, K., Durmus, A.~O., and Oudre, L.
\newblock Personalized convolutional dictionary learning of physiological time series.
\newblock In \emph{The 28th International Conference on Artificial Intelligence and Statistics}, volume 258, pp.\  1837--1845, 2025.

\bibitem[Roy et~al.(2019)Roy, Banville, Albuquerque, Gramfort, Falk, and Faubert]{roy2019eeg}
Roy, Y., Banville, H., Albuquerque, I., Gramfort, A., Falk, T.~H., and Faubert, J.
\newblock Deep learning-based electroencephalography analysis: A systematic review.
\newblock \emph{Journal of Neural Engineering}, 16\penalty0 (5):\penalty0 051001, 2019.

\bibitem[Shen et~al.(2023)Shen, Liu, Hu, Zhang, and Song]{9748967}
Shen, X., Liu, X., Hu, X., Zhang, D., and Song, S.
\newblock Contrastive learning of subject-invariant {EEG} representations for cross-subject emotion recognition.
\newblock \emph{IEEE Transactions on Affective Computing}, 14\penalty0 (3):\penalty0 2496--2511, 2023.

\bibitem[Sun \& Saenko(2016)Sun and Saenko]{sun2016deep}
Sun, B. and Saenko, K.
\newblock Deep {CORAL}: Correlation alignment for deep domain adaptation.
\newblock In \emph{European Conference on Computer Vision Workshops}, pp.\  443--450, 2016.

\bibitem[Tan et~al.(2025)Tan, Zhang, Wang, Wen, Chen, Li, Gao, and Zeng]{sedaeeg2024}
Tan, W., Zhang, H., Wang, Y., Wen, W., Chen, L., Li, H., Gao, X., and Zeng, N.
\newblock {SEDA-EEG}: A semi-supervised emotion recognition network with domain adaptation for cross-subject eeg analysis.
\newblock \emph{Neurocomput.}, 622, 2025.
\newblock ISSN 0925-2312.

\bibitem[Vaswani et~al.(2017)Vaswani, Shazeer, Parmar, Uszkoreit, Jones, Gomez, Kaiser, and Polosukhin]{Transformer2017}
Vaswani, A., Shazeer, N., Parmar, N., Uszkoreit, J., Jones, L., Gomez, A.~N., Kaiser, L., and Polosukhin, I.
\newblock Attention is all you need.
\newblock In \emph{Advances in Neural Information Processing Systems}, 2017.

\bibitem[Wagner et~al.(2020)Wagner, Strodthoff, Bousseljot, Kreiseler, Lunze, Samek, and Schaeffter]{ptbxl}
Wagner, P., Strodthoff, N., Bousseljot, R.-D., Kreiseler, D., Lunze, F.~I., Samek, W., and Schaeffter, T.
\newblock {PTB-XL}: A large publicly available {ECG} dataset.
\newblock \emph{Scientific Data}, 7:\penalty0 154, 2020.

\bibitem[Wang et~al.(2024{\natexlab{a}})Wang, Huang, Li, Yan, and Zhang]{Medformer2024}
Wang, Y., Huang, N., Li, T., Yan, Y., and Zhang, X.
\newblock Medformer: A multi-granularity patching transformer for medical time-series classification.
\newblock \emph{Advances in Neural Information Processing Systems}, 37:\penalty0 36314--36341, 2024{\natexlab{a}}.

\bibitem[Wang et~al.(2024{\natexlab{b}})Wang, Zhang, and Tang]{wang2024dmmr}
Wang, Y., Zhang, B., and Tang, Y.
\newblock {DMMR}: Cross-subject domain generalization for {EEG}-based emotion recognition via denoising mixed mutual reconstruction.
\newblock In \emph{Proceedings of the {AAAI} Conference on Artificial Intelligence}, volume~38, pp.\  628--636, 2024{\natexlab{b}}.

\bibitem[Wen et~al.(2023)Wen, Zhou, Zhang, Chen, Ma, Yan, and Sun]{wen2023transformers}
Wen, Q., Zhou, T., Zhang, C., Chen, W., Ma, Z., Yan, J., and Sun, L.
\newblock Transformers in time series: A survey.
\newblock In \emph{Proceedings of the Thirty-Second International Joint Conference on Artificial Intelligence, {IJCAI-23}}, pp.\  6778--6786, 2023.

\bibitem[Wu et~al.(2021)Wu, Xu, Wang, and Long]{Autoformer2021}
Wu, H., Xu, J., Wang, J., and Long, M.
\newblock Autoformer: Decomposition transformers with {Auto-Correlation} for long-term series forecasting.
\newblock In \emph{Advances in Neural Information Processing Systems}, 2021.

\bibitem[Ye et~al.(2025)Ye, Zhang, Teng, Zhang, Wang, Ni, Li, Xu, and Liang]{10609510}
Ye, W., Zhang, Z., Teng, F., Zhang, M., Wang, J., Ni, D., Li, F., Xu, P., and Liang, Z.
\newblock Semi-supervised dual-stream self-attentive adversarial graph contrastive learning for cross-subject {EEG}-based emotion recognition.
\newblock \emph{IEEE Transactions on Affective Computing}, 16\penalty0 (1):\penalty0 290--305, 2025.

\bibitem[Yin et~al.(2025)Yin, Yu, Fang, Peng, and Li]{yin2025rethinking}
Yin, C., Yu, Q., Fang, Z., Peng, C., and Li, P.
\newblock Rethinking cross-subject data splitting for brain-to-text decoding.
\newblock In \emph{Proceedings of the 2025 Conference on Empirical Methods in Natural Language Processing}, pp.\  5675--5689, 2025.

\bibitem[Zhang \& Yan(2023)Zhang and Yan]{Crossformer2022}
Zhang, Y. and Yan, J.
\newblock Crossformer: Transformer utilizing cross-dimension dependency for multivariate time series forecasting.
\newblock In \emph{International Conference on Learning Representations}, 2023.

\bibitem[Zhang et~al.(2024)Zhang, Ma, Pal, Zhang, and Coates]{MTST2023}
Zhang, Y., Ma, L., Pal, S., Zhang, Y., and Coates, M.
\newblock Multi-resolution time-series transformer for long-term forecasting.
\newblock In \emph{Proceedings of The 27th International Conference on Artificial Intelligence and Statistics}, volume 238, pp.\  4222--4230, 2024.

\bibitem[Zhou et~al.(2021{\natexlab{a}})Zhou, Zhang, Peng, Zhang, Li, Xiong, and Zhang]{Informer2021}
Zhou, H., Zhang, S., Peng, J., Zhang, S., Li, J., Xiong, H., and Zhang, W.
\newblock {Informer}: Beyond efficient transformer for long sequence time-series forecasting.
\newblock In \emph{Proceedings of the AAAI Conference on Artificial Intelligence}, volume~35, pp.\  11106--11115, 2021{\natexlab{a}}.

\bibitem[Zhou et~al.(2021{\natexlab{b}})Zhou, Yang, Qiao, and Xiang]{zhou2021mixstyle}
Zhou, K., Yang, Y., Qiao, Y., and Xiang, T.
\newblock Domain generalization with mixstyle.
\newblock In \emph{International Conference on Learning Representations}, 2021{\natexlab{b}}.

\bibitem[Zhou et~al.(2022)Zhou, Ma, Wen, Wang, Sun, and Jin]{FEDformer2022}
Zhou, T., Ma, Z., Wen, Q., Wang, X., Sun, L., and Jin, R.
\newblock {FEDformer}: Frequency enhanced decomposed transformer for long-term series forecasting.
\newblock In \emph{Proc. 39th International Conference on Machine Learning}, 2022.

\bibitem[Zhou et~al.(2026)Zhou, Zeng, and Fan]{ZHOU2026132295}
Zhou, W., Zeng, F., and Fan, T.
\newblock {SSMGNN}: Spectral temporal graph neural network with state space models for multivariate time-series forecasting.
\newblock \emph{Neurocomputing}, 666:\penalty0 132295, 2026.
\newblock ISSN 0925-2312.

\bibitem[Zhou et~al.(2024)Zhou, Hu, Zhou, Guan, and Jiang]{zhou2024rethinking}
Zhou, Y., Hu, H., Zhou, Q., Guan, Q., and Jiang, M.
\newblock Rethinking domain generalization from perspective of gradient granularity.
\newblock In \emph{Frontiers in Artificial Intelligence and Applications}, volume 392 of \emph{ECAI 2024}, pp.\  1543--1550, 2024.

\end{thebibliography}
\bibliographystyle{icml2026}

\newpage
\appendix
\onecolumn

\section*{Appendix Overview}
This appendix provides additional analysis, implementation details, theoretical justification, and extended experimental results for BioFormer.

Appendix~\ref{app:motivation_ptb} presents a motivation study on the PTB dataset, showing that cross-subject variability is prominent in the time domain and is largely caused by phase inconsistency, which motivates modeling subject effects as structured spectral drift.

Appendix~\ref{app:Module_Design} details the architecture of BioFormer, including the Pyramid Convolutional Embedding (PCE), the purely time-domain baseline TSAM for controlled comparison, and the complete FBAM forward computation with its corresponding algorithm.

Appendix~\ref{app:experimental_setup} describes the experimental setup, including dataset preprocessing, subject-level splitting protocols, baseline implementations, and training details for BioFormer and domain generalization baselines.

Appendix~\ref{app:dg_comparison} further compares BioFormer with representative domain generalization methods and discusses the differences between spectral alignment and conventional discrepancy-minimization strategies.

Appendix~\ref{app:fourier_alignment} provides a theoretical analysis showing that amplitude and phase modulation correspond to scaling and rotation within Fourier subspaces, supporting the interpretability of frequency-domain alignment.

Appendix~\ref{app:fbd} introduces the Frequency-Band Discriminability (FBD) metric for analyzing cross-subject spectral separability and provides a corollary showing that reducing intra-class spectral drift improves discriminability.

Appendix~\ref{app.complete_results} reports complete experimental results and additional analyses that complement the main paper.
Specifically, Appendix~\ref{app:main_results} presents the full cross-subject benchmark comparisons across all datasets and evaluation metrics.
Appendix~\ref{app:visual_fbam} provides qualitative visualization of FBAM, demonstrating how frequency-band alignment reduces inter-subject variation in intermediate representations.
Appendix~\ref{app:The_Analysis_of_FBAM} presents ablation studies on FBAM, including analyses of band partitioning, DC modeling, residual magnitude correction, static band modulation, and attention design.
Appendix~\ref{app:ablation_fbam} further studies the effect of different band-level statistical descriptors for constructing frequency descriptors.
Appendix~\ref{app:dmmr_comparison} compares BioFormer with BTS-specific domain generalization methods.
Appendix~\ref{app:subject_silhouette} analyzes subject-specific variability using silhouette scores to better understand representation behavior.
Appendix~\ref{app:alpha_sensitivity} studies the sensitivity of the SCLN mixing factor $\alpha$, while Appendix~\ref{app:Computational_Complexity} analyzes the trade-off between performance and computational complexity.
Finally, Appendix~\ref{app:subject-dependent evaluation} reports subject-dependent evaluation results to complement the primary cross-subject setting and further examine the robustness of BioFormer under reduced distribution shift.

Overall, these analyses provide a comprehensive view of BioFormer from the perspectives of effectiveness, interpretability, robustness, representation behavior, and computational efficiency.

\section{Motivation}
\label{app:motivation_ptb}

\begin{figure}[htbp]
\centering
\includegraphics[width=0.7\linewidth]{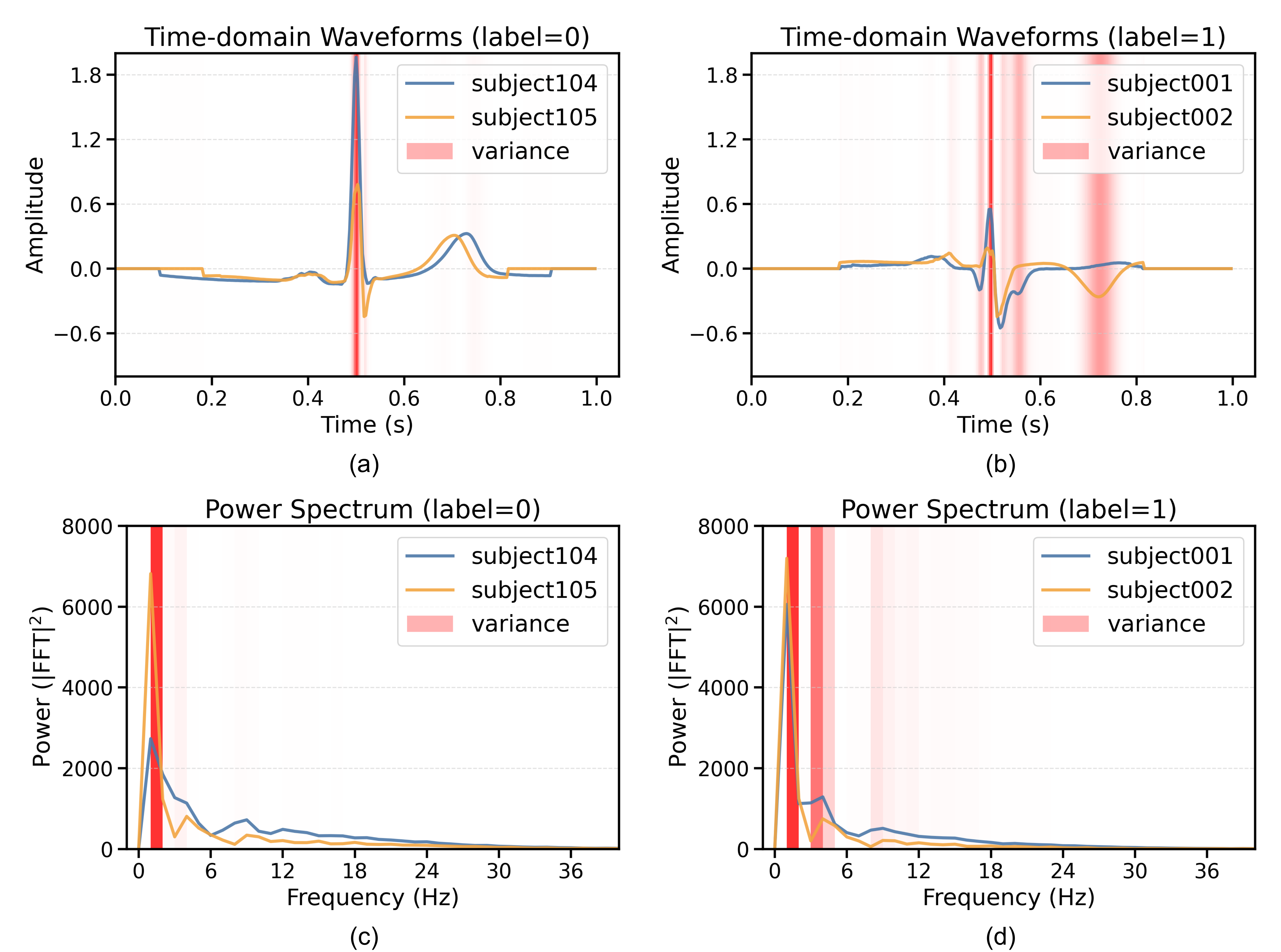}
\caption{\textbf{Time and frequency domain analysis on the PTB dataset.}
The first row ((a), (b)) shows time-domain waveforms, and the second row ((c), (d)) shows the corresponding power spectra.
Colored curves denote signals from different subjects under the same label, while the red background indicates inter-subject variance, with darker regions representing larger discrepancies.
As observed, cross-subject variability in PTB is primarily reflected as phase inconsistency in the time domain, which leads to pronounced variance despite similar spectral magnitude distributions.}
\label{fig:PTB_time_frequency_analysis}
\end{figure}

As illustrated in Figure~\ref{fig:PTB_time_frequency_analysis}, cross-subject variability in the PTB dataset is pronounced in the time domain and is largely driven by phase inconsistency in ECG waveforms.
Even under the same diagnostic label, signals from different subjects exhibit noticeable misalignment at specific temporal locations, resulting in high-variance regions that are not directly related to label semantics.
This temporal misalignment makes it easy for time-domain models to capture subject-dependent waveform idiosyncrasies rather than disease-relevant patterns.
In contrast, the frequency-domain representation reveals a more structured and stable view: samples sharing the same label preserve highly similar global spectral envelopes, while inter-subject differences are mainly reflected as localized phase shifts (and to a lesser extent, magnitude variations) within specific frequency components.
This observation suggests that, for PTB, cross-subject variability can be effectively characterized as structured spectral drift dominated by phase inconsistency.

\section{Module Design}
\label{app:Module_Design}
\subsection{Pyramid Convolutional Embedding (PCE)}
\label{app:pce}
The \textbf{Pyramid Convolutional Embedding (PCE)} module acts as a hierarchical
temporal feature extractor in BioFormer, producing multi-scale representations
from biomedical time-series signals.
Given an input segment $X_i \in \mathbb{R}^{T \times C}$, where $T$ denotes the sequence
length and $C$ the number of channels, PCE first maps the input into a latent space
via a token embedding combined with positional encoding:
\[
E_i = \mathrm{TokenEmbed}(X_i) + \mathrm{PosEmbed}(X_i).
\]
The embedded representation $E_i \in \mathbb{R}^{T \times D}$ is subsequently processed
by a cascade of one-dimensional convolutional downsampling stages with increasing
stride factors $\{2, 4, 8\}$.
Each stage comprises stacked Conv--BN--GELU blocks that progressively reduce the
temporal resolution while keeping the embedding dimension $D$ fixed, resulting in
a pyramid of temporal feature sequences:
\[
\{X_{i,1}, X_{i,2}, X_{i,3}\} = \{f_1(E_i), f_2(E_i), f_3(E_i)\}, \quad X_{i,j} \in \mathbb{R}^{T_j \times D},
\]
where $T_j \in \{T/2, T/4, T/8\}$ corresponds to the temporal scale at level $j$.
This multi-resolution structure enables PCE to jointly model short-term oscillatory
patterns and longer-range temporal dynamics in physiological signals.

To enhance cross-subject robustness and generalization, stochastic perturbations
are incorporated into each convolutional stage through a pool of lightweight
temporal augmentations.
During each forward pass, a single augmentation (e.g., additive noise, random
scaling, or dropout) is randomly sampled and applied to the intermediate feature
representation:
\[
\tilde{X}_{i,j} = {Aug}_{r}(X_{i,j}),
\]
where ${Aug}_{r}$ denotes the selected augmentation operator (Jitter, Scale,
Dropout, or Identity).
Such perturbations regularize the learned features against channel-level variations
and subtle inter-subject discrepancies commonly observed in EEG and ECG data.

Finally, PCE produces a set of multi-scale embeddings $\{X_{i,1}, X_{i,2}, X_{i,3}\}$, which serve
as the hierarchical inputs to subsequent frequency-band alignment and attention
encoding modules.
By preserving channel dimensionality while progressively reducing temporal length,
PCE achieves a favorable trade-off between representational capacity and
computational efficiency.

\subsection{Temporal Segment Alignment Module (TSAM)}
\label{app:tsam}

The \textbf{Temporal Segment Alignment Module (TSAM)} is a purely time-domain alignment module introduced as a controlled baseline to contrast with FBAM.
Unlike frequency-aware alignment, TSAM corrects subject-specific distribution drift by operating on uniformly segmented temporal structures, without invoking explicit frequency decomposition.

Given an input sequence $X_i \in \mathbb{R}^{T \times C}$, TSAM partitions the temporal axis into $M$ non-overlapping segments of approximately equal length:
\[
X_i \;\xrightarrow{\;\text{Uniform Temporal Split}\;}\; \{X_{i,m}\}_{m=1}^{M},
\]
where each segment $X_{i, m} \in \mathbb{R}^{T_m \times C}$ corresponds to a contiguous temporal interval and $\sum_{m=1}^{M} T_{i,m} = T_i$.
This segmentation preserves the original signal ordering while providing a coarse temporal structure for segment-wise analysis.

For each temporal segment, TSAM computes a set of low-order statistical descriptors that summarize its global characteristics, including magnitude, variability, peak response, and energy.
These segment-level descriptors are projected into a latent embedding space and aggregated through a cross-segment attention mechanism, which produces segment-specific control representations.

Conditioned on the learned segment representations, TSAM applies dynamic temporal convolution and residual gain modulation independently within each segment.
The resulting segment-wise adjustments are restricted to local temporal neighborhoods defined by the segment boundaries.
Finally, the updated segments are concatenated in their original temporal order to reconstruct the output sequence:
\[
X_{i\,\text{out}} = \text{Concat}\bigl(\tilde{X}_{i,1}, \tilde{X}_{i,2}, \dots, \tilde{X}_{i,M}\bigr).
\]

All operations in TSAM are confined to the time domain and are defined with respect to explicit temporal segments rather than frequency-oriented structures.
By design, TSAM isolates the effect of temporal partitioning and segment-wise modulation, enabling a direct comparison with frequency-based alignment mechanisms.
For a fair and controlled comparison, TSAM adopts the same architectural hyperparameters as FBAM, including the convolution kernel size, control-token embedding dimension, and the number of segments/bands, differing only in whether alignment is performed over temporal segments or frequency bands.

\subsection{FBAM forward pass}
\label{app:alg_fbam_forward}
Algorithm~\ref{alg:fbam_forward} summarizes the forward computation of FBAM.
Given temporal features $X\in\mathbb{R}^{B\times T\times D}$, we first apply a real-valued FFT along the temporal axis and split the spectrum into the DC component and the remaining oscillatory coefficients.
The DC term is preserved to avoid injecting subject-specific baseline bias, while the non-DC spectrum is converted into magnitude $A$ and unit-complex phase representation $P$.
We then partition the non-DC frequencies into $K$ contiguous bands $\{B_k\}_{k=1}^{K}$ and compute band-wise statistical descriptors (e.g., moments, extrema, and energy), which are stacked and projected to obtain compact band context embeddings.
Next, we perform a cross-band interaction between learned band tokens and the projected statistics to produce band-aware representations, from which FBAM predicts three sets of modulation parameters: a frequency kernel $W$ and gain $g$ for magnitude correction, and a phase offset $\beta$ for phase adjustment.
For each band, magnitude alignment is implemented via a residual convolutional update $A\leftarrow A + g\cdot(A*W-A)$, while phase alignment is performed by applying a multiplicative unit-complex update and re-normalizing to maintain unit magnitude.
Finally, the adapted magnitude and phase are recombined with the preserved DC component to form the aligned spectrum, and an inverse real FFT transforms the result back to the temporal domain, yielding the frequency-adapted features $X'$.

\begin{algorithm}[htbp]
\caption{FBAM Forward Pass}
\label{alg:fbam_forward}
\footnotesize
\begin{algorithmic}[1]
\REQUIRE Temporal features $X\!\in\!\mathbb{R}^{B\times T\times D}$; number of bands $M$; band tokens $\mathbf{T}\!\in\!\mathbb{R}^{M\times d}$; kernel size $r$.
\ENSURE Frequency-adapted features $X'\!\in\!\mathbb{R}^{B\times T\times D}$.

\STATE \textbf{I. FFT and DC split.}
\STATE $\hat{Z}\leftarrow\mathrm{rFFT}(X^{\top}_{(B,D,T)})$;
       $\hat{Z}_{\mathrm{DC}}\leftarrow \hat{Z}[:,:,0]$,
       $\hat{Z}_{\mathrm{rest}}\leftarrow \hat{Z}[:,:,1{:}]$.
\STATE $A\leftarrow|\hat{Z}_{\mathrm{rest}}|$;
       $P\leftarrow \hat{Z}_{\mathrm{rest}}/(A+\varepsilon)$.
\STATE {\scriptsize Preserve $\hat{Z}_{\mathrm{DC}}$ and adapt only oscillatory components.}

\STATE \textbf{II. Frequency-band partitioning and statistics.}
\STATE Partition frequency indices into $M$ bands $\{B_m\}_{m=1}^M$.
\FOR{$m=1$ \textbf{to} $M$}
    \STATE $A_m\leftarrow A[:,:,B_m]$;
    $\mathbf{s}_m\leftarrow[\log\mu_m,\log\sigma_m,\log\max_m,\log\mathcal{E}_m,f_m]$.
\ENDFOR
\STATE Stack $\mathbf{S}\!\in\!\mathbb{R}^{B\times M\times 5}$ and project
       $\mathbf{C}\leftarrow \mathrm{Proj}(\mathbf{S})$.

\STATE \textbf{III. Cross-band interaction.}
\STATE $\mathbf{O}\leftarrow \mathrm{Interact}(\mathbf{T},\mathbf{C})\!\in\!\mathbb{R}^{B\times M\times d}$.

\STATE \textbf{IV. Band-wise modulation parameters.}
\STATE $W\leftarrow \mathrm{Softmax}(\mathrm{MLP}_w(\mathbf{O}))$,
       $g\leftarrow\tanh(\mathrm{MLP}_g(\mathbf{O}))$,
       $\beta\leftarrow\tanh(\mathrm{MLP}_\phi(\mathbf{O}))$.

\STATE \textbf{V. Band-wise magnitude and phase alignment.}
\FOR{$m=1$ \textbf{to} $M$}
    \STATE \textbf{Magnitude:}
    $\Delta A_m \leftarrow A[:,:,B_m] * W_{:,m,:} - A[:,:,B_m]$.
    \STATE $A[:,:,B_m]\leftarrow A[:,:,B_m] + g_{:,m}\cdot \Delta A_m$.

    \STATE \textbf{Phase:}
    $\Delta P_m \leftarrow \exp(j\,\beta_{:,m})$.
    \STATE $\tilde{P}_m \leftarrow P[:,:,B_m] \odot \Delta P_m$.
    \STATE $P[:,:,B_m]\leftarrow \tilde{P}_m / (|\tilde{P}_m|+\varepsilon)$.
\ENDFOR

\STATE \textbf{VI. Inverse FFT reconstruction.}
\STATE $\hat{Z}'\leftarrow[\hat{Z}_{\mathrm{DC}},\, A\odot P]$;
       $X'\leftarrow \mathrm{irFFT}(\hat{Z}')^{\top}_{(B,T,D)}$.
\STATE \textbf{return} $X'$
\end{algorithmic}
\end{algorithm}

\section{Experimental Setup}
\label{app:experimental_setup}
\subsection{Dataset Preprocessing}

In this appendix, we provide detailed descriptions of the datasets and preprocessing procedures used in our experiments.
All datasets are evaluated under a unified cross-subject setting, where subjects are split at the identity level into training, validation, and test sets, and all samples from the same subject are assigned to the same split.
This setting strictly prevents subject leakage and ensures a fair evaluation of cross-subject generalization.

The datasets include both subject--label aligned disease diagnosis benchmarks and subject--label non-aligned motor imagery benchmarks.
The former category covers APAVA, ADFTD, PTB, and PTB-XL, where each subject is associated with a single diagnostic label.
The latter category includes BCI-2a and BCI-2b, where each subject contributes samples from multiple classes.

\textbf{APAVA}~\cite{apava} is a public EEG dataset collected by the Alzheimer's Patients’ Relatives Association of Valladolid, containing recordings from 23 subjects (12 Alzheimer's disease patients and 11 healthy controls).
Each subject provides multiple 5-second trials recorded with 16 channels at a sampling rate of 256\,Hz, resulting in 1,280 timestamps per trial.
Each trial is first normalized using a standard scaler and then segmented into nine half-overlapping 1-second samples, yielding 5,967 samples in total.
Each sample is associated with its originating subject ID.
We follow a fixed subject-independent split: subjects with IDs $\{15,16,19,20\}$ are used for validation, $\{1,2,17,18\}$ for testing, and the remaining subjects for training.

\textbf{ADFTD}~\cite{adftd} is an EEG dataset consisting of recordings from 88 subjects across three diagnostic categories: Alzheimer's disease (AD), Frontotemporal Dementia (FTD), and healthy controls.
Signals are recorded with 19 channels at a raw sampling rate of 500\,Hz.
Each subject contributes one long continuous trial, with an average duration of approximately 12--14 minutes.
A band-pass filter of 0.5--45\,Hz is applied, after which signals are downsampled to 256\,Hz and segmented into non-overlapping 1-second windows, discarding segments shorter than 1 second.
This process yields 69,752 samples in total.
For subject-independent evaluation, subjects are randomly split into training, validation, and test sets with a 60/20/20 ratio, and all samples from the same subject are assigned to the same split.

\textbf{PTB}~\cite{ptb} is a public ECG dataset originally collected from 290 subjects with 15 channels and multiple cardiac conditions.
Following common practice, we use a subset of 198 subjects including myocardial infarction patients and healthy controls.
Signals are downsampled from 1,000\,Hz to 250\,Hz and normalized using standard scalers.
ECG recordings are further processed into individual heartbeats via R-peak detection across channels, with outlier beats removed.
Each heartbeat is extracted around the R-peak and zero-padded to a fixed length based on the maximum duration, yielding 64,356 heartbeat samples.
Subjects are split into training, validation, and test sets with a 60/20/20 ratio under the subject-independent setting.

\textbf{PTB-XL}~\cite{ptbxl} is a large-scale ECG dataset containing recordings from 18,869 subjects with 12 channels and five diagnostic categories.
To ensure label consistency, subjects with conflicting diagnoses across trials are removed, resulting in 17,596 subjects.
We use the 500\,Hz version of the recordings, which are downsampled to 250\,Hz and normalized.
Each 10-second trial is segmented into non-overlapping 1-second windows, discarding segments shorter than 1 second, yielding 191,400 samples.
We allocate 60\%, 20\%, and 20\% of subjects to the training, validation, and test sets, respectively.

\textbf{BCI Competition IV 2a (BCI-2a)}~\cite{bci2a} is a four-class motor imagery dataset (left hand, right hand, tongue, and foot), containing EEG recordings from 9 subjects.
Signals are originally recorded at 250\,Hz with 22 EEG channels.
Following common practice in motor imagery analysis, we retain only 3 bipolar channels (C3, Cz, and C4) that are most relevant to MI tasks.
Each subject participates in two sessions with 288 trials per session.
We merge samples at the subject level to construct a subject-independent evaluation protocol.
To reduce temporal redundancy, signals are downsampled by a factor of 4.
Subjects $\{6,8\}$ are used for validation, $\{7,9\}$ for testing, and the remaining subjects for training, with all samples from the same subject assigned to the same split.

\textbf{BCI Competition IV 2b (BCI-2b)}~\cite{bci2b} is a binary motor imagery dataset involving left-hand and right-hand tasks.
The dataset includes 9 subjects, each with 5 sessions recorded at 250\,Hz using 3 bipolar channels (C3, Cz, and C4).
The first two sessions contain data without feedback, while the remaining three include visual feedback.
Samples are merged at the subject level and downsampled.
We follow the same subject-independent split as in BCI-2a, using subjects $\{6,8\}$ for validation, $\{7,9\}$ for testing, and the rest for training.

\subsection{Baselines}

\textbf{Transformer}~\cite{Transformer2017} is the vanilla attention-based sequence model that serves as the fundamental backbone for subsequent Transformer variants.

\textbf{Reformer}~\cite{Reformer2020} improves the efficiency of self-attention by introducing locality-sensitive hashing, enabling scalable modeling of long sequences.

\textbf{Informer}~\cite{Informer2021} employs probabilistic sparse attention and a distillation mechanism to reduce computational complexity in long time-series forecasting.

\textbf{Autoformer}~\cite{Autoformer2021} introduces a series decomposition architecture that explicitly separates trend and seasonal components for time-series modeling.

\textbf{FEDformer}~\cite{FEDformer2022} performs attention computation in the frequency domain using Fourier or wavelet bases to improve efficiency and representation capacity.

\textbf{Nonformer}~\cite{Nonformer2022} replaces self-attention with a non-attentive mechanism based on global aggregation to reduce redundancy in sequence modeling.

\textbf{Crossformer}~\cite{Crossformer2022} models long-term dependencies via cross-scale attention by hierarchically aggregating temporal segments.

\textbf{PatchTST}~\cite{PatchTST2023} segments time series into patches and applies Transformer modeling over patch-level representations to enhance local pattern modeling.

\textbf{iTransformer}~\cite{iTransformer2023} inverts the temporal and variate dimensions, treating each variable as a token to better capture inter-variable dependencies.

\textbf{MTST}~\cite{MTST2023} integrates multi-scale temporal modeling with Transformer architectures to capture hierarchical temporal structures.

\textbf{Medformer}~\cite{Medformer2024} is a biomedical Transformer that incorporates multi-granularity temporal modeling tailored for physiological time-series signals.

\textbf{SoftShape}~\cite{softshape2025} is a non-Transformer time-series model that learns shape-based representations and employs a mixture-of-experts architecture for efficient and robust classification.

\subsection{Implementation Details}

All experiments are implemented in PyTorch and conducted under fixed training, validation, and test splits.
We evaluate all models using six macro-averaged metrics: Accuracy, Precision, Recall, F1-score, AUROC, and AUPRC.
For all Transformer-based models (including BioFormer and baselines), we adopt a unified configuration with 6 encoder layers, model dimension $D=128$, and feed-forward hidden dimension $D_{\mathrm{ff}}=256$.
Models are optimized using Adam with a learning rate of $1\times10^{-4}$ and trained for up to 100 epochs, with early stopping triggered if the validation F1-score does not improve for 10 consecutive epochs.
BioFormer, including FBAM and SCLN, is trained end-to-end using only the task classification objective without any auxiliary subject-alignment loss or subject identity supervision.
The model achieving the best validation performance is selected for final testing.
Batch sizes are set to $\{32,128,128,128,128,128\}$ for APAVA, ADFTD, PTB, PTB-XL, BCI-2a, and BCI-2b, respectively.
Each experiment is repeated with five fixed random seeds (41--45), and we report the mean and standard deviation of all metrics.
For SoftShape, we adopt the recommended configuration: two shape blocks, an MoE auxiliary-loss weight of 0.01, and a five-epoch linear warm-up schedule.
All experiments are executed on 8 NVIDIA GeForce RTX~3090 GPUs.

For \textbf{BioFormer}, different group convolution strategies are adopted in FBAM for spectral magnitude modulation depending on the task setting.
For multi-class classification tasks ($K>2$), we use \emph{batch-shared band-wise group convolution}, where convolution kernels are averaged across samples and shared within each frequency band to promote stable and globally consistent spectral modulation.
For binary classification tasks, we instead apply \emph{sample-specific band-wise group convolution} to enable finer-grained adaptation to individual spectral variations.
Unless otherwise specified, FBAM employs a convolution kernel size of $3$, a band-token embedding dimension of $64$, and partitions the non-DC spectrum into $6$ uniformly spaced frequency bands.
For phase alignment, different strategies are applied across datasets: as illustrated in Figure~\ref{fig:PTB_time_frequency_analysis}, cross-subject variability in the PTB dataset is predominantly driven by phase inconsistency in ECG waveforms, and we therefore apply phase-only adjustment; for all other datasets, magnitude and phase are jointly modulated to address more general spectral drift.
Sample Conditional Layer Normalization (SCLN) is applied to all datasets except PTB-XL, as the large scale and high diversity of PTB-XL render standard normalization sufficiently stable, while additional sample-adaptive modulation does not yield consistent gains.
For the remaining datasets, SCLN effectively mitigates residual sample-level distribution shifts, with the residual blending coefficient $\alpha$ set to $0.1$ for APAVA, ADFTD, BCI-2a, and BCI-2b, $0.2$ for PTB, and $0$ for PTB-XL.
Data augmentation is performed by randomly selecting from \texttt{none}, \texttt{jitter}, and \texttt{scale}, where the numeric suffix indicates the augmentation intensity.
The adopted augmentation schemes for APAVA, ADFTD, PTB, PTB-XL, BCI-2a, and BCI-2b are \{\texttt{scale0.1}, \texttt{drop0.25}\}, \{\texttt{drop0.5}\}, \{\texttt{none}, \texttt{drop0.5}\}, \{\texttt{jitter0.2}, \texttt{scale0.2}, \texttt{drop0.5}\}, \{\texttt{scale0.1}, \texttt{drop0.25}\}, and \{\texttt{scale0.1}, \texttt{drop0.25}\}, respectively.

\subsection{Domain Generalization Baselines and Implementation Details}
\label{app:dg_impl}

This appendix details the implementation of domain generalization (DG) baselines evaluated in Section~\ref{sec:dg}.
All methods are evaluated on three backbones—Transformer, Medformer, and EEGNet~\cite{lawhern2018eegnet}—across three datasets (APAVA, ADFTD, and PTB-XL), using identical cross-subject splits and optimization settings.

\textbf{SubjectNorm.}
SubjectNorm is applied as a preprocessing step during data loading.
For each subject, we perform subject-wise $z$-score normalization by computing the mean and standard deviation over all time points and channels belonging to that subject, and normalizing the corresponding samples accordingly.
This operation is performed once before training and does not introduce additional loss terms.

\textbf{MMD.}
MMD is implemented as an auxiliary alignment loss applied to intermediate feature representations.
Features are extracted by temporally averaging the encoder outputs.
During training, if a mini-batch contains samples from at least two subjects, two subjects are randomly selected and the MMD loss is computed between their feature distributions.
The total training loss is
\[
\mathcal{L} = \mathcal{L}_{\text{cls}} + 0.05 \cdot \mathcal{L}_{\text{MMD}} .
\]
If a batch contains samples from only one subject, the MMD loss is skipped.

\textbf{CORAL.}
CORAL follows the same training protocol as MMD, but aligns second-order feature statistics.
The CORAL loss is computed between covariance matrices of features from two randomly selected subjects within a mini-batch.
The final objective is
\[
\mathcal{L} = \mathcal{L}_{\text{cls}} + 0.05 \cdot \mathcal{L}_{\text{CORAL}} .
\]
As with MMD, the alignment loss is omitted when only one subject appears in the batch.

\textbf{GRL.}
For adversarial alignment, we attach a subject discriminator to the encoder output via a gradient reversal layer (GRL).
The discriminator predicts subject identities, while gradients flowing into the encoder are reversed.
The reversal coefficient follows the standard DANN schedule and increases progressively during training.
The training objective is
\[
\mathcal{L} = \mathcal{L}_{\text{cls}} + 0.1 \cdot \mathcal{L}_{\text{subj}} ,
\]
where $\mathcal{L}_{\text{subj}}$ denotes the subject classification loss.

\textbf{MixStyle.}
MixStyle is implemented as a stochastic feature-level augmentation that mixes channel-wise mean and variance statistics across samples within a mini-batch.
For Transformer and Medformer backbones, MixStyle is applied immediately after the embedding layer, while for EEGNet it is inserted after the first convolutional block.
MixStyle does not introduce an additional loss term and is active only during training.

\textbf{DSU.}
DSU perturbs feature statistics by injecting learnable uncertainty into channel-wise mean and variance.
For Transformer-based backbones, DSU is applied to embedding features, and for EEGNet it is inserted after the first convolutional block.
Similar to MixStyle, DSU operates only during training and does not modify the loss function.

\textbf{FBAM.}
FBAM is integrated into the encoder as a frequency-domain alignment module.
FreqBandDrift blocks are inserted between successive encoder layers and operate directly on temporal feature sequences.
FBAM modulates spectral magnitude and phase in a band-wise manner and is trained end-to-end using only the task classification loss, without introducing auxiliary alignment losses.

\section{Comparison with Domain Generalization Methods}
\label{app:dg_comparison}

This section further contextualizes BioFormer with respect to existing domain generalization (DG) methods for cross-subject biomedical time-series (BTS) analysis.

Most DG methods reduce subject discrepancy through adversarial alignment, statistical matching, or feature perturbation~\cite{ganin2016domain,long2015learning,sun2016deep,zhou2021mixstyle,dsu2022}.
Recent BTS-specific approaches, such as DMMR~\cite{wang2024dmmr}, SEDA-EEG~\cite{sedaeeg2024}, and CADAN~\cite{CADAN}, further introduce subject-aware alignment mechanisms.
However, these methods mainly operate in the time domain or latent feature space and generally do not explicitly model the spectral structure of subject variability.

In contrast, BioFormer is motivated by the observation that cross-subject variability in BTS often manifests as structured spectral drift.
Rather than relying on explicit discrepancy minimization, BioFormer directly models and aligns band-wise spectral variation through frequency-aware modulation.

Table~\ref{tab:backbone_comparison_appendix} compares representative Transformer backbones from the perspective of BTS modeling properties.
Most existing architectures mainly improve temporal representation learning, while BioFormer additionally introduces explicit frequency-structured modeling tailored for cross-subject BTS analysis.

\begin{table}[hptb]
\centering
\scriptsize
\setlength{\tabcolsep}{3.2pt}
\renewcommand{\arraystretch}{1.12}
\caption{Comparison of representative Transformer backbones for BTS modeling.}
\label{tab:backbone_comparison_appendix}
\begin{tabular}{lcccc}
\toprule
\textbf{Model}
& \makecell[c]{\textbf{Multi-}\\\textbf{scale}}
& \makecell[c]{\textbf{Multi-}\\\textbf{granularity}}
& \makecell[c]{\textbf{Frequency-aware}\\\textbf{Structure}}
& \makecell[c]{\textbf{Designed}\\\textbf{for BTS}} \\
\midrule
\makecell[l]{Transformer~\cite{Transformer2017} / Reformer~\cite{Reformer2020} /\\ Informer~\cite{Informer2021}}
& $\times$ & $\times$ & $\times$ & $\times$ \\
\makecell[l]{Autoformer~\cite{Autoformer2021} / FEDformer~\cite{FEDformer2022}}
& \checkmark & $\times$ & \checkmark & $\times$ \\
\makecell[l]{Crossformer~\cite{Crossformer2022} / PatchTST~\cite{PatchTST2023} /\\ MTST~\cite{MTST2023}}
& \checkmark & \checkmark & $\times$ & $\times$ \\
iTransformer~\cite{iTransformer2023}
& $\times$ & $\times$ & $\times$ & $\times$ \\
Medformer~\cite{Medformer2024}
& \checkmark & \checkmark & $\times$ & $\times$ \\
\midrule
\textbf{BioFormer (Ours)}
& \checkmark & \checkmark & \checkmark & \checkmark \\
\bottomrule
\end{tabular}
\end{table}

Table~\ref{tab:dg_comparison_appendix} further compares BioFormer with representative DG-based methods.
Unlike generic DG approaches that mainly enforce domain invariance, BioFormer explicitly models the structured spectral nature of subject variability through band-wise frequency alignment.

\begin{table}[hptb]
\centering
\scriptsize
\setlength{\tabcolsep}{3pt}
\renewcommand{\arraystretch}{1.12}
\caption{
Comparison with representative DG methods for cross-subject BTS analysis.
}
\label{tab:dg_comparison_appendix}
\begin{tabular}{lccccc}
\toprule
\textbf{Method}
& \makecell[c]{\textbf{Statistical}\\\textbf{Modeling}}
& \makecell[c]{\textbf{Alignment}}
& \makecell[c]{\textbf{Plug-}\\\textbf{and-Play}}
& \makecell[c]{\textbf{No Subject}\\\textbf{Required}}
& \makecell[c]{\textbf{Spectral}\\\textbf{Modeling}} \\
\midrule
\makecell[l]{GRL / DANN~\cite{ganin2016domain}}
& $\times$ & \checkmark & \checkmark & $\times$ & $\times$ \\
\makecell[l]{MMD~\cite{long2015learning} /\\ CORAL~\cite{sun2016deep}}
& \checkmark & \checkmark & \checkmark & \checkmark & $\times$ \\
\makecell[l]{MixStyle~\cite{zhou2021mixstyle} /\\ DSU~\cite{dsu2022}}
& \checkmark & $\times$ & \checkmark & \checkmark & $\times$ \\
SubjectNorm~\cite{li2022subject}
& \checkmark & $\times$ & \checkmark & $\times$ & $\times$ \\
DMMR~\cite{wang2024dmmr}
& \checkmark & \checkmark & $\times$ & \checkmark & $\times$ \\
\makecell[l]{SEDA-EEG~\cite{sedaeeg2024} /\\ CADAN~\cite{CADAN}}
& $\times$ & \checkmark & $\times$ & $\times$ & $\times$ \\
\midrule
\textbf{BioFormer (Ours)}
& \checkmark & \checkmark & \checkmark & \checkmark & \checkmark \\
\bottomrule

\end{tabular}
\end{table}

Overall, BioFormer differs from existing DG methods by explicitly modeling the structured spectral form of subject variability, rather than treating cross-subject differences as generic domain discrepancy alone.

\section{Theorem: Fourier-Subspace Alignment via Amplitude/Phase Modulation}
\label{app:fourier_alignment}

\textbf{Theorem}. Let $x \in \mathbb{R}^{T}$ be a real-valued time-series and let $\mathcal{F}$ denote the unitary DFT.
For frequency bin $k$, denote the complex Fourier coefficient
$z_k = (\mathcal{F}x)_k = A_k e^{j\phi_k}$ with amplitude $A_k \ge 0$ and phase $\phi_k \in (-\pi,\pi]$.
Consider an \emph{amplitude-phase modulation} that applies a per-bin complex multiplier
\begin{equation}
z_k' = u_k z_k, 
\qquad 
u_k := r_k e^{j\Delta_k},
\label{eq:complex_mod}
\end{equation}
where $r_k \ge 0$ (amplitude scaling) and $\Delta_k$ (phase shift) may depend on the input sample.
Then, for each frequency bin $k$, the mapping in~\eqref{eq:complex_mod} is exactly a \emph{rotation + scaling}
in the 2D real subspace spanned by $\{\cos(2\pi kt/T),\, \sin(2\pi kt/T)\}$.
Moreover, for any chosen template spectrum $z_k^\star$ (e.g., a class-wise canonical spectrum),
there exists a modulation $u_k$ such that $z_k' = z_k^\star$ for all samples with $z_k \neq 0$,
thereby eliminating subject-induced spectral drift on those bins.

\textbf{Proof}

\paragraph{Sin/cos subspace view.}
Since $x$ is real, each frequency component can be written in the real Fourier form
$x(t) = \sum_k \big(a_k^{(c)} \cos(2\pi kt/T) + a_k^{(s)} \sin(2\pi kt/T)\big)$.
The complex coefficient $z_k$ is in one-to-one correspondence with the 2D vector
$\mathbf{a}_k := [a_k^{(c)},\, a_k^{(s)}]^\top$, and $(A_k,\phi_k)$ are simply its polar coordinates.

\paragraph{Amplitude/phase modulation equals scaling+rotation.}
Multiplying $z_k$ by $u_k=r_ke^{j\Delta_k}$ rotates the angle by $\Delta_k$ and scales the magnitude by $r_k$.
In the real 2D coordinates, this is
\begin{equation}
\mathbf{a}_k' 
= r_k 
\begin{bmatrix}
\cos \Delta_k & -\sin \Delta_k\\
\sin \Delta_k & \ \cos \Delta_k
\end{bmatrix}
\mathbf{a}_k,
\label{eq:2d_rot_scale}
\end{equation}
which is precisely rotation + scaling in the $\{\cos,\sin\}$ subspace at frequency $k$.

\paragraph{Existence of an aligning modulation.}
Fix any template $z_k^\star$.
For a sample with $z_k\neq 0$, choose
$u_k := z_k^\star / z_k = (|z_k^\star|/A_k)\, e^{j(\arg z_k^\star - \phi_k)}$.
Then $z_k' = u_k z_k = z_k^\star$ holds by construction, i.e., all samples are perfectly aligned
on that frequency bin (up to bins where $A_k \approx 0$, which can be handled by an $\varepsilon$-stabilized ratio).

Finally, since $\mathcal{F}$ is unitary, aligning spectra implies aligning the corresponding projections in time:
$\|x'-\tilde{x}'\|_2 = \|\mathcal{F}x' - \mathcal{F}\tilde{x}'\|_2$.
Thus, amplitude/phase modulation provides a direct and structured mechanism to reduce cross-subject drift
by aligning coefficients in Fourier subspaces.

\section{Frequency-Band Discriminability (FBD)}
\label{app:fbd}

To quantitatively evaluate cross-subject generalization from a frequency-domain perspective, we define \emph{Frequency-Band Discriminability (FBD)}, a metric that measures the relative strength of task-related spectral differences against subject-induced variability.

\paragraph{Spectral Representation and Band Partition.}
Let $z(t)\in\mathbb{R}^D$ denote the final-layer temporal embedding. For each channel $z_d(t)$, we compute the power spectral density (PSD) via the Fourier transform:
\begin{equation}
\mathrm{PSD}_d(f) = \left|\mathcal{F}\{z_d(t)\}\right|^2 .
\end{equation}
The frequency range $[0, f_s/2]$ is partitioned into non-overlapping \textbf{2 Hz frequency bins}:
\begin{equation}
\mathcal{B} = \{ b_k = [2k, 2k+2) \}.
\end{equation}

For a given trial, the band power in bin $b$ is defined as
\begin{equation}
E_b = \frac{1}{D} \sum_{d=1}^{D} \int_{f \in b} \mathrm{PSD}_d(f)\, df .
\end{equation}
Band powers are further averaged across trials to obtain $\bar{E}_b^{(y,s)}$ for subject $s$ under class $y$.

\paragraph{Intra- and Inter-Class Spectral Discrepancy.}
The \textbf{intra-class discrepancy} measures subject-induced spectral variability within the same class:
\begin{equation}
\mathrm{Intra}(b)
=
\frac{1}{C}
\sum_{y=1}^{C}
\operatorname{Std}_{s}
\big(
\bar{E}_b^{(y,s)}
\big).
\end{equation}

The \textbf{inter-class discrepancy} quantifies task-related spectral separation. Let
\begin{equation}
\mu_b^{(y)} = \frac{1}{S_y} \sum_{s} \bar{E}_b^{(y,s)} ,
\end{equation}
then
\begin{equation}
\mathrm{Inter}(b)
=
\max_{y} \mu_b^{(y)} - \min_{y} \mu_b^{(y)} .
\end{equation}

\paragraph{Frequency-Band Discriminability.}
We define FBD as the ratio between inter- and intra-class discrepancies:
\begin{equation}
\boxed{
\mathrm{FBD}(b)
=
\frac{\mathrm{Inter}(b)}
{\mathrm{Intra}(b) + \varepsilon}
}
\end{equation}
where $\varepsilon$ is a small constant for numerical stability.

An FBD value greater than 1 indicates that task-related spectral differences dominate subject-specific variability in band $b$, while values no greater than 1 suggest subject-driven variations.

\paragraph{Band-Level Aggregation.}
FBD is first computed at the \textbf{2 Hz bin level} for fine-grained analysis. To obtain a more stable summary, we report \textbf{band-level FBD} by averaging FBD values across adjacent bins within a broader frequency range. This aggregation mitigates the effect of isolated peaks caused by near-zero intra-class variance and provides a robust proxy for spectral structural alignment in cross-subject settings.

\begin{corollary}[FBD Improvement via Intra-class Spectral Contraction]
\label{cor:fbd}
Let $\mathbf{a}_k(x)\in\mathbb{R}^2$ denote the real Fourier-subspace coordinates at frequency bin $k$
(i.e., the coefficients on $\{\cos(2\pi kt/T),\,\sin(2\pi kt/T)\}$), and let $y\in\{1,\dots,C\}$ be the task label.
Define the class-wise mean
\[
\boldsymbol{\mu}_{y,k} := \mathbb{E}[\mathbf{a}_k(x)\mid y],
\]
the intra-class discrepancy
\[
\mathrm{Intra}_k := \mathbb{E}\big[\|\mathbf{a}_k(x)-\boldsymbol{\mu}_{y,k}\|_2^2\big],
\]
and the inter-class discrepancy (averaged over class pairs)
\[
\mathrm{Inter}_k := \mathbb{E}_{y\neq y'}\big[\|\boldsymbol{\mu}_{y,k}-\boldsymbol{\mu}_{y',k}\|_2^2\big].
\]
Let $\mathrm{FBD}_k := \mathrm{Inter}_k / \mathrm{Intra}_k$.

Consider a Fourier modulation $\mathbf{a}_k'(x)=r_k(x)\mathbf{R}(\Delta_k(x))\mathbf{a}_k(x)$
(as in Appendix~\ref{app:fourier_alignment}), and assume it satisfies:
\[
\mathrm{Intra}_k' \le \alpha_k\,\mathrm{Intra}_k,\quad 0<\alpha_k<1,
\qquad
\mathrm{Inter}_k' \ge \beta_k\,\mathrm{Inter}_k,\quad \beta_k\ge 1.
\]
Then the FBD improves multiplicatively:
\[
\mathrm{FBD}_k' \;\ge\; \frac{\beta_k}{\alpha_k}\,\mathrm{FBD}_k.
\]
In particular, if inter-class structure is preserved ($\beta_k=1$) while intra-class drift contracts,
then $\mathrm{FBD}_k'$ strictly increases.
\end{corollary}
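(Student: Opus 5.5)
The corollary follows directly from the two assumed inequalities once we check that the ratio is well defined. I will work with $\mathrm{FBD}_k' = \mathrm{Inter}_k'/\mathrm{Intra}_k'$. Here the primed quantities are computed from the modulated coordinates $\mathbf{a}_k'(x)$, using the primed class means $\boldsymbol{\mu}_{y,k}' = \mathbb{E}[\mathbf{a}_k'(x)\mid y]$. The scaling--rotation form of $\mathbf{a}_k'$ from the Theorem in Appendix~\ref{app:fourier_alignment} plays no role beyond guaranteeing that these quantities are defined. It is the structural setting, while the contraction and preservation properties are given as hypotheses.

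\textbf{Nondegeneracy.} I first dispose of degenerate cases. I assume $\mathrm{Intra}_k>0$, which is implicit in $\mathrm{FBD}_k$ being defined. If $\mathrm{Intra}_k'>0$, the ratio is defined. If $\mathrm{Intra}_k'=0$, I interpret $\mathrm{FBD}_k'=+\infty$ (or note that $\mathrm{Inter}_k'\ge\beta_k\mathrm{Inter}_k\ge 0$), so the inequality holds trivially in the extended sense. In the paper's numerical definition the $\varepsilon$ regularizer removes this case entirely. I would state this convention explicitly, since it is the only point requiring care.

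\textbf{Main step.} For $\mathrm{Intra}_k'>0$, I chain the monotonicity of $t\mapsto 1/t$ on $(0,\infty)$ with nonnegativity of the numerator:
\[
\mathrm{FBD}_k'=\frac{\mathrm{Inter}_k'}{\mathrm{Intra}_k'}\ge\frac{\beta_k\,\mathrm{Inter}_k}{\mathrm{Intra}_k'}\ge\frac{\beta_k\,\mathrm{Inter}_k}{\alpha_k\,\mathrm{Intra}_k}=\frac{\beta_k}{\alpha_k}\mathrm{FBD}_k .
\]
The second inequality uses $0<\mathrm{Intra}_k'\le\alpha_k\mathrm{Intra}_k$ and $\beta_k\mathrm{Inter}_k\ge0$.

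\textbf{Special case.} Setting $\beta_k=1$ gives $\mathrm{FBD}_k'\ge\mathrm{FBD}_k/\alpha_k$. This is strictly greater than $\mathrm{FBD}_k$ because $\alpha_k<1$, provided $\mathrm{Inter}_k>0$. I would add that qualification, since strict increase fails when there is no class separation at bin $k$.

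There is no real obstacle here. The only subtlety is handling zero denominators and zero inter-class separation, which I would state as mild standing assumptions.
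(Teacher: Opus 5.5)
Your proposal is correct and follows the paper's sketch: you substitute $\mathrm{Inter}_k'\ge\beta_k\mathrm{Inter}_k$ into the numerator and $\mathrm{Intra}_k'\le\alpha_k\mathrm{Intra}_k$ into the denominator of $\mathrm{FBD}_k'$. Your handling of $\mathrm{Intra}_k'=0$, and your point that the strict increase in the $\beta_k=1$ case needs $\mathrm{Inter}_k>0$, are valid refinements that the paper leaves implicit.
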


\begin{proof}[Proof (sketch)]
By definition,
\[
\mathrm{FBD}_k'=\frac{\mathrm{Inter}_k'}{\mathrm{Intra}_k'}.
\]
Using the assumptions $\mathrm{Inter}_k'\ge\beta_k\mathrm{Inter}_k$ and $\mathrm{Intra}_k'\le\alpha_k\mathrm{Intra}_k$,
we obtain
\[
\mathrm{FBD}_k'
\ge
\frac{\beta_k\mathrm{Inter}_k}{\alpha_k\mathrm{Intra}_k}
=
\frac{\beta_k}{\alpha_k}\mathrm{FBD}_k,
\]
which proves the claim.
\end{proof}

\section{Complete results}
\label{app.complete_results}

\subsection{Main Results}
\label{app:main_results}

In the \textit{Table~\ref{tab:complete_model_comparison}}, we report the complete cross-subject evaluation results of all compared methods.
For each dataset, we provide performance on all metrics (Accuracy, Precision, Recall, F1, AUROC, and AUPRC), together with the mean and standard deviation over multiple random seeds.
These full tables complement the main paper by exposing both average performance and variability across runs, enabling a more comprehensive assessment of robustness under cross-subject evaluation.

\begin{table*}[hptb]
\centering
\scriptsize
\setlength{\tabcolsep}{3.0pt}
\caption{Comparison with attention-based and shape-based time-series models under the \textit{cross-subject} setup.
Results are reported as mean $\pm$ std (\%). Ranking uses only the mean (std is not considered).
Best is highlighted with \bestcell{red} background; second best with \secondcell{blue} background.
Avg. Rank is computed over six metrics (Acc/Prec/Rec/F1/AUROC/AUPRC), lower is better.}
\label{tab:complete_model_comparison}

\resizebox{0.75\linewidth}{!}{

\begin{tabular}{>{\centering\arraybackslash}m{2.2cm} l cccccc c}
\toprule
\textbf{Dataset} & \textbf{Model (Pub.)} & \textbf{Acc} & \textbf{Prec} & \textbf{Rec} & \textbf{F1} & \textbf{AUROC} & \textbf{AUPRC} & \textbf{Avg. Rank}$\downarrow$ \\
\midrule

\multirow{13}{=}{\centering\bfseries APAVA\\(2 classes)}
& Transformer [NeurIPS'17] & 76.30\std{4.72} & 77.64\std{5.95} & 73.09\std{5.01} & 73.75\std{5.38} & 72.50\std{6.60} & 72.23\std{7.60} & 6.67 \\
& Reformer [ICLR'20] & 78.70\std{2.00} & 82.50\std{3.95} & 75.00\std{1.61} & 75.93\std{1.82} & 73.94\std{1.40} & 76.04\std{1.14} & 4.67 \\
& Informer [AAAI'21] & 73.11\std{4.40} & 75.17\std{6.06} & 69.17\std{4.56} & 69.47\std{5.06} & 70.46\std{4.91} & 70.75\std{5.27} & 9.83 \\
& Autoformer [NeurIPS'21] & 68.64\std{1.82} & 68.48\std{2.10} & 68.77\std{2.27} & 68.06\std{1.94} & 75.94\std{3.61} & 74.38\std{4.05} & 10.00 \\
& FEDformer [ICML'22] & 74.94\std{2.15} & 74.59\std{1.50} & 73.56\std{3.55} & 73.51\std{3.39} & 83.72\std{1.97} & 82.94\std{2.37} & 6.00 \\
& Nonformer [NeurIPS'22] & 71.89\std{3.81} & 71.80\std{4.58} & 69.44\std{3.56} & 69.74\std{3.84} & 70.55\std{2.96} & 70.78\std{4.08} & 9.83 \\
& PatchTST [ICLR'23] & 67.03\std{1.65} & 78.76\std{1.28} & 59.91\std{2.02} & 55.97\std{3.10} & 65.65\std{0.28} & 67.99\std{0.76} & 12.00 \\
& Crossformer [ICLR'23] & 73.77\std{1.95} & 79.29\std{4.36} & 68.86\std{1.70} & 68.93\std{1.85} & 72.39\std{3.33} & 72.05\std{3.65} & 8.67 \\
& iTransformer [ICLR'24] & 74.55\std{1.66} & 74.77\std{2.10} & 71.76\std{1.72} & 72.30\std{1.79} & \secondcell{85.59\std{1.55}} & \secondcell{84.39\std{1.57}} & 5.83 \\
& MTST [PMLR, 2024] & 71.14\std{1.59} & 79.30\std{0.97} & 65.27\std{2.28} & 64.01\std{3.16} & 68.87\std{2.34} & 71.06\std{1.60} & 10.33 \\
& Medformer [NeurIPS'24] & 78.74\std{0.64} & 81.11\std{0.84} & 75.40\std{0.66} & 76.31\std{0.71} & 83.20\std{0.91} & 83.66\std{0.92} & 3.50 \\
& SoftShape [ICML'25] & \secondcell{81.22\std{0.62}} & \secondcell{85.65\std{0.22}} & \secondcell{77.55\std{0.87}} & \secondcell{78.71\std{0.91}} & 82.24\std{2.43} & 83.93\std{2.19} & \secondcell{2.67} \\
& BioFormer (Ours) & \bestcell{82.31\std{1.55}} & \bestcell{87.87\std{0.67}} & \bestcell{78.50\std{1.94}} & \bestcell{79.77\std{2.09}} & \bestcell{88.52\std{2.25}} & \bestcell{88.16\std{2.40}} & \bestcell{1.00} \\
\midrule

\multirow{13}{=}{\centering\bfseries ADFTD\\(3 classes)}
& Transformer [NeurIPS'17] & 50.47\std{2.14} & 49.13\std{1.83} & 48.01\std{1.53} & 48.09\std{1.59} & 67.93\std{1.59} & 48.93\std{2.02} & 5.17 \\
& Reformer [ICLR'20] & 50.78\std{1.17} & 49.64\std{1.49} & 49.89\std{1.67} & 47.94\std{0.69} & 69.17\std{1.58} & \secondcell{51.73\std{1.94}} & 3.83 \\
& Informer [AAAI'21] & 48.45\std{1.96} & 46.54\std{1.68} & 46.06\std{1.84} & 45.74\std{1.38} & 65.87\std{1.27} & 47.60\std{1.30} & 8.33 \\
& Autoformer [NeurIPS'21] & 45.25\std{1.48} & 43.67\std{1.94} & 42.96\std{2.03} & 42.59\std{1.85} & 61.02\std{1.82} & 43.10\std{2.30} & 12.00 \\
& FEDformer [ICML'22] & 46.30\std{0.59} & 46.05\std{0.76} & 44.22\std{1.38} & 43.91\std{1.37} & 62.62\std{1.75} & 46.11\std{1.44} & 10.17 \\
& Nonformer [NeurIPS'22] & 49.95\std{1.05} & 47.71\std{0.97} & 47.46\std{1.50} & 46.96\std{1.35} & 66.23\std{1.37} & 47.33\std{1.78} & 7.17 \\
& PatchTST [ICLR'23] & 44.37\std{0.95} & 42.40\std{1.13} & 42.06\std{1.48} & 41.97\std{1.37} & 60.08\std{1.50} & 42.49\std{1.79} & 13.00 \\
& Crossformer [ICLR'23] & 50.45\std{2.31} & 45.57\std{1.63} & 45.88\std{1.82} & 45.50\std{1.70} & 66.45\std{2.03} & 48.33\std{2.05} & 8.17 \\
& iTransformer [ICLR'24] & 52.60\std{1.59} & 46.79\std{1.27} & 47.28\std{1.29} & 46.79\std{1.13} & 67.26\std{1.16} & 49.53\std{1.21} & 6.00 \\
& MTST [PMLR, 2024] & 45.60\std{2.03} & 44.70\std{1.33} & 45.05\std{1.30} & 44.31\std{1.74} & 62.50\std{0.81} & 45.16\std{0.85} & 10.67 \\
& Medformer [NeurIPS'24] & 53.27\std{1.54} & \secondcell{51.02\std{1.57}} & \secondcell{50.71\std{1.55}} & \secondcell{50.65\std{1.51}} & \secondcell{70.93\std{1.19}} & 51.21\std{1.32} & \secondcell{2.33} \\
& SoftShape [ICML'25] & \secondcell{54.60\std{0.82}} & 50.33\std{1.24} & 49.17\std{0.89} & 48.91\std{1.13} & 70.08\std{0.98} & 50.90\std{1.12} & 3.17 \\
& BioFormer (Ours) & \bestcell{56.73\std{2.51}} & \bestcell{54.66\std{2.59}} & \bestcell{54.60\std{3.02}} & \bestcell{53.82\std{3.03}} & \bestcell{74.74\std{2.08}} & \bestcell{58.07\std{2.72}} & \bestcell{1.00} \\
\midrule

\multirow{13}{=}{\centering\bfseries PTB\\(2 classes)}
& Transformer [NeurIPS'17] & 77.37\std{1.02} & 81.84\std{0.66} & 67.14\std{1.80} & 68.47\std{2.19} & 90.08\std{1.76} & 87.22\std{1.68} & 8.33 \\
& Reformer [ICLR'20] & 77.96\std{2.13} & 81.72\std{1.61} & 68.20\std{3.35} & 69.65\std{3.88} & 91.13\std{0.74} & 88.42\std{1.30} & 7.50 \\
& Informer [AAAI'21] & 78.69\std{1.68} & 82.87\std{1.02} & 69.19\std{2.90} & 70.84\std{3.47} & 92.09\std{0.53} & 90.02\std{0.60} & 5.67 \\
& Autoformer [NeurIPS'21] & 73.35\std{2.10} & 72.11\std{2.89} & 63.24\std{3.17} & 63.69\std{3.84} & 78.54\std{3.48} & 74.25\std{3.53} & 13.00 \\
& FEDformer [ICML'22] & 76.05\std{2.54} & 77.58\std{3.61} & 66.10\std{3.55} & 67.14\std{4.37} & 85.93\std{4.31} & 82.59\std{5.42} & 11.33 \\
& Nonformer [NeurIPS'22] & 78.66\std{0.49} & 82.77\std{0.86} & 69.12\std{0.87} & 70.90\std{1.00} & 89.37\std{2.51} & 86.67\std{2.38} & 7.33 \\
& PatchTST [ICLR'23] & 74.74\std{1.62} & 76.94\std{1.51} & 63.89\std{2.71} & 64.36\std{3.38} & 88.79\std{1.91} & 83.39\std{0.96} & 11.33 \\
& Crossformer [ICLR'23] & 80.17\std{3.79} & 85.04\std{1.83} & 71.25\std{6.29} & 72.75\std{7.19} & 88.55\std{3.45} & 87.31\std{3.25} & 6.17 \\
& iTransformer [ICLR'24] & \secondcell{83.89\std{0.71}} & \secondcell{88.25\std{1.18}} & 76.39\std{1.01} & 79.06\std{1.06} & 91.18\std{1.16} & 90.93\std{0.98} & \secondcell{3.00} \\& MTST [PMLR, 2024] & 76.59\std{1.90} & 79.88\std{1.90} & 66.31\std{2.95} & 67.38\std{3.71} & 86.86\std{2.75} & 83.75\std{2.84} & 10.17 \\
& Medformer [NeurIPS'24] & 83.50\std{2.01} & 85.19\std{0.94} & \secondcell{77.11\std{3.39}} & \secondcell{79.18\std{3.31}} & 92.81\std{1.48} & 90.32\std{1.54}  & 3.17 \\
& SoftShape [ICML'25] & 83.72\std{0.83} & 86.70\std{0.67} & 75.74\std{1.28} & 78.42\std{1.30} & \secondcell{94.38\std{0.76}} & \secondcell{93.00\std{0.88}} & \secondcell{3.00} \\
& BioFormer (Ours) & \bestcell{87.73\std{1.73}} & \bestcell{88.71\std{1.28}} & \bestcell{82.63\std{2.80}}  & \bestcell{84.71\std{2.51}} & \bestcell{94.71\std{0.94}} & \bestcell{93.53\std{0.96}} & \bestcell{1.00} \\
\midrule

\multirow{13}{=}{\centering\bfseries PTB-XL\\(5 classes)}
& Transformer [NeurIPS'17] & 70.59\std{0.44} & 61.57\std{0.65} & 57.62\std{0.35} & 59.05\std{0.25} & 88.21\std{0.16} & 63.36\std{0.29} & 9.67 \\
& Reformer [ICLR'20] & 71.72\std{0.43} & 63.12\std{1.02} & 59.20\std{0.75} & 60.69\std{0.18} & 88.80\std{0.24} & 64.72\std{0.47} & 7.17 \\
& Informer [AAAI'21] & 71.43\std{0.32} & 62.64\std{0.60} & 59.12\std{0.47} & 60.44\std{0.43} & 88.65\std{0.19} & 64.76\std{0.17} & 7.83 \\
& Autoformer [NeurIPS'21] & 61.68\std{2.72} & 51.60\std{1.64} & 49.10\std{1.52} & 48.85\std{2.27} & 82.04\std{1.44} & 51.93\std{1.71} & 12.50 \\
& FEDformer [ICML'22] & 57.20\std{9.47} & 52.38\std{6.09} & 49.04\std{7.26} & 47.89\std{8.44} & 82.13\std{4.17} & 52.31\std{7.03} & 12.50 \\
& Nonformer [NeurIPS'22] & 70.56\std{0.55} & 61.57\std{0.66} & 57.75\std{0.72} & 59.10\std{0.66} & 88.32\std{0.36} & 63.40\std{0.79} & 9.17 \\
& PatchTST [ICLR'23] & 73.23\std{0.25} & \bestcell{65.70\std{0.64}} & 60.82\std{0.76} & \secondcell{62.61\std{0.34}} & 89.74\std{0.19} & 67.32\std{0.22} & 3.00 \\
& Crossformer [ICLR'23] & 73.30\std{0.14} & 65.06\std{0.35} & \secondcell{61.23\std{0.33}} & 62.59\std{0.14} & 90.02\std{0.06} & 67.43\std{0.22} & 3.00 \\
& iTransformer [ICLR'24] & 69.28\std{0.22} & 59.59\std{0.45} & 54.62\std{0.18} & 56.20\std{0.19} & 86.71\std{0.10} & 60.27\std{0.21} & 11.00 \\
& MTST [PMLR, 2024] & 72.14\std{0.27} & 63.84\std{0.72} & 60.01\std{0.81} & 61.43\std{0.38} & 88.97\std{0.33} & 65.83\std{0.51} & 6.00 \\
& Medformer [NeurIPS'24] & 72.87\std{0.23} & 64.14\std{0.42} & 60.60\std{0.46} & 62.02\std{0.37} & 89.66\std{0.13} & 66.39\std{0.22} & 5.00 \\
& SoftShape [ICML'25] & \bestcell{73.94\std{0.20}} & \secondcell{65.67\std{0.20}} & \bestcell{61.97\std{0.52}} & \bestcell{62.97\std{0.52}} & \bestcell{90.59\std{0.09}} & \bestcell{68.46\std{0.24}} & \bestcell{1.17} \\
& BioFormer (Ours) & \secondcell{73.37\std{0.39}} & 65.62\std{0.52} & 60.81\std{0.36} & 62.56\std{0.30} & \secondcell{90.37\std{0.12}} & \secondcell{67.61\std{0.35}} & \secondcell{2.83} \\
\midrule

\multirow{13}{=}{\centering\bfseries BCI-2a\\(4 classes)}
& Transformer [NeurIPS'17] & 41.68\std{2.28} & 42.86\std{1.69} & 41.68\std{2.28} & 41.17\std{2.50} & 68.88\std{1.34} & 43.37\std{1.85} & 8.17 \\
& Reformer [ICLR'20] & \secondcell{42.69\std{0.91}} & 43.05\std{1.18} & \secondcell{42.69\std{0.91}} & \secondcell{42.52\std{0.94}} & \secondcell{69.64\std{1.20}} & 44.15\std{1.56} & \secondcell{2.67} \\
& Informer [AAAI'21] & 42.52\std{1.10} & 43.24\std{1.21} & 42.52\std{1.10} & 42.24\std{0.99} & 69.20\std{1.27} & 43.72\std{1.69} & 4.50 \\
& Autoformer [NeurIPS'21] & 27.43\std{0.72} & 27.33\std{0.82} & 27.43\std{0.72} & 27.05\std{0.79} & 53.04\std{0.43} & 27.16\std{0.31} & 13.00 \\
& FEDformer [ICML'22] & 29.06\std{1.00} & 29.17\std{1.12} & 29.06\std{1.00} & 28.72\std{1.04} & 54.58\std{0.25} & 28.75\std{0.19} & 12.00 \\
& Nonformer [NeurIPS'22] & 42.67\std{0.63} & 43.60\std{0.95} & 42.67\std{0.63} & 42.51\std{0.70} & 69.31\std{0.49} & 43.98\std{0.67} & 3.17 \\
& PatchTST [ICLR'23] & 40.64\std{1.11} & 40.78\std{1.29} & 40.64\std{1.11} & 40.13\std{1.30} & 67.63\std{0.76} & 41.05\std{0.88} & 8.83 \\
& Crossformer [ICLR'23] & 42.48\std{2.90} & \secondcell{43.95\std{3.47}} & 42.48\std{2.90} & 41.61\std{3.05} & 69.22\std{2.75} & 43.44\std{3.16} & 4.83 \\
& iTransformer [ICLR'24] & 33.58\std{1.44} & 33.88\std{1.35} & 33.58\std{1.44} & 33.23\std{1.47} & 59.57\std{0.75} & 32.87\std{0.87} & 11.00 \\
& MTST [PMLR, 2024] & 39.31\std{0.99} & 39.72\std{0.77} & 39.31\std{0.99} & 39.06\std{1.11} & 66.61\std{0.55} & 39.79\std{0.46} & 9.83 \\
& Medformer [NeurIPS'24] & 42.40\std{0.69} & 42.90\std{0.72} & 42.40\std{0.69} & 42.29\std{0.72} & 69.09\std{0.23} & 43.51\std{0.31} & 4.83 \\
& SoftShape [ICML'25] & 40.87\std{7.22} & 41.68\std{7.35} & 40.87\std{7.22} & 38.69\std{8.67} & 68.57\std{6.07} & \secondcell{44.23\std{8.13}} & 8.17 \\
& BioFormer (Ours) & \bestcell{46.68\std{1.56}} & \bestcell{47.24\std{1.64}} & \bestcell{46.68\std{1.56}} & \bestcell{46.35\std{1.59}} & \bestcell{72.49\std{1.12}} & \bestcell{47.63\std{1.53}} & \bestcell{1.00} \\
\midrule

\multirow{13}{=}{\centering\bfseries BCI-2b\\(2 classes)}
& Transformer [NeurIPS'17] & 70.86\std{1.22} & 70.96\std{1.28} & 70.86\std{1.22} & 70.83\std{1.20} & 77.82\std{1.63} & 77.23\std{1.74} & 6.67 \\
& Reformer [ICLR'20]       & 70.90\std{0.46} & 71.13\std{0.37} & 70.90\std{0.46} & 70.82\std{0.50} & 78.34\std{0.80} & 77.73\std{0.98} & 5.67 \\
& Informer [AAAI'21]       & 70.57\std{0.68} & 70.64\std{0.66} & 70.57\std{0.68} & 70.54\std{0.69} & 77.86\std{0.70} & 77.07\std{0.73} & 8.33 \\
& Autoformer [NeurIPS'21]  & 58.00\std{0.47} & 58.04\std{0.51} & 58.00\std{0.47} & 57.95\std{0.44} & 61.06\std{0.62} & 59.46\std{0.73} & 13.00 \\
& FEDformer [ICML'22]      & 60.99\std{1.64} & 61.18\std{1.66} & 60.99\std{1.64} & 60.82\std{1.68} & 66.19\std{2.48} & 65.37\std{2.57} & 12.00 \\
& Nonformer [NeurIPS'22]   & 70.71\std{0.82} & 71.02\std{0.79} & 70.71\std{0.82} & 70.60\std{0.86} & 78.79\std{0.92} & 78.37\std{0.92} & 6.00 \\
& PatchTST [ICLR'23]       & \bestcell{73.26\std{0.65}} & \bestcell{73.61\std{0.69}} & \bestcell{73.26\std{0.65}} & \bestcell{73.17\std{0.68}} & \bestcell{81.69\std{0.33}} & \bestcell{81.36\std{0.41}} & \bestcell{1.00} \\
& Crossformer [ICLR'23]    & 70.58\std{1.21} & 71.16\std{1.57} & 70.58\std{1.21} & 70.39\std{1.14} & 77.54\std{1.03} & 76.88\std{0.76} & 8.00 \\
& iTransformer [ICLR'24]   & 65.94\std{0.61} & 66.15\std{0.68} & 65.94\std{0.61} & 65.84\std{0.58} & 72.17\std{1.26} & 71.53\std{1.43} & 11.00 \\
& MTST [PMLR, 2024]        & 71.53\std{0.56} & 71.63\std{0.61} & 71.53\std{0.56} & 71.50\std{0.55} & 79.48\std{0.36} & 79.27\std{0.45} & 3.00 \\
& Medformer [NeurIPS'24]   & 69.40\std{0.35} & 69.66\std{0.33} & 69.40\std{0.35} & 69.30\std{0.36} & 76.03\std{0.45} & 75.30\std{0.40} & 10.00 \\
& SoftShape [ICML'25]      & 71.28\std{1.02} & 71.61\std{0.95} & 71.28\std{1.02} & 71.16\std{1.08} & 78.48\std{1.66} & 77.88\std{1.86} & 4.33 \\
& BioFormer (Ours)         & \secondcell{72.83\std{1.12}} & \secondcell{72.92\std{1.08}} & \secondcell{72.83\std{1.12}} & \secondcell{72.81\std{1.13}} & \secondcell{80.38\std{1.04}} & \secondcell{79.98\std{1.22}} & \secondcell{2.00} \\
\bottomrule
\end{tabular}

}
\end{table*}

\subsection{Visual Analysis of FBAM Alignment Effectiveness}
\label{app:visual_fbam}

To qualitatively examine the alignment behavior of FBAM, we visualize intermediate representations on the APAVA dataset in the time domain, as shown in Figure~\ref{fig:APAVA_time_after_FBAM}.
The figure compares temporal embeddings before and after a single FBAM layer for samples sharing the same label but originating from different subjects.
Before FBAM, time-domain representations exhibit substantial inter-subject variability, manifested as high-variance regions at specific temporal locations.
Such variability reflects subject-dependent signal characteristics rather than label-relevant patterns.
After applying FBAM, inter-subject variance is markedly reduced across the temporal axis for both label categories.
Notably, this reduction is accompanied by a systematic reshaping of the temporal waveforms, suggesting that FBAM actively re-organizes subject-dependent variations rather than merely suppressing them, while still retaining label-relevant temporal patterns.
These observations provide qualitative evidence that frequency-band alignment in FBAM translates into more consistent and subject-invariant representations in the time domain, supporting its effectiveness for cross-subject generalization.

\begin{figure}[htbp]
\centering
\includegraphics[width=0.8\linewidth]{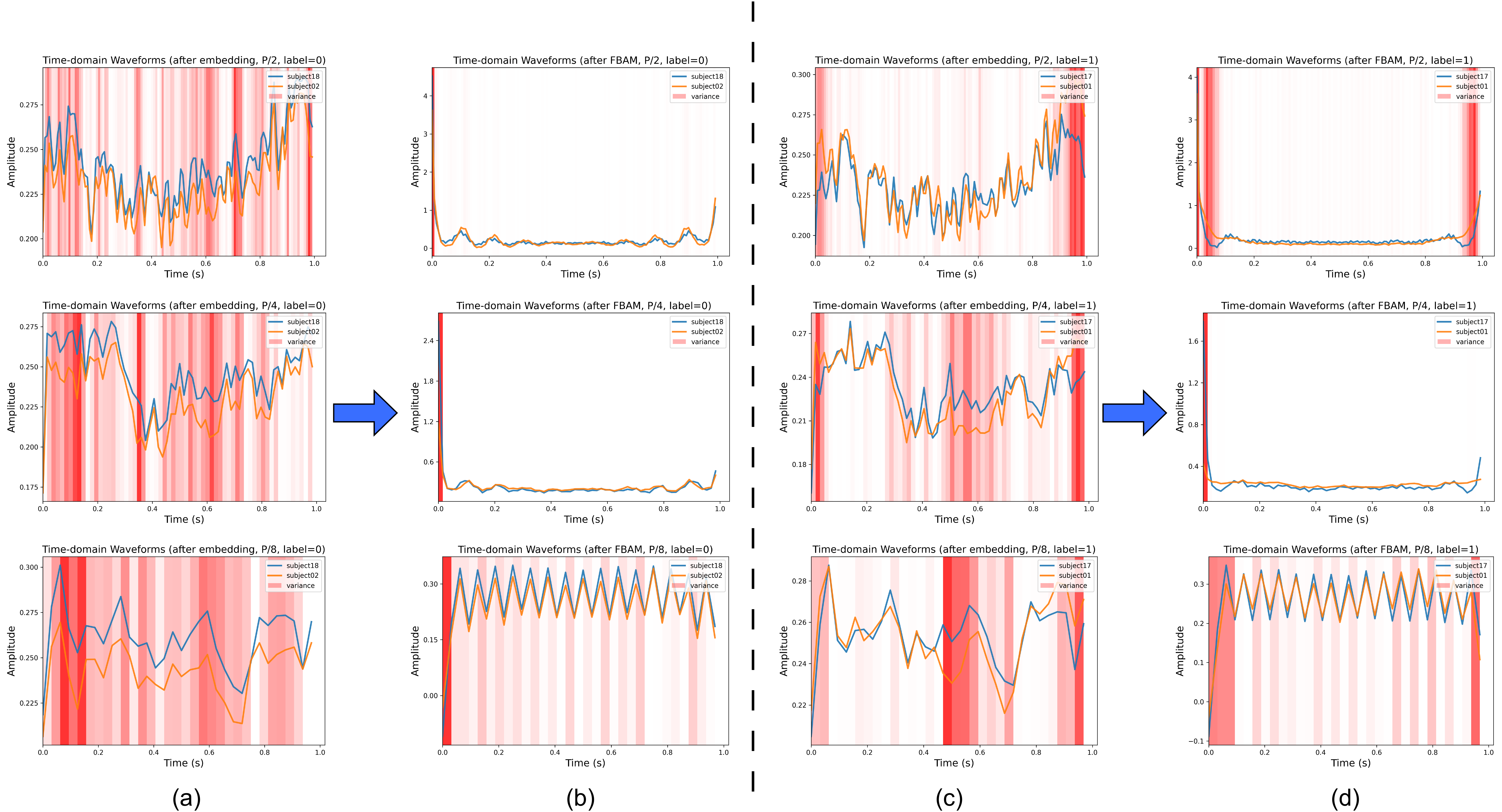}
\caption{\textbf{Time-domain analysis of FBAM on the APAVA dataset.}
Blue arrows indicate the temporal representations before and after a single FBAM layer.
Panels (a) and (b) correspond to samples with label~0, while (c) and (d) correspond to label~1.
Colored curves show time-domain waveforms from different subjects, and the red background encodes inter-subject variance, with darker regions indicating larger variance.
Compared to the embeddings before FBAM, the representations after FBAM exhibit substantially reduced inter-subject variance in the time domain, suggesting that FBAM effectively suppresses subject-specific temporal discrepancies.}
\label{fig:APAVA_time_after_FBAM}
\end{figure}

\subsection{The Analysis of FBAM}
\label{app:The_Analysis_of_FBAM}

\textit{Table~\ref{tab:fbam_ablation}} reports the ablation results on different configurations of the Frequency–Band Attention Module (FBAM) across four datasets.
We investigate how each design choice affects cross-subject generalization in both EEG and ECG modalities.

\textbf{Custom Band Division (Variant 1).}
We evaluate a variant that replaces uniform frequency band partitioning with dataset-specific band divisions.
Figure~\ref{fig:freq_partition} visualizes the frequency spectra of multi-scale tokens from APAVA, ADFTD, PTB, and PTB-XL at three temporal resolutions ($P/2$, $P/4$, $P/8$), where DC components are removed for clarity.
Across datasets, low-frequency components dominate signal energy, while mid- and high-frequency regions exhibit dataset- and label-dependent variations.

Motivated by these observations, we manually define dataset-specific frequency bands to better reflect their average power distributions.
The resulting band boundaries for each dataset and scale are summarized below:

\begin{itemize}
  \setlength\itemsep{-0.3em}
  \item APAVA: \texttt{[0,5,12,20,32,48,64]}, \texttt{[0,5,12,20,32]}, \texttt{[0,4,8,12,16]}.
  \item ADFTD: \texttt{[0,5,9,12,32,48,64]}, \texttt{[0,4,7,12,32]}, \texttt{[0,3,8,12,16]}.
  \item PTB: \texttt{[0,5,10,15,20,48,75]}, \texttt{[0,5,10,20,37]}, \texttt{[0,3,6,10,19]}.
  \item PTB-XL: \texttt{[0,4,8,15,20,48,62]}, \texttt{[0,3,5,10,31]}, \texttt{[0,2,6,9,16]}.
\end{itemize}

Although these custom band divisions slightly improve adaptability, the overall gains remain marginal.
This suggests that uniform band partitioning already provides a robust and effective prior for FBAM across diverse biomedical datasets.

\begin{figure}[htbp]
\centering
\includegraphics[width=0.8\linewidth]{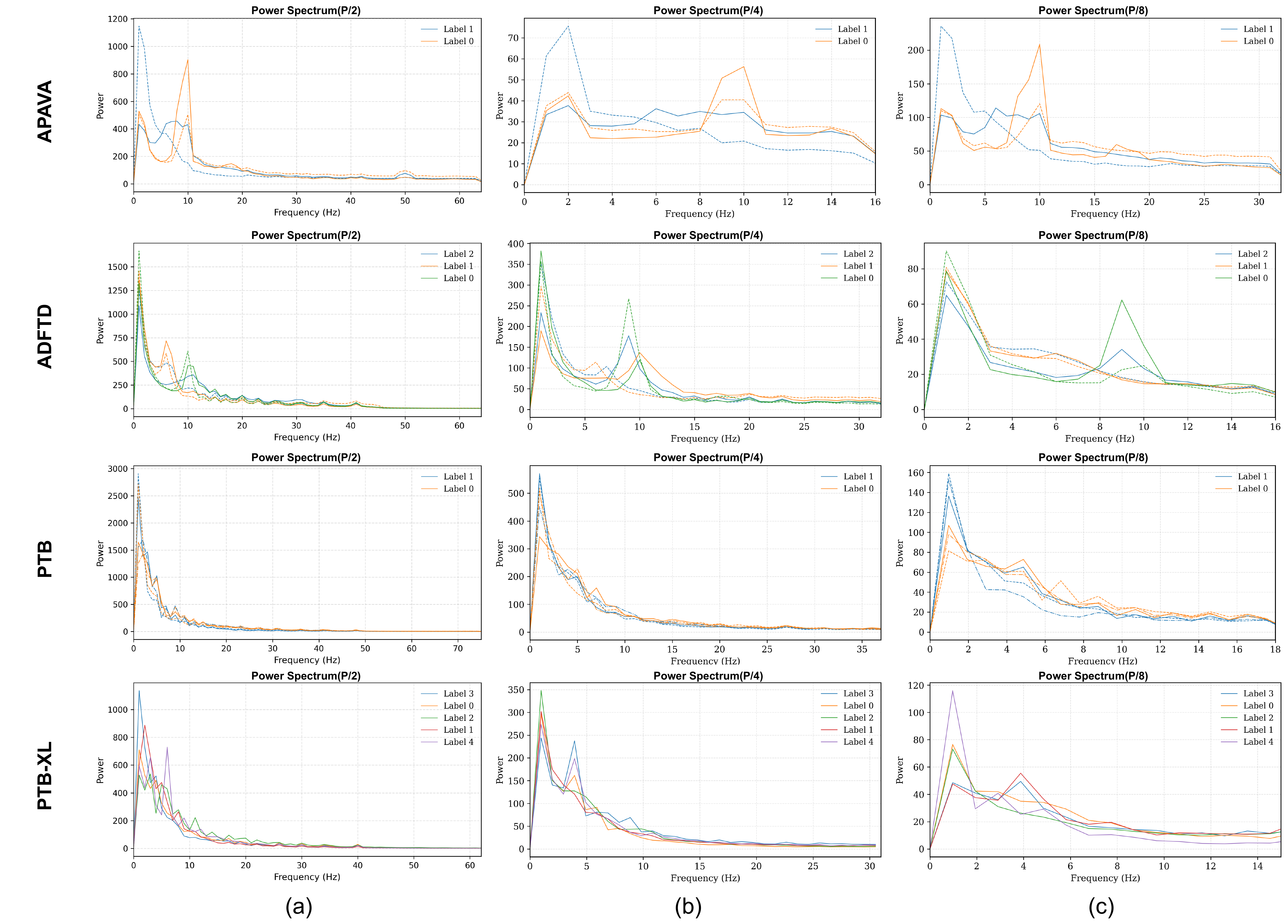}
\caption{
Frequency spectrum visualization across three pyramid scales:(a) $P/2$, (b) $P/4$, and (c) $P/8$.Different line styles represent objects with identical labels. It can be observed that low-frequency energy dominates across all datasets, while mid-to-high-frequency responses exhibit differences between modalities.}
\label{fig:freq_partition}
\end{figure}

\textbf{DC Learning (Variant 2).}
This variant introduces learnable weights for the DC component instead of keeping it fixed.
Across datasets, this modification yields consistent performance gains, with particularly notable improvements on PTB.
This suggests that low-frequency baseline dynamics, while often treated as nuisance, can encode discriminative information when adaptively modeled.

\textbf{Magnitude Residual (Variant 3).}
This variant removes the residual formulation and directly applies magnitude-domain convolution.
Performance degrades consistently across all datasets, indicating that residual correction is critical for stabilizing frequency responses and preventing over-distortion during spectral modulation.

\textbf{Static Band Modulation (Variant 4).}
This variant replaces the cross-attention–driven parameter generation with static, learnable band-wise parameters.
Specifically, the modulation coefficients for magnitude and phase, including the dynamic convolution kernel $w_k$, residual gain $g_k$, and phase offset $\beta_k$, are initialized as free parameters and optimized directly, without conditioning on band-level representations.
As shown in \textit{Table~\ref{tab:fbam_ablation}}, this modification leads to mixed performance changes across datasets, with noticeable degradation on ADFTD and limited gains on PTB-XL.
These results indicate that fixed band-wise modulation lacks sufficient adaptivity to accommodate subject-dependent spectral variations, highlighting the importance of data-driven parameter generation conditioned on spectral context.

\textbf{Intra-Band Self-Attention (Variant 5).}
This variant replaces the cross-attention mechanism between learnable band tokens and band statistics with a self-attention operation performed directly among band representations.
In this setting, band interactions are modeled symmetrically without an explicit token-based query structure.
While this design preserves interactions across frequency bands, it consistently underperforms the cross-attention formulation in \textit{Table~\ref{tab:fbam_ablation}}, especially on APAVA and ADFTD.
The results suggest that cross-attention with dedicated band tokens provides a more effective inductive bias for extracting global modulation cues from band-level summaries than self-attention alone.

\begin{table}[htbp]
\centering
\scriptsize
\setlength{\tabcolsep}{4pt}
\caption{
Ablation study on different configurations of the Frequency–Band Attention Module (FBAM).
Columns correspond to four datasets, each reporting Accuracy and F1-score (\%, mean $\pm$ std).
The best results for each dataset are highlighted in \textbf{bold}.
}
\label{tab:fbam_ablation}
\begin{tabular}{lcc cc cc cc}
\toprule
\multirow{2}{*}{\textbf{Model Variant}}
& \multicolumn{2}{c}{\textbf{APAVA}}
& \multicolumn{2}{c}{\textbf{PTB}}
& \multicolumn{2}{c}{\textbf{ADFTD}}
& \multicolumn{2}{c}{\textbf{PTB-XL}} \\
\cmidrule(lr){2-3}\cmidrule(lr){4-5}\cmidrule(lr){6-7}\cmidrule(lr){8-9}
& \textbf{Acc.} & \textbf{F1}
& \textbf{Acc.} & \textbf{F1}
& \textbf{Acc.} & \textbf{F1}
& \textbf{Acc.} & \textbf{F1} \\
\midrule
Variant 1
& 80.07$\pm$2.19 & 76.83$\pm$3.20
& 84.81$\pm$1.58 & 80.58$\pm$2.59
& 55.59$\pm$1.02 & 52.52$\pm$0.56
& 73.15$\pm$0.49 & 62.50$\pm$0.69 \\
Variant 2
& 80.89$\pm$2.05 & 78.08$\pm$2.64
& 86.64$\pm$1.23 & 83.40$\pm$1.79
& \textbf{56.83$\pm$2.97} & 53.59$\pm$3.31
& 73.00$\pm$0.48 & 62.14$\pm$0.48 \\
Variant 3
& 71.05$\pm$5.88 & 68.74$\pm$4.40
& 84.37$\pm$1.20 & 80.09$\pm$1.90
& 55.93$\pm$1.14 & 53.40$\pm$1.03
& 72.72$\pm$0.73 & 61.42$\pm$1.01 \\
Variant 4
& 81.65$\pm$0.89 & 78.92$\pm$1.21
& 85.36$\pm$0.87 & 81.28$\pm$1.35
& 51.97$\pm$1.42 & 48.28$\pm$1.17
& 72.80$\pm$0.26 & \textbf{62.91$\pm$0.33} \\
Variant 5
& 79.57$\pm$1.26 & 76.11$\pm$1.79
& 86.82$\pm$1.30 & 83.49$\pm$1.98
& 54.77$\pm$3.34 & 51.42$\pm$3.40
& 72.91$\pm$0.38 & 61.88$\pm$0.34 \\
\textbf{BioFormer (Ours)}
& \textbf{82.31$\pm$1.55} & \textbf{79.77$\pm$2.09}
& \textbf{87.73$\pm$1.73} & \textbf{84.71$\pm$2.51}
& 56.73$\pm$2.51 & \textbf{53.82$\pm$3.03}
& \textbf{73.37$\pm$0.39} & 62.56$\pm$0.30 \\
\bottomrule
\end{tabular}
\end{table}

Finally, \textbf{BioFormer} achieves the best overall performance.
Results from Variants~1–5 indicate that simple uniform frequency-band partitioning is sufficient for FBAM, and that explicit learning of the DC component is unnecessary, while residual magnitude correction, input-conditioned band modulation, and cross-band attention each play a critical role in enabling effective frequency-domain alignment.

\subsection{Ablation on Band-Level Statistics for the Frequency Descriptor.}
\label{app:ablation_fbam}

FBAM summarizes each frequency band using a compact descriptor $\mathbf{b}_k$ to characterize band-wise distributional states and guide adaptive modulation.
\textit{Table~\ref{tab:band_stat_ablation_matrix}} evaluates different choices of band-level statistics under subject-independent evaluation.
Using only low-order statistics (\textit{LogMom}: log-mean, log-variance, and peak magnitude) already yields strong and stable performance across datasets, indicating that major cross-subject drift can be effectively captured by coarse distributional cues.
Augmenting \textit{LogMom} with peak location or total band energy provides additional structural information but often leads to inferior or less stable results, suggesting sensitivity to frequency discretization (\textit{PeakLoc}) or excessive coarsening (\textit{BandEnergy}).

In contrast, \textbf{BioFormer}, which combines five complementary statistics—log-mean, log-variance, peak magnitude, peak location, and band energy—and integrates them through cross-band attention, achieves the best overall performance on APAVA and PTB while remaining competitive on ADFTD.
These results support our design choice of leveraging a richer yet structured statistical descriptor together with learnable inter-band modeling, rather than relying on a single handcrafted summary.

\begin{table}[htbp]
\centering
\scriptsize
\setlength{\tabcolsep}{4pt}
\caption{
Performance comparison under subject-independent evaluation.
Different variants correspond to different choices of band-level statistics used to construct the frequency descriptor $\mathbf{b}_k$.
\textit{LogMom} uses $\{\log \mu_k, \log \sigma_k, \mathrm{peak}_k\}$;
\textit{LogMom+PeakLoc} further incorporates the peak frequency location;
\textit{LogMom+BandEnergy} replaces the peak location with the total band energy.
Accuracy and F1-score (\%) are reported as mean $\pm$ standard deviation.
Best results for each dataset are highlighted in \textbf{bold}.
}
\label{tab:band_stat_ablation_matrix}
\begin{tabular}{lcc cc cc}
\toprule
\multirow{2}{*}{\textbf{Band Descriptor}}
& \multicolumn{2}{c}{\textbf{APAVA}}
& \multicolumn{2}{c}{\textbf{PTB}}
& \multicolumn{2}{c}{\textbf{ADFTD}} \\
\cmidrule(lr){2-3}\cmidrule(lr){4-5}\cmidrule(lr){6-7}
& \textbf{Acc.} & \textbf{F1}
& \textbf{Acc.} & \textbf{F1}
& \textbf{Acc.} & \textbf{F1} \\
\midrule
LogMom
& 80.87$\pm$0.88 & 77.80$\pm$1.16
& 85.42$\pm$2.70 & 81.16$\pm$4.39
& \textbf{57.16$\pm$2.53} & \textbf{53.97$\pm$2.11} \\
LogMom+PeakLoc
& 79.11$\pm$2.10 & 75.39$\pm$2.96
& 86.31$\pm$2.83 & 82.70$\pm$4.00
& 55.87$\pm$1.36 & 51.63$\pm$1.92 \\
LogMom+BandEnergy
& 77.78$\pm$4.83 & 73.07$\pm$7.68
& 85.64$\pm$1.39 & 81.70$\pm$2.13
& 55.53$\pm$2.37 & 51.76$\pm$3.02 \\
BioFormer (Ours)
& \textbf{82.31$\pm$1.55} & \textbf{79.77$\pm$2.09}
& \textbf{87.73$\pm$1.73} & \textbf{84.71$\pm$2.51}
& 56.73$\pm$2.51 & 53.82$\pm$3.03 \\
\bottomrule
\end{tabular}
\end{table}

\subsection{Comparison with BTS-specific DG Methods}
\label{app:dmmr_comparison}

To further contextualize BioFormer with respect to BTS-specific domain generalization methods, we additionally compare against DMMR~\cite{wang2024dmmr}, a representative framework designed for cross-subject physiological signal analysis.

Table~\ref{tab:dmmr_appendix} reports the results under the same cross-subject evaluation protocol.
BioFormer consistently outperforms DMMR across all evaluated datasets.
In particular, the performance gap is substantial on APAVA and ADFTD, indicating that explicit spectral-structure modeling provides stronger robustness under limited-data settings.

We note that DMMR relies on a self-supervised reconstruction-based pretraining strategy, and its performance is therefore more sensitive to the availability of sufficiently large-scale physiological data.
In contrast, BioFormer is trained end-to-end using only the downstream classification objective without additional pretraining.

\begin{table}[htbp]
\centering
\scriptsize
\setlength{\tabcolsep}{4pt}
\caption{
Comparison with the BTS-specific DG method DMMR.
}
\label{tab:dmmr_appendix}
\begin{tabular}{lcccc}
\toprule
\textbf{Dataset}
& \textbf{DMMR Acc.}
& \textbf{DMMR F1}
& \textbf{BioFormer Acc.}
& \textbf{BioFormer F1} \\
\midrule
APAVA
& 64.74
& 63.45
& \textbf{82.31}
& \textbf{79.77} \\
PTB
& 77.06
& 68.31
& \textbf{87.73}
& \textbf{84.71} \\
ADFTD
& 52.21
& 40.83
& \textbf{56.73}
& \textbf{53.82} \\
PTB-XL
& 69.53
& 53.14
& \textbf{73.37}
& \textbf{62.56} \\
\bottomrule
\end{tabular}
\end{table}

\subsection{Additional Analysis on Subject-specific Variability}
\label{app:subject_silhouette}

To complement the subject separability analysis in Sec.~\ref{sec:subject_separability}, we further evaluate silhouette statistics on the learned representations extracted from unseen test subjects.

We report:
(i) \textbf{Label Silhouette}, which measures task-class separability,
and
(ii) \textbf{Subject Silhouette}, which measures subject-wise clustering tendency.
Higher Label Silhouette indicates stronger semantic structure, while lower Subject Silhouette indicates weaker subject dependency.

\begin{table}[htbp]
\centering
\scriptsize
\setlength{\tabcolsep}{4pt}
\caption{
\textbf{Silhouette analysis of learned representations.}
Higher Label Silhouette indicates better task separability, while lower Subject Silhouette indicates weaker subject-specific clustering.
}
\label{tab:silhouette_analysis}
\begin{tabular}{ccccc}
\toprule
\textbf{Dataset} &
\textbf{Metric} &
\textbf{Trans.+GRL} &
\textbf{SoftShape} &
\textbf{BioFormer} \\
\midrule
\multirow{2}{*}{APAVA}
& Label $\uparrow$
& 0.1137
& 0.1699
& \textbf{0.3035} \\
& Subject $\downarrow$
& 0.3915
& 0.4002
& \textbf{0.2904} \\
\midrule
\multirow{2}{*}{PTB}
& Label $\uparrow$
& 0.3820
& 0.4727
& \textbf{0.4783} \\
& Subject $\downarrow$
& -0.2442
& \textbf{-0.2468}
& -0.2452 \\
\bottomrule
\end{tabular}
\end{table}

As shown in Table~\ref{tab:silhouette_analysis}, BioFormer consistently achieves stronger task-related clustering while maintaining weak subject-wise grouping.
On APAVA, BioFormer obtains both the highest Label Silhouette and the lowest Subject Silhouette, indicating improved task-centric organization with reduced subject dependency.
On PTB, all methods exhibit weak subject clustering, while BioFormer still preserves the strongest task separability without explicit subject supervision.

\begin{figure}[htbp]
    \centering
    \includegraphics[width=0.5\linewidth]{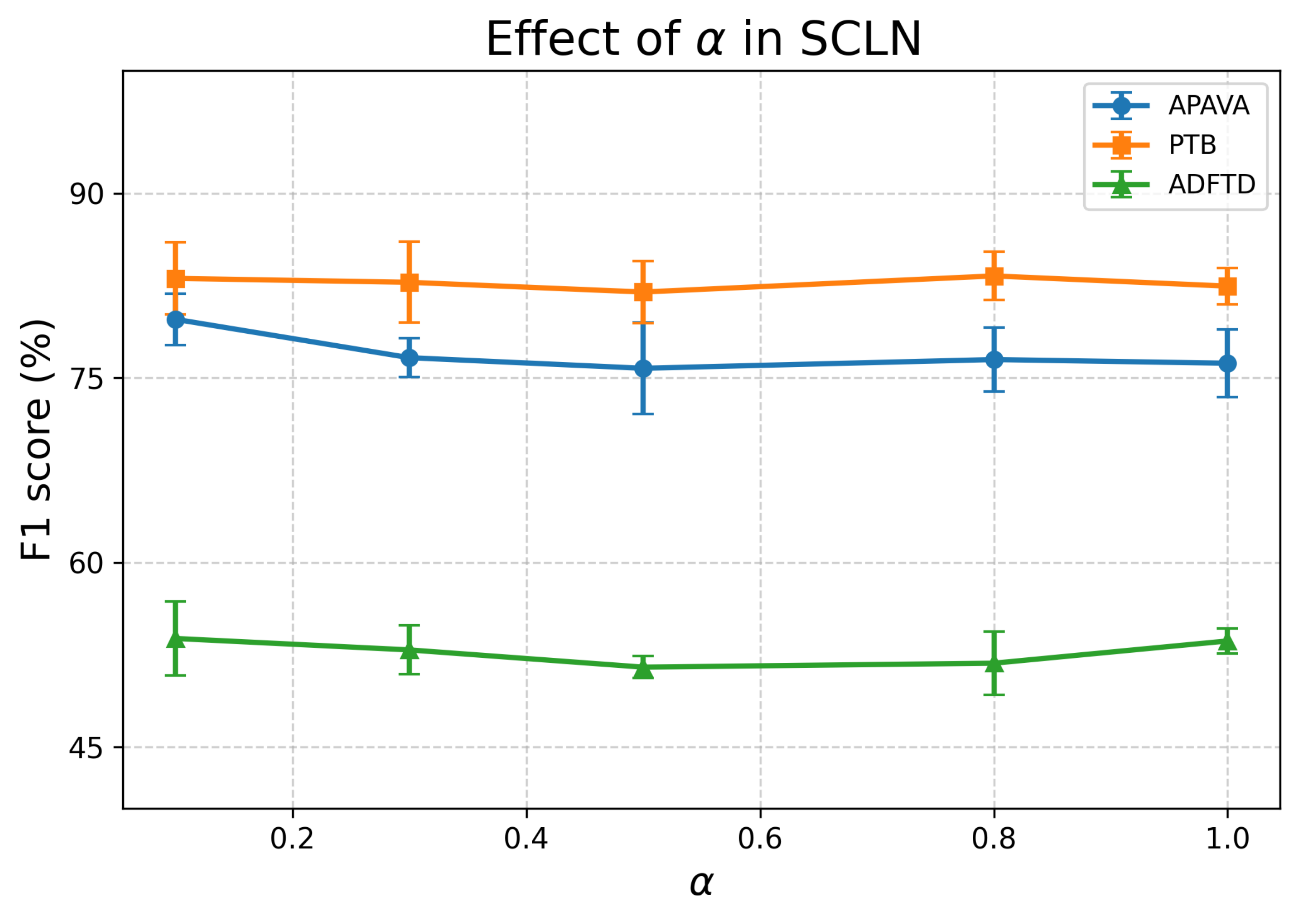}
    \caption{Effect of the weighting factor $\alpha$ in SCLN on different datasets.
    F1 scores are reported as the mean $\pm$ standard deviation over five independent runs.
    }
    \label{fig:alpha_scln}
\end{figure}

\subsection{Sensitivity Analysis of $\alpha$ in SCLN}
\label{app:alpha_sensitivity}

In Sample Conditional Layer Normalization (SCLN), the weighting factor $\alpha \in [0,1]$ controls the interpolation between the original representation and the sample-adaptively normalized one:

\begin{equation}
\mathbf{h}_{\mathrm{out}}=(1-\alpha)\mathbf{LN}(h)+\alpha\big(\boldsymbol{\gamma}\odot \mathrm{LN}(\mathbf{h})+\boldsymbol{\beta}\big).
\end{equation}

As shown in Figure~\ref{fig:alpha_scln}, the performance exhibits a non-monotonic dependency on $\alpha$.
Specifically, stronger performance is consistently observed when $\alpha$ lies in the lower range (0--0.3) or the higher range (0.8--1.0), while intermediate values tend to degrade performance across datasets.
This behavior suggests two effective regimes of SCLN.
When $\alpha$ is small, SCLN acts as a mild regularizer that preserves the original encoder representation while providing limited correction for sample-level drift.
When $\alpha$ is large, normalization dominates and aggressively suppresses residual distributional variations introduced by FBAM.
In contrast, intermediate values of $\alpha$ may be insufficient to fully remove drift while simultaneously perturbing discriminative structure, leading to suboptimal trade-offs.
Overall, this analysis indicates that SCLN is robust across a wide range of settings and that its benefit arises from either minimally invasive calibration or strong normalization, rather than partial modulation.

\subsection{Computational Complexity Comparison}
\label{app:Computational_Complexity}

\begin{figure}[htbp]
    \centering
    \includegraphics[width=0.5\linewidth]{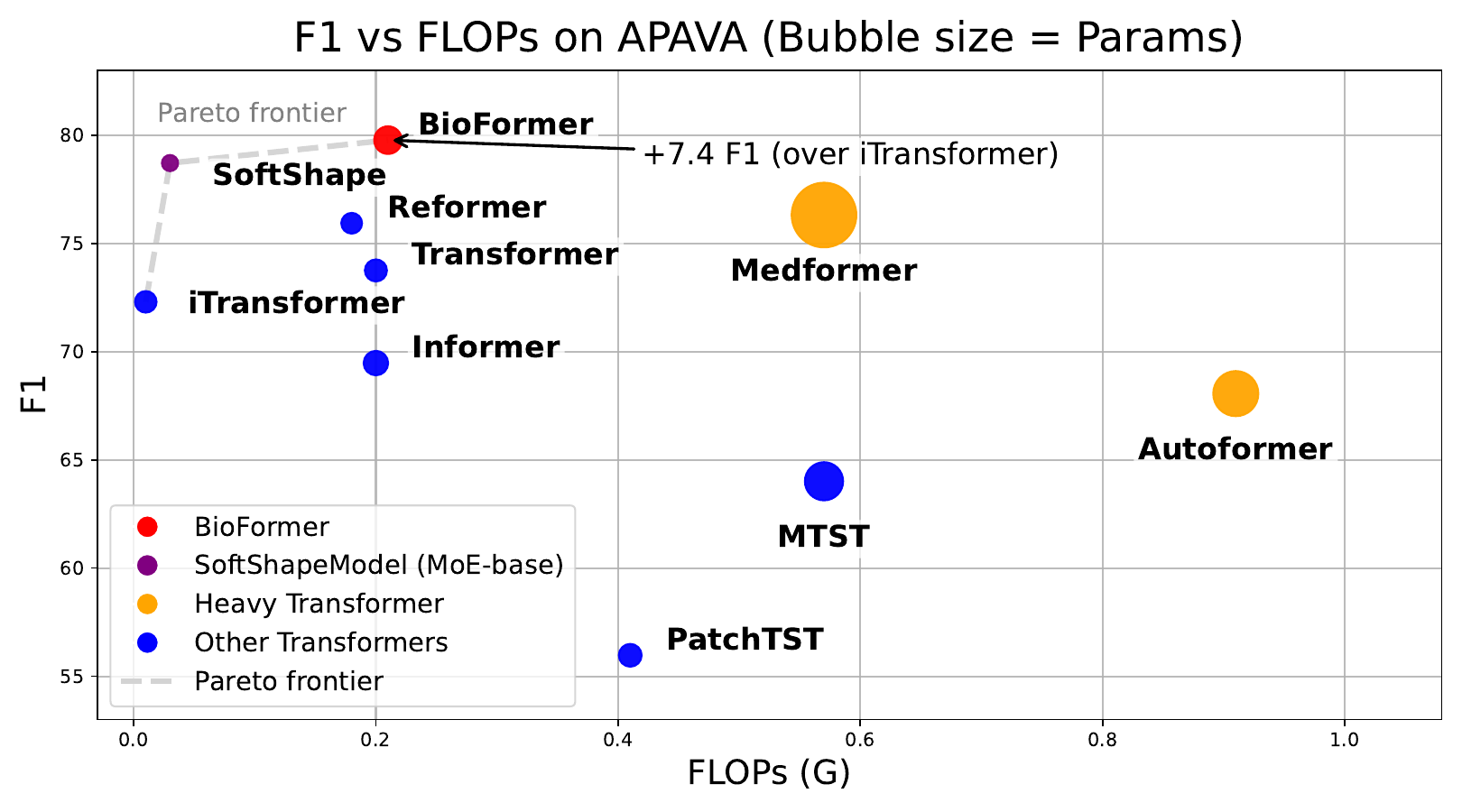}
    \caption{\textbf{The classification performance and computational complexity on the APAVA dataset.}
    All FLOPs are measured with an input tensor of shape $(1, 256, 16)$.}
    \label{fig:performance_and_computation}
\end{figure}

We report the parameter count and FLOPs of all methods on the APAVA dataset to analyze the performance–efficiency trade-off, with the complete results presented in Table ~\ref{tab:complexity_apava}.
As shown in Figure~\ref{fig:performance_and_computation}, BioFormer achieves the best F1-score while maintaining a low parameter count and moderate FLOPs, demonstrating strong efficiency.
Compared with the standard Transformer backbone, FBAM brings an absolute F1 improvement of about 6\%, without increasing the model scale.
Heavier Transformer-based models rely on substantially higher computational cost, but do not yield better performance.
SoftShape Model, a CNN-style mixture-of-experts architecture, is parameter-efficient but still underperforms BioFormer, highlighting the benefit of frequency-band alignment within Transformer models.

\begin{table}[htbp]
\centering
\scriptsize
\setlength{\tabcolsep}{4pt}
\caption{
Comparison of performance and model complexity on the APAVA dataset under the cross-subject setting.
F1-score (\%) is reported.
Model complexity is measured by the number of parameters (M) and FLOPs per sample (G).
All FLOPs are measured with an input tensor of shape $(1, 256, 16)$.
}
\label{tab:complexity_apava}
\begin{tabular}{lccc}
\toprule
\textbf{Model} & \textbf{F1 (\%)} & \textbf{Params (M)} & \textbf{FLOPs (G)} \\
\midrule
Transformer        & 73.75 & 0.87 & 0.20 \\
Reformer           & 75.93 & 0.77 & 0.18 \\
Informer           & 69.47 & 1.07 & 0.20 \\
Autoformer         & 68.06 & 3.61 & 0.91 \\
FEDformer          & 73.51 & 1.46 & 0.20 \\
Nonstationary      & 69.74 & 0.94 & 0.20 \\
PatchTST           & 55.97 & 0.93 & 0.41 \\
Crossformer        & 68.93 & 5.23 & 0.21 \\
iTransformer       & 72.30 & 0.83 & 0.01 \\
MTST               & 64.01 & 2.59 & 0.57 \\
Medformer          & 76.31 & 7.41 & 0.57 \\
SoftShape          & 78.71 & 0.47 & 0.03 \\
\textbf{BioFormer}  & \textbf{79.77} & \textbf{1.36} & \textbf{0.21} \\
\bottomrule
\end{tabular}
\end{table}

\subsection{Subject-dependent Evaluation}
\label{app:subject-dependent evaluation}

In addition to the cross-subject setting, we evaluate BioFormer under the \textbf{subject-dependent} protocol, where training and testing samples may originate from the same subjects.
\textit{Table~\ref{tab:transformer_comparison_si}} presents a comprehensive comparison with attention-based and shape-based time-series models across four datasets.
As expected, most methods achieve substantially higher performance in this setting due to the reduced distribution shift, and strong time-domain or shape-based baselines such as SoftShape and Nonformer perform competitively, particularly on APAVA and PTB.

Despite being designed for cross-subject robustness, BioFormer remains consistently competitive under subject-dependent evaluation, achieving top or near-top average ranks on APAVA, ADFTD, and PTB, and maintaining strong performance on the large-scale PTB-XL dataset.
These results indicate that BioFormer does not trade off performance in the easier subject-dependent regime, while offering clear advantages under subject-independent evaluation, demonstrating its robustness across different evaluation protocols.

\begin{table*}[htbp]
\centering
\scriptsize
\setlength{\tabcolsep}{3.0pt}
\caption{Comparison with attention-based and shape-based time-series models under the \textit{subject-dependent} setup.
Results are reported as mean $\pm$ std (\%). Ranking uses only the mean (std is not considered).
Best is highlighted with \bestcell{red} background; second best with \secondcell{blue} background.
Avg. Rank is computed over six metrics (Acc/Prec/Rec/F1/AUROC/AUPRC), lower is better.}
\label{tab:transformer_comparison_si}
\begin{tabular}{>{\centering\arraybackslash}m{2.2cm} l cccccc c}
\toprule
\textbf{Dataset} & \textbf{Model (Pub.)} & \textbf{Acc} & \textbf{Prec} & \textbf{Rec} & \textbf{F1} & \textbf{AUROC} & \textbf{AUPRC} & \textbf{Avg. Rank}$\downarrow$ \\
\midrule

\multirow{13}{=}{\centering\bfseries APAVA\\(2 classes)}
& Transformer [NeurIPS'17] & 98.51\std{0.19} & 98.32\std{0.22} & 98.58\std{0.20} & 98.45\std{0.20} & 99.92\std{0.03} & 99.91\std{0.03} & 5.83 \\
& Reformer [ICLR'20] & 98.64\std{0.32} & 98.69\std{0.29} & 98.48\std{0.40} & 98.58\std{0.33} & 99.91\std{0.03} & 99.91\std{0.03} & 5.17 \\
& Informer [AAAI'21] & 98.86\std{0.17} & 98.81\std{0.18} & 98.81\std{0.20} & 98.81\std{0.18} & 99.91\std{0.03} & 99.90\std{0.03} & 4.50 \\
& Autoformer [NeurIPS'21] & 98.53\std{0.56} & 98.43\std{0.67} & 98.50\std{0.51} & 98.46\std{0.58} & 99.73\std{0.18} & 99.71\std{0.20} & 6.67 \\
& FEDformer [ICML'22] & 92.85\std{0.60} & 93.18\std{0.50} & 91.88\std{0.77} & 92.42\std{0.66} & 98.28\std{0.13} & 98.25\std{0.12} & 12.50 \\
& Nonformer [NeurIPS'22] & 99.01\std{0.20} & 98.87\std{0.22} & 99.07\std{0.20} & 98.97\std{0.21} & 99.94\std{0.03} & 99.93\std{0.03} & 3.00 \\
& PatchTST [ICLR'23] & 97.55\std{0.26} & 97.61\std{0.28} & 97.28\std{0.29} & 97.44\std{0.27} & 99.67\std{0.04} & 99.63\std{0.05} & 10.50 \\
& Crossformer [ICLR'23] & 97.60\std{0.61} & 97.51\std{0.65} & 97.49\std{0.63} & 97.50\std{0.63} & 99.70\std{0.08} & 99.70\std{0.08} & 9.67 \\
& iTransformer [ICLR'24] & 93.20\std{1.06} & 92.76\std{1.08} & 93.18\std{1.18} & 92.93\std{1.11} & 98.11\std{0.42} & 97.94\std{0.52} & 12.50 \\
& MTST [PMLR, 2024] & 97.62\std{0.29} & 97.51\std{0.33} & 97.53\std{0.32} & 97.52\std{0.31} & 99.68\std{0.06} & 99.58\std{0.08} & 9.67 \\
& Medformer [NeurIPS'24] & 98.26\std{0.23} & 98.34\std{0.28} & 98.02\std{0.21} & 98.18\std{0.24} & 99.81\std{0.03} & 99.78\std{0.05} & 7.50 \\
& SoftShape [ICML'25] & \bestcell{99.92\std{0.00}} & \bestcell{99.92\std{0.02}} & \bestcell{99.91\std{0.02}} & \bestcell{99.91\std{0.00}} & \bestcell{100.00\std{0.00}} & \bestcell{100.00\std{0.00}} & \bestcell{1.00} \\
& BioFormer (Ours) & \secondcell{99.50\std{0.14}} & \secondcell{99.52\std{0.14}} & \secondcell{99.43\std{0.16}} & \secondcell{99.47\std{0.15}} & \secondcell{99.99\std{0.01}} & \secondcell{99.99\std{0.01}} & \secondcell{2.00} \\
\midrule

\multirow{13}{=}{\centering\bfseries ADFTD\\(3 classes)}
& Transformer [NeurIPS'17] & \bestcell{97.14\std{0.41}} & \bestcell{97.02\std{0.44}} & \bestcell{97.02\std{0.40}} & \bestcell{97.02\std{0.42}} & \bestcell{99.76\std{0.05}} & \bestcell{99.43\std{0.11}} & \bestcell{1.00} \\
& Reformer [ICLR'20] & 89.75\std{2.90} & 89.85\std{2.78} & 89.86\std{2.55} & 89.46\std{3.01} & 98.30\std{0.75} & 96.93\std{1.29} & 4.67 \\
& Informer [AAAI'21] & 91.18\std{0.89} & 90.85\std{1.02} & 91.06\std{0.80} & 90.91\std{0.91} & 98.29\std{0.27} & 96.73\std{0.49} & 4.33 \\
& Autoformer [NeurIPS'21] & 84.98\std{3.12} & 84.85\std{3.16} & 84.18\std{3.32} & 84.40\std{3.27} & 95.12\std{1.66} & 91.33\std{2.78} & 8.33 \\
& FEDformer [ICML'22] & 84.18\std{0.46} & 83.64\std{0.62} & 83.56\std{0.27} & 83.58\std{0.40} & 95.36\std{0.20} & 91.60\std{0.36} & 8.67 \\
& Nonformer [NeurIPS'22] & 95.89\std{0.31} & 95.67\std{0.37} & 95.77\std{0.32} & 95.72\std{0.33} & 99.57\std{0.06} & 99.05\std{0.14} & 3.00 \\
& PatchTST [ICLR'23] & 66.35\std{0.45} & 65.36\std{0.46} & 65.15\std{0.40} & 65.11\std{0.35} & 83.24\std{0.41} & 72.36\std{0.70} & 11.00 \\
& Crossformer [ICLR'23] & 88.98\std{1.41} & 88.69\std{1.47} & 88.47\std{1.45} & 88.56\std{1.47} & 97.44\std{0.59} & 95.36\std{1.03} & 6.17 \\
& iTransformer [ICLR'24] & 64.75\std{0.61} & 62.69\std{0.51} & 62.53\std{0.36} & 62.49\std{0.42} & 81.49\std{0.15} & 69.02\std{0.26} & 13.00 \\
& MTST [PMLR, 2024] & 65.07\std{0.55} & 64.27\std{0.86} & 63.28\std{0.26} & 63.49\std{0.35} & 81.51\std{0.44} & 69.77\std{0.64} & 12.00 \\
& Medformer [NeurIPS'24] & 69.58\std{0.67} & 72.92\std{1.07} & 67.04\std{0.81} & 67.64\std{0.79} & 88.04\std{0.52} & 79.79\std{0.81} & 10.00 \\
& SoftShape [ICML'25] & 88.82\std{0.29} & 88.21\std{0.32} & 88.27\std{0.37} & 88.22\std{0.32} & 97.54\std{0.10} & 95.13\std{0.22} & 6.83 \\
& BioFormer (Ours) & \secondcell{96.72\std{0.91}} & \secondcell{96.53\std{0.92}} & \secondcell{96.63\std{0.96}} & \secondcell{96.58\std{0.94}} & \secondcell{99.71\std{0.13}} & \secondcell{99.42\std{0.25}} & \secondcell{2.00} \\
\midrule

\multirow{13}{=}{\centering\bfseries PTB\\(2 classes)}
& Transformer [NeurIPS'17] & \bestcell{99.90\std{0.02}} & \secondcell{99.85\std{0.05}} & \bestcell{99.79\std{0.03}} & \bestcell{99.82\std{0.03}} & 99.88\std{0.00} & 99.89\std{0.01} & \secondcell{4.17} \\
& Reformer [ICLR'20] & \secondcell{99.87\std{0.01}} & \secondcell{99.85\std{0.05}} & 99.69\std{0.02} & 99.77\std{0.02} & 99.90\std{0.01} & 99.89\std{0.01} & 5.67 \\
& Informer [AAAI'21] & 99.86\std{0.02} & 99.82\std{0.05} & 99.68\std{0.02} & 99.75\std{0.03} & 99.86\std{0.01} & 99.88\std{0.01} & 8.00 \\
& Autoformer [NeurIPS'21] & 99.77\std{0.15} & 99.59\std{0.45} & 99.59\std{0.12} & 99.59\std{0.28} & 99.94\std{0.02} & \secondcell{99.92\std{0.01}} & 9.33 \\
& FEDformer [ICML'22] & 99.81\std{0.01} & 99.77\std{0.05} & 99.55\std{0.08} & 99.66\std{0.02} & 99.93\std{0.04} & 99.91\std{0.03} & 9.00 \\
& Nonformer [NeurIPS'22] & \bestcell{99.90\std{0.01}} & \bestcell{99.86\std{0.02}} & 99.76\std{0.02} & \secondcell{99.81\std{0.02}} & 99.88\std{0.02} & 99.89\std{0.01} & 4.67 \\
& PatchTST [ICLR'23] & 99.85\std{0.02} & 99.80\std{0.06} & 99.65\std{0.02} & 99.72\std{0.03} & 99.94\std{0.02} & \secondcell{99.92\std{0.03}} & 6.33 \\
& Crossformer [ICLR'23] & 99.84\std{0.03} & 99.78\std{0.08} & 99.62\std{0.05} & 99.70\std{0.05} & \secondcell{99.98\std{0.01}} & \bestcell{99.97\std{0.01}} & 6.67 \\
& iTransformer [ICLR'24] & 99.80\std{0.02} & 99.75\std{0.03} & 99.52\std{0.04} & 99.64\std{0.03} & \bestcell{99.99\std{0.01}} & \bestcell{99.97\std{0.01}} & 8.33 \\
& MTST [PMLR, 2024] & 99.83\std{0.01} & 99.72\std{0.02} & 99.65\std{0.04} & 99.69\std{0.03} & 99.90\std{0.02} & 99.90\std{0.03} & 9.17 \\
& Medformer [NeurIPS'24] & 99.86\std{0.02} & 99.81\std{0.05} & 99.68\std{0.02} & 99.75\std{0.04} & 99.92\std{0.01} & 99.91\std{0.01} & 6.00 \\
& SoftShape [ICML'25] & \bestcell{99.90\std{0.01}} & \bestcell{99.86\std{0.03}} & \secondcell{99.78\std{0.01}} & \bestcell{99.82\std{0.02}} & 99.86\std{0.03} & 99.88\std{0.02} & 4.83 \\
& BioFormer (Ours) & \bestcell{99.90\std{0.02}} & \secondcell{99.85\std{0.06}} & 99.77\std{0.03} & \secondcell{99.81\std{0.05}} & 99.90\std{0.02} & 99.91\std{0.02} & \bestcell{3.67} \\
\midrule

\multirow{13}{=}{\centering\bfseries PTB-XL\\(5 classes)}
& Transformer [NeurIPS'17] & \secondcell{88.38\std{0.34}} & \secondcell{85.51\std{0.61}} & \secondcell{84.90\std{0.47}} & \secondcell{85.18\std{0.45}} & \secondcell{97.85\std{0.09}} & \secondcell{91.71\std{0.34}} & \secondcell{2.00} \\
& Reformer [ICLR'20] & 77.43\std{0.90} & 70.71\std{1.41} & 68.18\std{1.56} & 69.24\std{1.43} & 93.07\std{0.50} & 75.35\std{1.47} & 6.00 \\
& Informer [AAAI'21] & 75.12\std{0.37} & 67.42\std{0.59} & 65.47\std{0.97} & 66.26\std{0.72} & 91.55\std{0.25} & 71.51\std{0.75} & 8.33 \\
& Autoformer [NeurIPS'21] & 64.42\std{3.56} & 55.15\std{4.01} & 54.55\std{3.68} & 53.97\std{4.15} & 85.33\std{2.18} & 57.95\std{4.17} & 12.50 \\
& FEDformer [ICML'22] & 59.90\std{9.91} & 56.74\std{5.92} & 53.78\std{7.01} & 52.46\std{8.17} & 85.37\std{4.24} & 58.61\std{7.23} & 12.50 \\
& Nonformer [NeurIPS'22] & \bestcell{89.10\std{0.55}} & \bestcell{86.77\std{0.73}} & \bestcell{85.30\std{0.93}} & \bestcell{86.00\std{0.78}} & \bestcell{98.02\std{0.15}} & \bestcell{92.31\std{0.60}} & \bestcell{1.00} \\
& PatchTST [ICLR'23] & 75.44\std{0.26} & 67.82\std{0.53} & 64.72\std{0.40} & 66.03\std{0.39} & 91.28\std{0.13} & 71.07\std{0.26} & 8.67 \\
& Crossformer [ICLR'23] & 75.98\std{0.15} & 68.46\std{0.28} & 65.58\std{0.14} & 66.80\std{0.10} & 92.13\std{0.07} & 72.38\std{0.26} & 7.00 \\
& iTransformer [ICLR'24] & 70.39\std{0.13} & 61.04\std{0.30} & 56.71\std{0.27} & 58.30\std{0.17} & 87.59\std{0.15} & 62.17\std{0.21} & 11.00 \\
& MTST [PMLR, 2024] & 73.97\std{0.22} & 65.90\std{0.44} & 62.79\std{0.68} & 64.05\std{0.41} & 90.30\std{0.22} & 68.84\std{0.26} & 10.00 \\
& Medformer [NeurIPS'24] & 81.60\std{0.16} & 77.20\std{0.28} & 73.59\std{0.19} & 75.17\std{0.16} & 95.42\std{0.08} & 82.29\std{0.15} & 4.00 \\
& SoftShape [ICML'25] & 78.69\std{0.19} & 72.87\std{0.51} & 68.52\std{0.62} & 70.03\std{0.42} & 93.92\std{0.10} & 77.18\std{0.26} & 5.00 \\
& BioFormer (Ours) & 85.73\std{2.26} & 82.35\std{3.18} & 80.57\std{2.99} & 81.38\std{3.07} & 96.97\std{0.80} & 88.29\std{2.98} & 3.00 \\
\bottomrule
\end{tabular}
\end{table*}
   
\end{document}